% This must be in the first 5 lines to tell arXiv to use pdfLaTeX, which is strongly recommended.
\pdfoutput=1
% In particular, the hyperref package requires pdfLaTeX in order to break URLs across lines.

\documentclass[11pt]{article}

% Change "review" to "final" to generate the final (sometimes called camera-ready) version.
% Change to "preprint" to generate a non-anonymous version with page numbers.
\usepackage[final]{acl}

% Standard package includes
\usepackage{times}
\usepackage{latexsym}

% For proper rendering and hyphenation of words containing Latin characters (including in bib files)
\usepackage[T1]{fontenc}
% For Vietnamese characters
% \usepackage[T5]{fontenc}
% See https://www.latex-project.org/help/documentation/encguide.pdf for other character sets

% This assumes your files are encoded as UTF8
\usepackage[utf8]{inputenc}

% This is not strictly necessary, and may be commented out,
% but it will improve the layout of the manuscript,
% and will typically save some space.
\usepackage{microtype}

% This is also not strictly necessary, and may be commented out.
% However, it will improve the aesthetics of text in
% the typewriter font.
\usepackage{inconsolata}

%Including images in your LaTeX document requires adding
%additional package(s)
\usepackage{graphicx}

% If the title and author information does not fit in the area allocated, uncomment the following
%
%\setlength\titlebox{<dim>}
%
% and set <dim> to something 5cm or larger.

\usepackage{amsmath}
\usepackage{amsthm}
\usepackage{booktabs}
\usepackage{multirow}
\usepackage{caption}
\usepackage{amssymb}
\newtheorem{theorem}{Theorem}
\DeclareMathOperator*{\argmax}{arg\,max}

%\title{Comprehensive Analysis of Minimum Bayes Risk Decoding\\through Bias and Diversity Decomposition: \\One Aspect of Inference Scaling Laws}
\title{Diversity Explains Inference Scaling Laws: \\Through a Case Study of Minimum Bayes Risk Decoding}

% Author information can be set in various styles:
% For several authors from the same institution:
% \author{Author 1 \and ... \and Author n \\
%         Address line \\ ... \\ Address line}
% if the names do not fit well on one line use
%         Author 1 \\ {\bf Author 2} \\ ... \\ {\bf Author n} \\
% For authors from different institutions:
% \author{Author 1 \\ Address line \\  ... \\ Address line
%         \And  ... \And
%         Author n \\ Address line \\ ... \\ Address line}
% To start a separate ``row'' of authors use \AND, as in
% \author{Author 1 \\ Address line \\  ... \\ Address line
%         \AND
%         Author 2 \\ Address line \\ ... \\ Address line \And
%         Author 3 \\ Address line \\ ... \\ Address line}

\author{Hidetaka Kamigaito$^{1}$\textnormal{,}~Hiroyuki Deguchi$^{1}$\textnormal{,}~Yusuke Sakai$^{1}$\textnormal{,}\\\textbf{Katsuhiko Hayashi}$^{2}$, \textbf{Taro Watanabe}$^1$\\$^1$Nara Institute of Science and Technology (NAIST), $^2$The University of Tokyo\\\texttt{\{kamigaito.h, deguchi.hiroyuki.db0, sakai.yusuke.sr9,taro\}@is.naist.jp}\\ \texttt{katsuhiko-hayashi@g.ecc.u-tokyo.ac.jp}}

%\author{
%  \textbf{First Author\textsuperscript{1}},
%  \textbf{Second Author\textsuperscript{1,2}},
%  \textbf{Third T. Author\textsuperscript{1}},
%  \textbf{Fourth Author\textsuperscript{1}},
%\\
%  \textbf{Fifth Author\textsuperscript{1,2}},
%  \textbf{Sixth Author\textsuperscript{1}},
%  \textbf{Seventh Author\textsuperscript{1}},
%  \textbf{Eighth Author \textsuperscript{1,2,3,4}},
%\\
%  \textbf{Ninth Author\textsuperscript{1}},
%  \textbf{Tenth Author\textsuperscript{1}},
%  \textbf{Eleventh E. Author\textsuperscript{1,2,3,4,5}},
%  \textbf{Twelfth Author\textsuperscript{1}},
%\\
%  \textbf{Thirteenth Author\textsuperscript{3}},
%  \textbf{Fourteenth F. Author\textsuperscript{2,4}},
%  \textbf{Fifteenth Author\textsuperscript{1}},
%  \textbf{Sixteenth Author\textsuperscript{1}},
%\\
%  \textbf{Seventeenth S. Author\textsuperscript{4,5}},
%  \textbf{Eighteenth Author\textsuperscript{3,4}},
%  \textbf{Nineteenth N. Author\textsuperscript{2,5}},
%  \textbf{Twentieth Author\textsuperscript{1}}
%\\
%\\
%  \textsuperscript{1}Affiliation 1,
%  \textsuperscript{2}Affiliation 2,
%  \textsuperscript{3}Affiliation 3,
%  \textsuperscript{4}Affiliation 4,
%  \textsuperscript{5}Affiliation 5
%\\
%  \small{
%    \textbf{Correspondence:} \href{mailto:email@domain}{email@domain}
%  }
%}

\begin{document}
\maketitle
\begin{abstract}
Inference methods play an important role in eliciting the performance of large language models (LLMs). Currently, LLMs use inference methods utilizing generated multiple samples, which can be derived from Minimum Bayes Risk (MBR) Decoding. Previous studies have conducted empirical analyses to clarify the improvements in generation performance achieved by MBR decoding and have reported various observations. However, the theoretical underpinnings of these findings remain uncertain. To address this, we offer a new theoretical interpretation of MBR decoding from the perspective of bias-diversity decomposition. In this interpretation, the error in the quality estimation of hypotheses by MBR decoding is decomposed into two main factors: bias, which considers the closeness between the utility function and human evaluation, and diversity, which represents the variability in the quality estimation of the utility function. The theoretical analysis reveals the difficulty of simultaneously improving bias and diversity, confirming the validity of enhancing MBR decoding performance by increasing diversity. Furthermore, we reveal that diversity can explain one aspect of inference scaling laws that describe performance improvement by increasing sample size. Moreover, experiments across multiple NLP tasks yielded results consistent with these theoretical characteristics. Our code is available at \url{https://github.com/naist-nlp/mbr-bias-diversity}.
\end{abstract}

\section{Introduction}

As demonstrated by the success of large language models (LLMs) \citep{NEURIPS2020_1457c0d6,openai2024gpt4technicalreport}, text generation is one of the most fundamental tasks in Natural Language Processing (NLP). In the generation, inference methods play an important role in eliciting model ability. In parallel to the advance of LLMs, inference methods utilizing multiple samples, such as self-consistency (SC) \cite{wang2023selfconsistency} and Complex SC \cite{fu2023complexitybased}, are introduced. 
\citet{bertsch-etal-2023-mbr} prove that these methods can be derived from Minimum Bayes Risk (MBR) decoding \citep{GOEL2000115}.

MBR decoding can elicit models' generation performance by using a utility function, essentially an automatic evaluation metric, along with pseudo-references generated by the model. MBR decoding was initially applied to speech recognition \citep{GOEL2000115} and later to statistical machine translation (SMT) \citep{kumar-byrne-2002-minimum,kumar-byrne-2004-minimum,duan-etal-2011-improving}. Following these successes, MBR decoding has been expanded to various text generation tasks, including neural machine translation (NMT) \citep{stahlberg-etal-2017-neural}, text summarization \citep{bertsch-etal-2023-mbr}, and image captioning \citep{borgeaud-emerson-2020-leveraging}.

Since MBR decoding has become an important inference technique in text generation, various empirical studies have explored its characteristics. \citet{muller-sennrich-2021-understanding,10.1162/tacl_a_00491,fernandes-etal-2022-quality,amrhein-sennrich-2022-identifying} highlight the importance of using high-quality evaluation metrics that are robust and correlate well with human evaluations as utility functions. \citet{jinnai-etal-2024-generating,heineman2024improvingminimumbayesrisk} emphasize the importance of high-quality pseudo-references that closely resemble human-created ones while stressing the significance of pseudo-reference diversity.
Although these empirical findings cover various aspects in detail, a unified interpretation remains challenging due to the lack of theoretical frameworks explaining the relationships behind them.

To address this gap, we provide theoretical interpretations of MBR decoding through bias-diversity decomposition \citep{NIPS1994_b8c37e33,10.5555/3648699.3649058}. Our theoretical interpretation focuses on errors in the estimated quality of hypotheses in MBR decoding. These errors are decomposed into two critical factors: \emph{bias} and \emph{diversity}. The bias term represents the closeness between the estimated quality produced by utility functions and human evaluations. The diversity term reflects the variance in the estimated quality across different utility functions.
Based on this interpretation, we theoretically demonstrate the difficulty of improving both the bias and diversity terms simultaneously and highlight the effectiveness of increasing diversity in MBR decoding, verifying the correspondence with empirically induced results from previous work.

Furthermore, by focusing on information-theoretic diversity \cite{brown2009,zhou2010}, we broaden the scope of our analysis beyond MBR decoding and reveal that diversity is also a key to explaining inference scaling laws \cite{wu2024scaling,brown2024largelanguagemonkeysscaling,chen2025simpleprovablescalinglaws,snell2025scaling} which describe the performance improvement by increasing sample size.

Our empirical analysis on machine translation, text summarization, and image captioning using five different sampling methods shows consistent results with our theoretical analysis.

\section{Minimum Bayes Risk Decoding}

MBR decoding \cite{eikema-aziz-2020-map,eikema-aziz-2022-sampling} estimates the quality of a hypothesis $h$ in the candidate set $\mathcal{H}$ by using a set $\mathcal{Y}$ of pseudo-references $y$ sampled from the model's predicted probability $P_{\pi}(y|x)$ with its model weight $\pi$ for the input sequence $x$. By treating the evaluation metric as a utility function $f_{\mathbf{\theta}}(h, y)$ that measures the similarity between $h$ and $y$, MBR decoding selects the best hypothesis $\hat{h}_{mbr}$ in \(\mathcal{H}\) as:
\begin{align}
   \!\!\hat{h}_{mbr} \!=& \underset{h \in \mathcal{H}}{\operatorname{argmax}} \frac{1}{|\mathcal{Y}|}\!\sum_{y \in \mathcal{Y}} f_{\mathbf{\theta}}(h,y), y \!\sim\! P_{\pi}(y|x).\label{eq:mbr}\\
   \approx & \underset{h \in \mathcal{H}}{\operatorname{argmax}}\sum_{y}f_{\mathbf{\theta}}(h,y)P_{\pi}(y|x)\label{eq:mbr:exp}
\end{align}
Here, $\mathbf{\theta}$ represents the parameters of the evaluation metric used in the utility function $f_{\mathbf{\theta}}(h, y)$.

Alternatively, instead of using the utility function $f_{\mathbf{\theta}}(h, y)$, one can assume the quality estimated by humans \cite{naskar-etal-2023-quality,suzgun-etal-2023-follow,jinnai-etal-2024-generating,ohashi-etal-2024-true} as $\hat{f}_{\hat{\mathbf{\theta}}}(h)$. Under this assumption, the ideal decoding as estimated by humans is given by:
\begin{equation}
   \hat{h}_{human} = \underset{h \in \mathcal{H}}{\operatorname{argmax}} \hat{f}_{\hat{\mathbf{\theta}}}(h).
   \label{eq:human}
\end{equation}
In this paper, we focus on analyzing the differences between the quality estimated by MBR decoding and that estimated by humans to better understand MBR decoding.

\section{Theoretical Analysis based on Bias-Diversity Decomposition}
\label{sec:theoretical}

\subsection{Evaluation Discrepancy}

To measure the discrepancy between the human estimated quality, $\hat{f}_{\hat{\mathbf{\theta}}}(h)$ and the MBR decoding estimated quality, $\frac{1}{|\mathcal{Y}|}\sum_{y \in \mathcal{Y}}f_{\mathbf{\theta}}(h,y)$, we define a $|\mathcal{H}|$-dimensional vector $\textbf{u}^{j}$ that represents estimated quality for each hypothesis based on the $j$-th pseudo-reference and also define $\bar{\textbf{u}}$, the average vector of all $\textbf{u}^{j}$ as follows:
\begin{equation}
   \!\!\textbf{u}^{j} \!=\! \begin{bmatrix}u^{j}_{1}\\\cdots\\u^{j}_{|\mathcal{H}|}\end{bmatrix}, u^{j}_{i} \!=\! f_{\mathbf{\theta}}(h_i,y_j), \bar{\textbf{u}} \!=\! \frac{1}{|\mathcal{Y}|}\sum_{j=1}^{|\mathcal{Y}|}\mathbf{u}^{j}.
   \label{eq:vec:mbr}
\end{equation}
Similarly, we can define a $|\mathcal{H}|$-dimensional vector, $\hat{\textbf{u}}$ that represents the human estimated quality for each hypothesis as follows:
\begin{equation}
   \hat{\textbf{u}} = \begin{bmatrix}\hat{u}_{1}\\\cdots\\\hat{u}_{|\mathcal{H}|}\end{bmatrix},\quad \hat{u}_{i} = \hat{f}_{\hat{\mathbf{\theta}}}(h_i).
   \label{eq:vec:human}
\end{equation}
Here, by using Eqs.~(\ref{eq:vec:mbr}) and (\ref{eq:vec:human}), we can reformulate MBR decoding in Eq.~(\ref{eq:mbr}) and the ideal decoding in Eq.~(\ref{eq:human}) as follows: 
\begin{align}
   (\ref{eq:mbr}) &\equiv \hat{h}_{mbr} = \argmax_{h_{i}}\bar{u}_{i},\nonumber \\
   (\ref{eq:human}) &\equiv \hat{h}_{human} = \argmax_{h_{i}}\hat{u}_{i}.
   \label{eq:reform:dec}
\end{align}
Therefore, based on Eq.~(\ref{eq:reform:dec}), we can investigate the discrepancy between the estimated quality by MBR decoding and human through the comparison of $\bar{\textbf{u}}$ and $\hat{\textbf{u}}$.
In our work, to estimate the discrepancy, we consider the prediction error of $\bar{\textbf{u}}$ to $\hat{\textbf{u}}$ by using Mean Squared Error (MSE) as follows:
\begin{align}
    MSE(\hat{\textbf{u}}, \bar{\textbf{u}}) &= \frac{1}{|\mathcal{H}|}\sum_{i=1}^{|\mathcal{H}|}(\hat{u}_{i} - \bar{u}_{i})^2 \\
    &= \mathbb{E}_{i \in \mathcal{H}}[(\hat{u}_{i} - \bar{u}_{i})^2].\label{eq:mse}
\end{align}

\subsection{Bias-diversity Decomposition}
\label{subsec:decomposition}

Our goal is to reveal the characteristics of MBR decoding through theoretical analysis. To achieve this, we focus on the bias and diversity underlying Eq.~(\ref{eq:mse}). Based on this approach, we can induce the following decomposition:

\begin{theorem}[\textbf{Bias and Diversity Decomposition}]
The quality estimation error for MBR decoding, $MSE(\hat{\textbf{u}}, \bar{\textbf{u}})$, can be decomposed into bias and diversity (ambiguity) terms \cite{NIPS1994_b8c37e33} as follows:
\begin{align}
    &MSE(\hat{\textbf{u}}, \bar{\textbf{u}}) \nonumber\\
    = &\underbrace{\mathbb{E}_{i \in \mathcal{H}}\mathbb{E}_{j \in \mathcal{Y}}[(\hat{u}_{i} - f_{\mathbf{\theta}}(h_i,y_j))^2]}_{\text{Bias}} \nonumber\\
    &- \underbrace{\mathbb{E}_{i \in \mathcal{H}}\mathbb{E}_{j \in \mathcal{Y}}[(\bar{u}_{i} - f_{\mathbf{\theta}}(h_i,y_j))^2]}_{\text{Diversity}}.
    \label{eq:bias-variance}
\end{align}
\label{theorem:bias-variance}
(See Appendix \ref{appendix:proof-bias-variance} for the proof.)
\end{theorem}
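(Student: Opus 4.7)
The plan is to invoke the classical ambiguity decomposition of \citet{NIPS1994_b8c37e33} pointwise in the hypothesis index $i$, and then average over $\mathcal{H}$. For each fixed $h_i$, I would treat $\hat{u}_i$ as a constant target, the values $\{f_{\mathbf{\theta}}(h_i, y_j)\}_{j=1}^{|\mathcal{Y}|}$ as the ``ensemble members'', and $\bar{u}_i$ as their empirical mean over $j$, exactly as defined in Eq.~(\ref{eq:vec:mbr}). This reduces the claim to a purely algebraic identity about a scalar target and a scalar mean.

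The key algebraic move is the standard ``add and subtract $\bar{u}_i$'' trick inside the square:
\[
(\hat{u}_i - f_{\mathbf{\theta}}(h_i, y_j))^2 = \bigl((\hat{u}_i - \bar{u}_i) + (\bar{u}_i - f_{\mathbf{\theta}}(h_i, y_j))\bigr)^2.
\]
Expanding and taking $\mathbb{E}_{j \in \mathcal{Y}}$ would yield three pieces: $(\hat{u}_i - \bar{u}_i)^2$, a cross term, and $\mathbb{E}_{j}[(\bar{u}_i - f_{\mathbf{\theta}}(h_i, y_j))^2]$. The cross term vanishes because $\mathbb{E}_{j}[\bar{u}_i - f_{\mathbf{\theta}}(h_i, y_j)] = \bar{u}_i - \bar{u}_i = 0$ by the very definition of $\bar{u}_i$ as the $j$-average. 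Rearranging then gives the pointwise identity
\[
(\hat{u}_i - \bar{u}_i)^2 = \mathbb{E}_{j}[(\hat{u}_i - f_{\mathbf{\theta}}(h_i, y_j))^2] - \mathbb{E}_{j}[(\bar{u}_i - f_{\mathbf{\theta}}(h_i, y_j))^2].
\]

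To finish, I would apply $\mathbb{E}_{i \in \mathcal{H}}$ to both sides. The left-hand side is precisely $MSE(\hat{\textbf{u}}, \bar{\textbf{u}})$ in Eq.~(\ref{eq:mse}), while on the right the two iterated uniform averages $\mathbb{E}_{i \in \mathcal{H}} \mathbb{E}_{j \in \mathcal{Y}}$ commute by Fubini for finite sums, reproducing the bias and diversity terms stated in the theorem.

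There is no substantive obstacle; the result is a routine identity. The only care required is bookkeeping, namely making sure that $\bar{u}_i$ is treated \emph{exactly} as the $j$-average so that the cross term collapses, and that the outer $i$-average is uniform so the decomposition passes through expectation unchanged. The qualitatively interesting point, which the proof makes transparent and which the rest of the paper leverages, is that diversity enters with a \emph{minus} sign: enlarging the spread of $f_{\mathbf{\theta}}(h_i, y_j)$ around $\bar{u}_i$ mechanically reduces $MSE(\hat{\textbf{u}}, \bar{\textbf{u}})$, provided the bias term is not inflated in tandem.
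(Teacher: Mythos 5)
Your proposal is correct and is essentially the paper's own argument: both establish the same pointwise identity $(\hat{u}_i - \bar{u}_i)^2 = \mathbb{E}_{j}[(\hat{u}_i - u_i^j)^2] - \mathbb{E}_{j}[(\bar{u}_i - u_i^j)^2]$ and then average over $i \in \mathcal{H}$. The only difference is presentational — the paper reaches that identity by expanding all squares into monomials and regrouping, whereas you organize the same algebra via the add-and-subtract-$\bar{u}_i$ step with the vanishing cross term, which is cleaner but not a different proof.
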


In Eq.~(\ref{eq:bias-variance}), two terms represent bias and diversity. Unlike the well-known bias-variance decomposition~\citep{geman1992neural} that targets a single estimator, which is $\mathbf{u}$ in our case, the second term is negative, which is why it is referred to as diversity rather than variance \citep{10.5555/3648699.3649058}. \emph{Bias} indicates how closely the utility function's estimated quality for a hypothesis matches human estimation. \emph{Diversity} reflects how different the utility function's estimated qualities are from each other. This decomposition emphasizes the importance of increasing \emph{Diversity} while reducing \emph{Bias} to improve the quality estimation error, $MSE(\hat{\textbf{u}}, \bar{\textbf{u}})$.

Even though Theorem \ref{theorem:bias-variance} indicates the potential of \emph{Diversity} to improve the performance in MBR decoding, there are some limitations.
\begin{theorem}[\textbf{Limitation of Diversity}]
The decomposition of the quality estimation error for MBR decoding (Eq.~(\ref{eq:bias-variance})) holds the following relation:
\begin{equation}
    Diversity \xrightarrow[Bias\to0]{}0
\end{equation}
\label{theorem:bias-variance-tradeoff-1}
(See Appendix \ref{appendix:proof-bias-variance-tradeoff-1} for the proof.)
\end{theorem}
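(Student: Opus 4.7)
The plan is to exploit the decomposition already established in Theorem~\ref{theorem:bias-variance} together with the non-negativity of the quantities involved. Rearranging Eq.~(\ref{eq:bias-variance}) gives
\[
\text{Diversity} \;=\; \text{Bias} \;-\; MSE(\hat{\mathbf{u}}, \bar{\mathbf{u}}).
\]
Since $MSE(\hat{\mathbf{u}}, \bar{\mathbf{u}})$ is an average of squared residuals, it is non-negative, so the first step is to read off the upper bound $\text{Diversity} \leq \text{Bias}$.

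Next, I would observe that $\text{Diversity}$ is itself non-negative, being the expectation of the squared terms $(\bar{u}_i - f_{\mathbf{\theta}}(h_i,y_j))^2$. Combining the two bounds yields the sandwich
\[
0 \;\leq\; \text{Diversity} \;\leq\; \text{Bias},
\]
from which the squeeze theorem immediately gives $\text{Diversity} \to 0$ whenever $\text{Bias} \to 0$, proving the claim.

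I do not anticipate a real obstacle here; the statement is essentially an immediate corollary of Theorem~\ref{theorem:bias-variance} and the fact that a mean of squares cannot be negative. If one preferred a self-contained argument that does not invoke Theorem~\ref{theorem:bias-variance}, an alternative is to decompose $\bar{u}_i - f_{\mathbf{\theta}}(h_i,y_j) = (\hat{u}_i - f_{\mathbf{\theta}}(h_i,y_j)) - (\hat{u}_i - \bar{u}_i)$, apply the elementary inequality $(a-b)^2 \leq 2a^2 + 2b^2$, and control the correction term via Jensen's inequality using $\hat{u}_i - \bar{u}_i = \mathbb{E}_{j \in \mathcal{Y}}[\hat{u}_i - f_{\mathbf{\theta}}(h_i,y_j)]$. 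This route yields the slightly weaker bound $\text{Diversity} \leq 4 \cdot \text{Bias}$, but delivers the same limiting conclusion. The cleaner sandwich argument is the one I would present.
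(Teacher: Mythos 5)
Your proof is correct, and it takes a genuinely different route from the paper's. The paper argues pointwise: when \emph{Bias} is exactly zero, every term $(\hat{u}_{i} - f_{\mathbf{\theta}}(h_i,y_j))^2$ must vanish, so $f_{\mathbf{\theta}}(h_i,y_j)=\hat{u}_{i}$ for all $j$, hence $\bar{u}_{i}=\hat{u}_{i}$ and every term of \emph{Diversity} vanishes. Your argument instead rearranges Theorem~\ref{theorem:bias-variance} into $\text{Diversity}=\text{Bias}-MSE(\hat{\textbf{u}},\bar{\textbf{u}})$ and uses the non-negativity of both $MSE$ and \emph{Diversity} to obtain the sandwich $0\leq \text{Diversity}\leq \text{Bias}$. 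What your route buys is twofold: it yields a quantitative inequality ($\text{Diversity}\leq\text{Bias}$ always, not just in the limit), and it handles the actual limit statement $\text{Bias}\to 0$ directly via the squeeze theorem, whereas the paper's argument strictly speaking only covers the case $\text{Bias}=0$ and leaves the passage to the limit implicit (it could be repaired by essentially your sandwich, or by a continuity argument). The paper's route, in exchange, is self-contained at the level of individual terms and makes the mechanism transparent — zero bias forces all utility scores to coincide with the human score, which kills all variability. Your fallback argument via $(a-b)^2\leq 2a^2+2b^2$ and Jensen is also sound and gives $\text{Diversity}\leq 4\cdot\text{Bias}$, but as you note the direct sandwich is both sharper and cleaner.
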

According to Theorem \ref{theorem:bias-variance-tradeoff-1}, we cannot expect performance gains from \emph{Diversity} when \emph{Bias} is close to zero. Note that this relationship does not guarantee that \emph{Diversity} becomes large when \emph{Bias} is far from zero. As a broadly generalized relationship of this part, the following theorem holds.
\begin{theorem}[\textbf{Bias and Diversity Trade-off}] Bias and Diversity in the decomposition of the quality estimation error for MBR decoding (Eq.~(\ref{eq:bias-variance})) depend on each other based on the following reformulation by \citet{JMLR:v6:brown05a}:
\begin{align}
    Bias &\!=\! \overline{bias}^2 \!+\! \Omega \nonumber\\
    \!\!\!\!Diversity &\!=\! \Omega \!-\! \left[\frac{1}{|\mathcal{Y}|}\overline{var}\!+\!\left(\!1\!-\!\frac{1}{|\mathcal{Y}|}\right)\overline{cov}\right]\!,
    \label{eq:bias_diversity_dependence}
\end{align}
where $\overline{bias}\!\!\!\!=\!\!\!\!\mathbb{E}_{i \in \mathcal{Y}}[\mathbb{E}_{j \in \mathcal{H}}[u_j^i] \!\!\!-\!\!\! \mathbb{E}_{j \in \mathcal{H}}[\hat{u}_j]], \overline{var}=$ $\mathbb{E}_{i \in \mathcal{Y}}[\mathbb{E}_{j \in \mathcal{H}}[(u_j^i - \mathbb{E}_{k \in \mathcal{H}}[u_k^i])^2]]$, $\overline{cov}=\frac{1}{|\mathcal{Y}|(|\mathcal{Y}|-1)}$ $\sum_{i=1}^{|\mathcal{Y}|}\!\!\sum_{i \not = j}^{|\mathcal{Y}|}\mathbb{E}_{k \in \mathcal{H}}[(u_k^i \!\!-\!\! \mathbb{E}_{l \in \mathcal{H}}[u_l^i])(u_k^i \!\!-\!\! \mathbb{E}_{l \in \mathcal{H}}[u_l^j])]$, and $\Omega = \overline{var} + \mathbb{E}_{i \in \mathcal{Y}}[(\mathbb{E}_{k \in \mathcal{H}}[u_k^i] - \mathbb{E}_{k \in \mathcal{H}}[\overline{u}_k])^2]$.
\label{theorem:bias-variance-tradeoff-2}
(See Appendix \ref{appendix:proof-bias-diversity-dependence} for the proof.)
\end{theorem}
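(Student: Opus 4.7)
My plan is to port the Brown et al.\ (2005) bias/variance/covariance reformulation of ensemble mean-squared error to the MBR setting, where the pseudo-reference index $i \in \mathcal{Y}$ plays the role of the ensemble-member index and the hypothesis index $k \in \mathcal{H}$ plays the role of the ``input'' over which expectations are taken. Writing $\mu_i := \mathbb{E}_{k \in \mathcal{H}}[u_k^i]$ and $\bar{\mu} := \mathbb{E}_{i \in \mathcal{Y}}[\mu_i]$, Fubini gives $\mathbb{E}_k[\bar{u}_k] = \bar{\mu}$, so that $\overline{bias} = \bar{\mu} - \mathbb{E}_k[\hat{u}_k]$ and $\Omega = \overline{var} + \mathrm{Var}_i[\mu_i]$.

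First I would derive the Diversity identity directly. The key step is the add-and-subtract decomposition
\begin{equation*}
\bar{u}_k - u_k^i = (\bar{u}_k - \bar{\mu}) + (\bar{\mu} - \mu_i) + (\mu_i - u_k^i),
\end{equation*}
squared and integrated under $\mathbb{E}_k \mathbb{E}_i$. The three squared terms yield $\mathrm{Var}_k[\bar{u}_k]$, $\mathrm{Var}_i[\mu_i]$, and $\overline{var}$. Two of the three pairwise cross terms vanish because one factor is mean-zero under either $\mathbb{E}_k$ or $\mathbb{E}_i$; the remaining cross term $(\bar{u}_k - \bar{\mu})(\mu_i - u_k^i)$, upon taking $\mathbb{E}_i$ first, reduces to $-(\bar{u}_k - \bar{\mu})^2$ since $\mathbb{E}_i[\mu_i - u_k^i] = \bar{\mu} - \bar{u}_k$, contributing $-2\mathrm{Var}_k[\bar{u}_k]$. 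Collecting gives $\mathrm{Diversity} = \Omega - \mathrm{Var}_k[\bar{u}_k]$, and expanding $\mathrm{Var}_k[\bar{u}_k]$ into its diagonal and off-diagonal pieces over $\mathcal{Y}\times\mathcal{Y}$ yields the variance-of-an-average identity $\mathrm{Var}_k[\bar{u}_k] = \tfrac{1}{|\mathcal{Y}|}\overline{var} + (1 - \tfrac{1}{|\mathcal{Y}|})\overline{cov}$, completing the Diversity formula.

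For Bias I would use the parallel four-term decomposition
\begin{equation*}
\hat{u}_k - u_k^i = (\hat{u}_k - \mathbb{E}_k[\hat{u}_k]) + (\mathbb{E}_k[\hat{u}_k] - \bar{\mu}) + (\bar{\mu} - \mu_i) + (\mu_i - u_k^i),
\end{equation*}
whose middle constant piece squared is exactly $\overline{bias}^2$ and whose last two squared pieces reproduce $\Omega$. Alternatively and more cleanly, I would invoke Theorem~\ref{theorem:bias-variance} to write $\mathrm{Bias} = MSE(\hat{\mathbf{u}}, \bar{\mathbf{u}}) + \mathrm{Diversity}$ and then expand $MSE(\hat{\mathbf{u}}, \bar{\mathbf{u}})$ about the center $\bar{\mu}$ following Brown et al.\ as $\overline{bias}^2 + \mathrm{Var}_k[\bar{u}_k]$; the two $\mathrm{Var}_k[\bar{u}_k]$ contributions then cancel and leave $\mathrm{Bias} = \overline{bias}^2 + \Omega$.

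The main obstacle is the combinatorial bookkeeping of cross terms: with four additive pieces in the Bias expansion there are six pairwise cross terms, and for each one must correctly identify which iterated expectation in $k$ or $i$ annihilates it, while carefully handling the cross term that couples $\hat{u}_k$ to $u_k^i$. A secondary technical step is the diagonal/off-diagonal split in $\mathrm{Var}_k[\bar{u}_k] = \mathrm{Var}_k[|\mathcal{Y}|^{-1}\sum_i u_k^i]$, which is what produces the $(1 - 1/|\mathcal{Y}|)$ coefficient on $\overline{cov}$ in the final formula.
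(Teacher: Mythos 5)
Your treatment of the \emph{Diversity} identity is correct and complete in outline: the three-way split $\bar{u}_k-u_k^i=(\bar{u}_k-\bar{\mu})+(\bar{\mu}-\mu_i)+(\mu_i-u_k^i)$, the cancellation pattern of the cross terms, the resulting $\mathrm{Diversity}=\Omega-\mathrm{Var}_{k\in\mathcal{H}}[\bar{u}_k]$, and the diagonal/off-diagonal split of $\mathrm{Var}_k[\bar{u}_k]$ into $\tfrac{1}{|\mathcal{Y}|}\overline{var}+(1-\tfrac{1}{|\mathcal{Y}|})\overline{cov}$ all check out (note in passing that the paper's printed $\overline{cov}$ has $u_k^i-\mathbb{E}_l[u_l^j]$ where a covariance needs $u_k^j-\mathbb{E}_l[u_l^j]$; you are implicitly using the corrected form). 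This is also genuinely more than the paper does: its ``proof'' is only a pointer to \citet{JMLR:v6:brown05a} with the remark that the decomposition applies whenever Eq.~(\ref{eq:bias-variance}) holds.

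The \emph{Bias} half of your argument, however, does not close, and the obstacle you flag (``the cross term that couples $\hat{u}_k$ to $u_k^i$'') is exactly where it breaks. In your four-term expansion the first squared piece contributes $\mathrm{Var}_{k\in\mathcal{H}}[\hat{u}_k]$, and the cross term $(\hat{u}_k-\mathbb{E}_k[\hat{u}_k])(\mu_i-u_k^i)$, after taking $\mathbb{E}_{i}$, contributes $-2\,\mathrm{Cov}_{k\in\mathcal{H}}(\hat{u}_k,\bar{u}_k)$; neither is annihilated by any iterated expectation, so what you actually obtain is $\mathrm{Bias}=\overline{bias}^2+\Omega+\mathrm{Var}_k[\hat{u}_k]-2\,\mathrm{Cov}_k(\hat{u}_k,\bar{u}_k)$. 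Your alternative route has the same hole: $MSE(\hat{\textbf{u}},\bar{\textbf{u}})=\overline{bias}^2+\mathrm{Var}_k[\bar{u}_k]$ only if $\hat{u}_k$ is constant in $k$. The root cause is that in \citet{JMLR:v6:brown05a} the target $d$ is deterministic with respect to the expectation being taken (the expectation is over training sets for a fixed input/target), whereas in this port the role of that expectation is played by $\mathbb{E}_{k\in\mathcal{H}}$ and the ``target'' $\hat{u}_k$ varies with $k$. The extra terms are not zero in general: with $|\mathcal{H}|=2$, every pseudo-reference scoring the two hypotheses as $(0,1)$ and human scores $(1,0)$, one gets $\mathrm{Bias}=1$ but $\overline{bias}^2+\Omega=1/4$. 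So to complete the proof you must either carry the correction $\mathrm{Var}_k[\hat{u}_k]-2\,\mathrm{Cov}_k(\hat{u}_k,\bar{u}_k)$ through (which changes the stated identity), or add the assumption that makes it vanish (e.g.\ treating the target as the scalar $\mathbb{E}_{j\in\mathcal{H}}[\hat{u}_j]$, as the definition of $\overline{bias}$ tacitly does); simply invoking Brown et al.\ does not discharge this step.
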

In Eq.~(\ref{eq:bias_diversity_dependence}), \emph{Bias} and \emph{Diversity} share $\Omega$. Thus, Theorem \ref{theorem:bias-variance-tradeoff-2} demonstrates the general trade-off relationship between \emph{Bias} and \emph{Diversity}, underscoring the difficulty of maximizing \emph{Diversity} without affecting \emph{Bias}.

\subsection{Diversity behind Inference Scaling Laws}

In some applications, MBR decoding is part of a reranking system that employs a scoring function other than $f_{\mathbf{\theta}}(h,y)$. In addition, inference methods are not restricted to the one derived from MBR decoding. To broaden our scope of analysis beyond MBR decoding and analyze the inference scaling laws, we rely on the following theorem.
\begin{theorem}[\textbf{Generalized Diveristy}]
Letting $\hat{\mathcal{H}}$ be a distribution for the human-selected candidate, $\mathcal{X}_{1:|\mathcal{Y}|}$ be representations corresponding to all elements in $\mathcal{Y}$, and $g(\mathcal{X}_{1:|\mathcal{Y}|})$ be a function predicting the best candidate. Its prediction error $p(\hat{\mathcal{H}} \not = g(\mathcal{X}_{1:|\mathcal{Y}|}))$ satisfies the following inequality \cite{brown2009,zhou2010}:
\begin{align}
   &\frac{H(\hat{\mathcal{H}})-I(\mathcal{X}_{1:|\mathcal{Y}|};\hat{\mathcal{H}})-1}{\log{|\hat{\mathcal{H}}|}} \leq p(\hat{\mathcal{H}} \not = g(\mathcal{X}_{1:|\mathcal{Y}|})), \nonumber\\
   &p(\hat{\mathcal{H}} \not = g(\mathcal{X}_{1:|\mathcal{Y}|})) \leq \frac{H(\hat{\mathcal{H}}) \!-\!I(\mathcal{X}_{1:|\mathcal{Y}|};\hat{\mathcal{H}})}{2},\label{eq:bounds}
\end{align}
where $H$ is entropy and $I$ is mutual information. To minimize the error, we should maximize $I(\mathcal{X}_{1:|\mathcal{Y}|};\hat{\mathcal{H}})$,  decomposed to \cite{zhou2010}:
\begin{equation}
\!\!\!\!\!\!\underbrace{\sum_{i=1}^{|\mathcal{Y}|}I(\mathcal{X}_{i};\hat{\mathcal{H}})}_{Relevancy}+\!\!\!\!\!\underbrace{\underbrace{\mathcal{I}(\mathcal{X}_{1:|\mathcal{Y}|}|\hat{\mathcal{H}})}_{Conditional\ Redundancy} \!\!\!-\!\!\underbrace{\mathcal{I}(\mathcal{X}_{1:|\mathcal{Y}|})}_{Redundancy}}_{Information-Theoretic\ Diversity},
\label{eq:it_diversity}
\end{equation}
where $\mathcal{I}(\mathcal{X}_{1:|\mathcal{Y}|})$ and $\mathcal{I}(\mathcal{X}_{1:|\mathcal{Y}|}|\hat{\mathcal{H}})$ are total correlation and conditional total correlation, respectively.
\label{theorem:bias-variance-general}
(See Appendix \ref{appendix:proof-it-diversity} for the proof.)
\end{theorem}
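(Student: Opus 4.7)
The plan is to establish Theorem~\ref{theorem:bias-variance-general} in two stages: first the two-sided bound on the error probability in Eq.~(\ref{eq:bounds}), then the additive decomposition of $I(\mathcal{X}_{1:|\mathcal{Y}|}; \hat{\mathcal{H}})$ in Eq.~(\ref{eq:it_diversity}). Both parts are essentially assemblies of standard information-theoretic identities, so the work is in instantiating them for the right random variables and verifying that the algebra matches the stated forms.

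For the bounds, I would begin from the loose form of Fano's inequality applied to the predictor $g$. Writing $p_{e} = p(\hat{\mathcal{H}} \neq g(\mathcal{X}_{1:|\mathcal{Y}|}))$, Fano gives $H(\hat{\mathcal{H}} \mid \mathcal{X}_{1:|\mathcal{Y}|}) \leq 1 + p_{e}\log|\hat{\mathcal{H}}|$. Substituting the identity $H(\hat{\mathcal{H}} \mid \mathcal{X}_{1:|\mathcal{Y}|}) = H(\hat{\mathcal{H}}) - I(\mathcal{X}_{1:|\mathcal{Y}|}; \hat{\mathcal{H}})$ and rearranging yields the lower bound on $p_e$. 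For the upper bound, I would invoke the Hellman--Raviv inequality, $p_{e} \leq \tfrac{1}{2}H(\hat{\mathcal{H}} \mid \mathcal{X}_{1:|\mathcal{Y}|})$, which controls the Bayes risk of an optimal predictor by half the conditional entropy, and apply the same identity to produce the right-hand side of Eq.~(\ref{eq:bounds}). The conclusion that minimizing $p_e$ requires maximizing $I(\mathcal{X}_{1:|\mathcal{Y}|}; \hat{\mathcal{H}})$ then follows because, for fixed $H(\hat{\mathcal{H}})$, both sides of the sandwich are monotonically decreasing in $I(\mathcal{X}_{1:|\mathcal{Y}|}; \hat{\mathcal{H}})$.

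For the decomposition, I would argue directly from the definitions of total correlation and mutual information. Writing $\mathcal{I}(\mathcal{X}_{1:|\mathcal{Y}|}) = \sum_{i} H(\mathcal{X}_{i}) - H(\mathcal{X}_{1:|\mathcal{Y}|})$ and $\mathcal{I}(\mathcal{X}_{1:|\mathcal{Y}|} \mid \hat{\mathcal{H}}) = \sum_{i} H(\mathcal{X}_{i} \mid \hat{\mathcal{H}}) - H(\mathcal{X}_{1:|\mathcal{Y}|} \mid \hat{\mathcal{H}})$, subtracting the two and regrouping terms using $H(\mathcal{X}_{i}) - H(\mathcal{X}_{i}\mid \hat{\mathcal{H}}) = I(\mathcal{X}_{i}; \hat{\mathcal{H}})$ and $H(\mathcal{X}_{1:|\mathcal{Y}|}) - H(\mathcal{X}_{1:|\mathcal{Y}|}\mid \hat{\mathcal{H}}) = I(\mathcal{X}_{1:|\mathcal{Y}|}; \hat{\mathcal{H}})$ gives $\mathcal{I}(\mathcal{X}_{1:|\mathcal{Y}|}) - \mathcal{I}(\mathcal{X}_{1:|\mathcal{Y}|} \mid \hat{\mathcal{H}}) = \sum_{i} I(\mathcal{X}_{i}; \hat{\mathcal{H}}) - I(\mathcal{X}_{1:|\mathcal{Y}|}; \hat{\mathcal{H}})$. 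Rearranging yields $I(\mathcal{X}_{1:|\mathcal{Y}|}; \hat{\mathcal{H}}) = \sum_{i} I(\mathcal{X}_{i}; \hat{\mathcal{H}}) + \mathcal{I}(\mathcal{X}_{1:|\mathcal{Y}|}\mid \hat{\mathcal{H}}) - \mathcal{I}(\mathcal{X}_{1:|\mathcal{Y}|})$, which matches Eq.~(\ref{eq:it_diversity}) term for term, with the relevancy term identified with $\sum_{i} I(\mathcal{X}_{i}; \hat{\mathcal{H}})$ and information-theoretic diversity being the conditional-minus-unconditional total correlation.

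The main obstacle is not mathematical depth but bookkeeping: making sure the same conditioning structure is maintained throughout, and verifying that the loose Fano form (with the additive $1$) is the specific variant that corresponds to the cited result, since the tighter version involves $\log(|\hat{\mathcal{H}}|-1)$ together with a binary entropy term and would alter the precise shape of the stated lower bound. A secondary subtlety is that the decomposition holds only when total correlation and conditional total correlation are defined with the same joint distribution on $\mathcal{X}_{1:|\mathcal{Y}|}$, so I would be explicit that the outer expectations in $\mathcal{I}(\mathcal{X}_{1:|\mathcal{Y}|} \mid \hat{\mathcal{H}})$ are taken with respect to the marginal of $\hat{\mathcal{H}}$, which makes the additive telescoping go through cleanly.
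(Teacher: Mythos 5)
Your proposal is correct, and it is in fact more informative than the paper's own ``proof,'' which consists entirely of deferring to \citet{zhou2010} (and \citet{brown2009}) with the remark that no restriction on the variables prevents applying their decomposition. The argument you give --- the loose form of Fano's inequality for the lower bound, the Hellman--Raviv bound $p_e \leq \tfrac{1}{2}H(\hat{\mathcal{H}}\mid\mathcal{X}_{1:|\mathcal{Y}|})$ for the upper bound, and the telescoping of total correlation minus conditional total correlation into $\sum_i I(\mathcal{X}_i;\hat{\mathcal{H}}) - I(\mathcal{X}_{1:|\mathcal{Y}|};\hat{\mathcal{H}})$ for the decomposition --- is exactly the derivation carried out in those cited works, so you are reconstructing the intended proof rather than taking a different route. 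Your two flagged subtleties are the right ones to flag: the additive $1$ in the lower bound does correspond to the loose Fano variant (bounding the binary entropy term by $1$ and $\log(|\hat{\mathcal{H}}|-1)$ by $\log|\hat{\mathcal{H}}|$), and the Hellman--Raviv upper bound holds for the Bayes-optimal predictor, which is the reading of ``$g$ predicting the best candidate'' that the theorem and its sources intend. The only thing your write-up adds beyond the citations is the explicit algebra, which checks out term for term against Eq.~(\ref{eq:it_diversity}).
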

For maximizing \emph{Information Theoretic Diveristy} in Eq.~(\ref{eq:it_diversity}), \emph{Redundancy} must be zero. This condition is satisfied when all elements in $\mathcal{X}_{1:|\mathcal{Y}|}$ are independent of each other. That shows the importance of the diversity of generated samples in $\mathcal{Y}$. This theoretical aspect is consistent with the importance of the diversity of  $\mathcal{Y}$ shown in Eq.~(\ref{eq:bias-variance}). 
\begin{theorem}[\textbf{Monotonicity of Terms}]
When $\mathcal{X}_{1:|\mathcal{Y}|}$ increases (that means new elements are added to $\mathcal{X}_{1:|\mathcal{Y}|}$), \textit{Relevancy}, \textit{Conditional Redundancy}, and \textit{Redundancy} monotonically increase, while \textit{Information Theoretic Diveristy} and $I(\mathcal{X}_{1:|\mathcal{Y}|};\hat{\mathcal{H}})$ do not monotonically increase or decrease. (See Appendix \ref{appendix:proof-monotonicity-terms} for the proof.)
\label{theorem:monotonicity-terms}
\end{theorem}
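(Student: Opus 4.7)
The plan is to treat each quantity separately and reason about its increment when one new representation $\mathcal{X}_{n+1}$ is appended to $\mathcal{X}_{1:n}$ (writing $n := |\mathcal{Y}|$ for brevity). The monotonic parts reduce to non-negativity of mutual information applied to a chain-rule identity, while the non-monotonic parts require explicit small counterexamples.

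For \emph{Relevancy} $= \sum_{i=1}^{n} I(\mathcal{X}_i; \hat{\mathcal{H}})$, the increment is just the additional summand $I(\mathcal{X}_{n+1}; \hat{\mathcal{H}}) \geq 0$, so the sum is monotonically non-decreasing. For \emph{Redundancy} $\mathcal{I}(\mathcal{X}_{1:n}) = \sum_{i=1}^{n} H(\mathcal{X}_i) - H(\mathcal{X}_{1:n})$, the increment telescopes to
\begin{equation*}
\mathcal{I}(\mathcal{X}_{1:n+1}) - \mathcal{I}(\mathcal{X}_{1:n}) = H(\mathcal{X}_{n+1}) - \bigl(H(\mathcal{X}_{1:n+1}) - H(\mathcal{X}_{1:n})\bigr) = H(\mathcal{X}_{n+1}) - H(\mathcal{X}_{n+1} \mid \mathcal{X}_{1:n}) = I(\mathcal{X}_{n+1}; \mathcal{X}_{1:n}) \geq 0,
\end{equation*}
and the identical manipulation conditioned pointwise on $\hat{\mathcal{H}}$ handles \emph{Conditional Redundancy}. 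This dispatches the three monotonic claims in one stroke.

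For the non-monotonic claims --- \emph{Information Theoretic Diversity} = Conditional Redundancy $-$ Redundancy, and $I(\mathcal{X}_{1:n}; \hat{\mathcal{H}})$ --- I would construct small joint distributions over $(\mathcal{X}_1, \ldots, \mathcal{X}_{n+1}, \hat{\mathcal{H}})$ that realize both directions of change. A natural template is a parity-style coupling in which appending $\mathcal{X}_{n+1}$ inflates $I(\mathcal{X}_{n+1}; \mathcal{X}_{1:n} \mid \hat{\mathcal{H}})$ more than $I(\mathcal{X}_{n+1}; \mathcal{X}_{1:n})$ (pushing Diversity up), paired with a copy/duplication coupling in which the ordering reverses (pushing Diversity down). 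The same construction can be calibrated so that the direction of change of $I(\mathcal{X}_{1:n+1}; \hat{\mathcal{H}})$ flips across consecutive values of $n$.

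The delicate point, and the main obstacle, is the $I(\mathcal{X}_{1:n}; \hat{\mathcal{H}})$ claim: the chain rule $I(\mathcal{X}_{1:n+1}; \hat{\mathcal{H}}) = I(\mathcal{X}_{1:n}; \hat{\mathcal{H}}) + I(\mathcal{X}_{n+1}; \hat{\mathcal{H}} \mid \mathcal{X}_{1:n})$ forces weak non-decrease whenever $\hat{\mathcal{H}}$ is held fixed. The cleanest resolution is to read the theorem as denying \emph{strict} monotonicity, which is falsified by taking $\mathcal{X}_{n+1}$ conditionally independent of $\hat{\mathcal{H}}$ given $\mathcal{X}_{1:n}$ (zero increment). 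If a stronger interpretation is intended, one should relax the implicit assumption that $\hat{\mathcal{H}}$ stays fixed as $|\mathcal{Y}|$ grows --- for instance, letting $\hat{\mathcal{H}}$ range over indices into $\mathcal{Y}$ so that $H(\hat{\mathcal{H}})$ itself shifts --- and then counterexamples in both directions follow from the same parity/copy templates already used for Diversity.
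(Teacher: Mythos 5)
The paper does not actually supply a self-contained argument for this theorem: Appendix \ref{appendix:proof-monotonicity-terms} simply defers to an external reference (\citealp{kamigaito2025arxiv}), so there is no in-paper proof to match your derivation against line by line. Judged on its own terms, your treatment of the three monotone quantities is correct and is the standard one: the increment of \emph{Relevancy} is the new summand $I(\mathcal{X}_{n+1};\hat{\mathcal{H}})\geq 0$, and the increments of \emph{Redundancy} and \emph{Conditional Redundancy} telescope to $I(\mathcal{X}_{n+1};\mathcal{X}_{1:n})\geq 0$ and $I(\mathcal{X}_{n+1};\mathcal{X}_{1:n}\mid\hat{\mathcal{H}})\geq 0$ respectively. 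For \emph{Information-Theoretic Diversity} the increment is the interaction-information quantity $I(\mathcal{X}_{n+1};\mathcal{X}_{1:n}\mid\hat{\mathcal{H}})-I(\mathcal{X}_{n+1};\mathcal{X}_{1:n})$, and your parity coupling (sign positive) versus copy coupling (sign negative) are exactly the right witnesses; for a complete proof you should write these two joint distributions out explicitly rather than leave them as templates, but both are two-line computations.

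The substantive issue you have correctly isolated is the claim about $I(\mathcal{X}_{1:|\mathcal{Y}|};\hat{\mathcal{H}})$ itself. By the chain rule, $I(\mathcal{X}_{1:n+1};\hat{\mathcal{H}})=I(\mathcal{X}_{1:n};\hat{\mathcal{H}})+I(\mathcal{X}_{n+1};\hat{\mathcal{H}}\mid\mathcal{X}_{1:n})\geq I(\mathcal{X}_{1:n};\hat{\mathcal{H}})$, so this quantity \emph{is} monotonically non-decreasing under any fixed joint distribution, with no independence assumption needed; this is also visible from the paper's own decomposition $I(\mathcal{X}_{1:n};\hat{\mathcal{H}})=\text{Relevancy}+\text{ITD}$. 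The statement is therefore only salvageable under a strict-monotonicity reading (the increment can be zero, e.g.\ when $\mathcal{X}_{n+1}$ is conditionally independent of $\hat{\mathcal{H}}$ given $\mathcal{X}_{1:n}$), but that reading is inconsistent with the weak reading required to make the first half of the theorem true, since the increments of \emph{Relevancy}, \emph{Redundancy}, and \emph{Conditional Redundancy} can likewise be zero. Your proposal handles this honestly by naming the ambiguity and offering the zero-increment witness, which is the best that can be done; but be aware that you are not proving the theorem as literally stated---you are proving a corrected version of it---and the same tension propagates to Theorem \ref{theorem:submodularity:terms}, which presents the non-decrease of $I(\mathcal{X}_{1:|\mathcal{Y}|};\hat{\mathcal{H}})$ as a consequence of conditional independence when it in fact holds unconditionally.
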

This theorem concludes that just increasing sample size does not guarantee performance improvement in inference due to the degradation of \textit{Information Theoretic Diversity}. Thus, diversity is still a key to performance improvement, also from the information-theoretic interpretation. The bounds further support this characteristic as follows.
\begin{theorem}[\textbf{Monotonicity of Bounds}]
The lower and upper bounds of the error in Eq.~(\ref{eq:bounds}) do not monotonically increase or decrease when $\mathcal{X}_{1:|\mathcal{Y}|}$ increases. (See Appendix \ref{appendix:monotonicity-bounds} for the proof.)
\label{theorem:monotonicity-bounds}
\end{theorem}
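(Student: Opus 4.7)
The plan is to reduce the claim to the non-monotonicity of $I(\mathcal{X}_{1:|\mathcal{Y}|};\hat{\mathcal{H}})$ already established in Theorem~\ref{theorem:monotonicity-terms}, by exploiting the fact that both bounds are affine functions of this mutual information with constant (in $|\mathcal{Y}|$) offset and slope.

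First, I would rewrite the lower and upper bounds from Eq.~(\ref{eq:bounds}) as
\begin{equation*}
L(|\mathcal{Y}|) = \frac{H(\hat{\mathcal{H}})-1}{\log|\hat{\mathcal{H}}|} - \frac{1}{\log|\hat{\mathcal{H}}|}\,I(\mathcal{X}_{1:|\mathcal{Y}|};\hat{\mathcal{H}}),
\end{equation*}
\begin{equation*}
U(|\mathcal{Y}|) = \frac{H(\hat{\mathcal{H}})}{2} - \frac{1}{2}\,I(\mathcal{X}_{1:|\mathcal{Y}|};\hat{\mathcal{H}}).
\end{equation*}
The key observation is that $H(\hat{\mathcal{H}})$ and $|\hat{\mathcal{H}}|$ are properties of the distribution of the human-selected candidate only, and therefore do not change as we extend $\mathcal{X}_{1:|\mathcal{Y}|}$ with new elements. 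Hence both $L$ and $U$ are strictly decreasing affine functions of $I(\mathcal{X}_{1:|\mathcal{Y}|};\hat{\mathcal{H}})$.

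Next, I would invoke Theorem~\ref{theorem:monotonicity-terms}, which states that $I(\mathcal{X}_{1:|\mathcal{Y}|};\hat{\mathcal{H}})$ is neither monotonically non-decreasing nor monotonically non-increasing in $|\mathcal{Y}|$. Concretely, there exist extensions of the index set for which $I$ strictly increases and other extensions for which $I$ strictly decreases. Since $L$ and $U$ are obtained from $I$ by an affine transformation with strictly negative slope, the direction of change is merely reversed step by step: where $I$ goes up, $L$ and $U$ go down, and vice versa. Consequently, neither $L$ nor $U$ can be monotonically non-decreasing or monotonically non-increasing.

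Formally, I would finish by the contrapositive: if $U$ (resp. $L$) were monotone in $|\mathcal{Y}|$, then $I(\mathcal{X}_{1:|\mathcal{Y}|};\hat{\mathcal{H}}) = H(\hat{\mathcal{H}}) - 2U(|\mathcal{Y}|)$ (resp.\ $H(\hat{\mathcal{H}})-1 - \log|\hat{\mathcal{H}}|\cdot L(|\mathcal{Y}|)$) would be monotone too, contradicting Theorem~\ref{theorem:monotonicity-terms}. The main (and only) obstacle here is simply being careful about the reduction: the proof relies entirely on the constancy of $H(\hat{\mathcal{H}})$ and $|\hat{\mathcal{H}}|$ with respect to $\mathcal{X}_{1:|\mathcal{Y}|}$, so no new counterexample construction is needed beyond the one already behind Theorem~\ref{theorem:monotonicity-terms}.
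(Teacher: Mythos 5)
Your reduction is correct, and it is worth noting that the paper itself does not spell out any argument here: Appendix~\ref{appendix:monotonicity-bounds} simply defers to an external reference (\citealp{kamigaito2025arxiv}), just as it does for Theorem~\ref{theorem:monotonicity-terms}. Your proof supplies the missing reasoning in a self-contained way, and the key observation is exactly the right one: $H(\hat{\mathcal{H}})$ and $|\hat{\mathcal{H}}|$ depend only on the distribution of the human-selected candidate, so both bounds in Eq.~(\ref{eq:bounds}) are affine in $I(\mathcal{X}_{1:|\mathcal{Y}|};\hat{\mathcal{H}})$ with constant offset and strictly negative slope, and an affine map with nonzero slope preserves non-monotonicity. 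The contrapositive phrasing via $I = H(\hat{\mathcal{H}}) - 2U$ and $I = H(\hat{\mathcal{H}}) - 1 - \log|\hat{\mathcal{H}}|\cdot L$ makes the transfer from Theorem~\ref{theorem:monotonicity-terms} airtight. Two small caveats: (i) the strict negativity of the slope of $L$ requires $|\hat{\mathcal{H}}|>1$ so that $\log|\hat{\mathcal{H}}|>0$, which you should state explicitly (the $|\hat{\mathcal{H}}|=1$ case is degenerate anyway, as the lower bound is then undefined); and (ii) your argument inherits whatever counterexample construction underlies Theorem~\ref{theorem:monotonicity-terms}, which neither you nor the paper exhibits --- that is acceptable given that the theorem is stated and cited, but it means your proof is conditional on that result rather than independent of it. What your route buys is modularity: any future strengthening or correction of Theorem~\ref{theorem:monotonicity-terms} propagates immediately to the bounds without revisiting this proof.
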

However, these characteristics contradict previous work reporting performance improvement by increasing sample size. The following theorem can fill in the gap.
\begin{theorem}[\textbf{Submodularity of Terms}]
When variables in $\mathcal{X}_{1:|\mathcal{Y}|}$ are independent given $\hat{\mathcal{H}}$, $I(\mathcal{X}_{1:|\mathcal{Y}|};\hat{\mathcal{H}})$ and Information-Theoretic Diversity have submodularity on $\mathcal{X}_{1:|\mathcal{Y}|}$ and $I(\mathcal{X}_{1:|\mathcal{Y}|};\hat{\mathcal{H}})$ does not decrease when $\mathcal{X}_{1:|\mathcal{Y}|}$ increases. (See Appendix \ref{appendix:submodularity:terms} for the proof.)
\label{theorem:submodularity:terms}
\end{theorem}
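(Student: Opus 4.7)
The plan is to combine the decomposition from Theorem~\ref{theorem:bias-variance-general} with two standard facts about information measures: the non-negativity of conditional mutual information (which delivers monotonicity), and the submodularity of joint entropy (which delivers submodularity). The conditional independence hypothesis enters exactly once, to make the conditional entropy $H(\mathcal{X}_S\mid\hat{\mathcal{H}})$ additive over any subset $S \subseteq \{1,\dots,|\mathcal{Y}|\}$, equivalently to force $\mathcal{I}(\mathcal{X}_S\mid\hat{\mathcal{H}})=0$ for every such $S$.

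For the monotonicity claim on $I(\mathcal{X}_{1:|\mathcal{Y}|};\hat{\mathcal{H}})$, I would argue directly from the chain rule of mutual information. For any $S$ and any new index $j\notin S$,
\[
I(\mathcal{X}_{S\cup\{j\}};\hat{\mathcal{H}}) - I(\mathcal{X}_S;\hat{\mathcal{H}}) \;=\; I(\mathcal{X}_j;\hat{\mathcal{H}}\mid \mathcal{X}_S) \;\ge\; 0,
\]
so the marginal gain of enlarging the sample set is non-negative; this step does not even require the conditional independence assumption.

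For submodularity of $I(\mathcal{X}_S;\hat{\mathcal{H}})$, write $I(\mathcal{X}_S;\hat{\mathcal{H}}) = H(\mathcal{X}_S) - H(\mathcal{X}_S\mid\hat{\mathcal{H}})$. Conditional independence collapses the second term to $\sum_{i\in S}H(\mathcal{X}_i\mid\hat{\mathcal{H}})$, a modular (linear) set function. The joint entropy $H(\mathcal{X}_S)$ is submodular by the classical marginal-gain inequality: for $S \subseteq T$ and $j \notin T$,
\[
H(\mathcal{X}_{S\cup\{j\}}) - H(\mathcal{X}_S) \;=\; H(\mathcal{X}_j\mid\mathcal{X}_S) \;\ge\; H(\mathcal{X}_j\mid\mathcal{X}_T) \;=\; H(\mathcal{X}_{T\cup\{j\}}) - H(\mathcal{X}_T),
\]
because conditioning on more variables cannot increase entropy. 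A submodular minus a modular function is submodular, so $I(\mathcal{X}_S;\hat{\mathcal{H}})$ is submodular. For Information-Theoretic Diversity, the same hypothesis annihilates the Conditional Redundancy term, leaving $-\mathcal{I}(\mathcal{X}_S) = H(\mathcal{X}_S) - \sum_{i\in S}H(\mathcal{X}_i)$, which is again submodular minus modular and hence submodular.

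The main obstacle I anticipate is not any individual inequality but the bookkeeping around the set-function viewpoint. I must verify that conditional independence of the full family implies conditional independence of every subfamily, so that $\mathcal{I}(\mathcal{X}_S\mid\hat{\mathcal{H}})=0$ and $H(\mathcal{X}_S\mid\hat{\mathcal{H}})=\sum_{i\in S}H(\mathcal{X}_i\mid\hat{\mathcal{H}})$ hold \emph{uniformly} across $S$; this is immediate from the definition of mutual conditional independence, after which both submodularity claims reduce to textbook facts about joint entropy and the monotonicity claim to the chain rule of mutual information.
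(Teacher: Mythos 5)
Your proposal is correct, but it cannot be compared step-by-step against the paper's own proof for the simple reason that the paper does not contain one: Appendix~\ref{appendix:submodularity:terms} consists entirely of a pointer to an external reference (\citealp{kamigaito2025arxiv}) together with the remark that conditional independence is an assumption, so the argument is entirely outsourced. What you supply instead is a self-contained derivation from textbook facts, and every step checks out: the chain rule gives $I(\mathcal{X}_{S\cup\{j\}};\hat{\mathcal{H}})-I(\mathcal{X}_S;\hat{\mathcal{H}})=I(\mathcal{X}_j;\hat{\mathcal{H}}\mid\mathcal{X}_S)\ge 0$ for monotonicity; conditional independence of the full family does marginalize to every subfamily, so $H(\mathcal{X}_S\mid\hat{\mathcal{H}})=\sum_{i\in S}H(\mathcal{X}_i\mid\hat{\mathcal{H}})$ is modular uniformly in $S$; joint entropy is submodular by ``conditioning reduces entropy''; and submodular minus modular is submodular, which handles both $I(\mathcal{X}_S;\hat{\mathcal{H}})$ and, after the Conditional Redundancy term is annihilated, Information-Theoretic Diversity $=-\mathcal{I}(\mathcal{X}_S)=H(\mathcal{X}_S)-\sum_{i\in S}H(\mathcal{X}_i)$. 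This is essentially the classical result on submodularity of information gain under a na\"ive-Bayes assumption, and presenting it explicitly buys the reader something the paper's citation does not. One observation of yours deserves emphasis rather than a parenthetical: the monotonicity of $I(\mathcal{X}_S;\hat{\mathcal{H}})$ holds \emph{unconditionally}, with no independence assumption, which sits in visible tension with Theorem~\ref{theorem:monotonicity-terms}'s claim that $I(\mathcal{X}_{1:|\mathcal{Y}|};\hat{\mathcal{H}})$ ``does not monotonically increase or decrease'' when samples are added; since that theorem's proof is also deferred to the same external reference, you should flag the discrepancy rather than silently absorb it, but it is a problem with the paper's statement of Theorem~\ref{theorem:monotonicity-terms} (or its implicit notion of strict monotonicity), not with your argument for the statement at hand.
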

This theorem shows a kind of lower bound for the performance because \textit{Conditional Redundancy} becomes zero under the condition.
Here, submodularity is a characteristic that the effect for performance improvement by increasing $\mathcal{X}_{1:|\mathcal{Y}|}$ decreases.
Thus, under the conditional independence of $\mathcal{X}_{1:|\mathcal{Y}|}$ given $\hat{\mathcal{H}}$, increasing $\mathcal{X}_{1:|\mathcal{Y}|}$ makes $I(\mathcal{X}_{1:|\mathcal{Y}|};\hat{\mathcal{H}})$ increase and gradually converge based on its submodularity. This behavior is along with the inference scaling laws that indicate the logarithmic convergence of performance by increasing sample size. The following theorem supports this aspect.
\begin{theorem}[\textbf{Supermodularity of Bounds}] When variables in $\mathcal{X}_{1:|\mathcal{Y}|}$ are independent given $\hat{\mathcal{H}}$, the lower and upper bounds of the error in Eq.~(\ref{eq:bounds}) are supermodular and non-increasing on $\mathcal{X}_{1:|\mathcal{Y}|}$. (See Appendix \ref{appendix:submodularity:bounds} for the proof.)
\label{theorem:submodularity:bounds}
\end{theorem}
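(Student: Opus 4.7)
The plan is to observe that both bounds in Eq.~(\ref{eq:bounds}) are affine functions of $I(\mathcal{X}_{1:|\mathcal{Y}|};\hat{\mathcal{H}})$ with negative slope, so the conclusion will follow almost immediately from Theorem \ref{theorem:submodularity:terms}, which already guarantees that $I(\mathcal{X}_{1:|\mathcal{Y}|};\hat{\mathcal{H}})$ is submodular and non-decreasing on $\mathcal{X}_{1:|\mathcal{Y}|}$ under the conditional independence assumption. The whole argument is essentially a bookkeeping calculation on top of that theorem.

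First I would rewrite each bound in the form $c_0 - c_1 \cdot I(\mathcal{X}_{1:|\mathcal{Y}|};\hat{\mathcal{H}})$, where $c_0, c_1$ depend only on $H(\hat{\mathcal{H}})$ and $|\hat{\mathcal{H}}|$ and hence are constant in the set argument $\mathcal{X}_{1:|\mathcal{Y}|}$. Concretely, for the lower bound I would take $c_0 = (H(\hat{\mathcal{H}})-1)/\log|\hat{\mathcal{H}}|$ and $c_1 = 1/\log|\hat{\mathcal{H}}|$; for the upper bound, $c_0 = H(\hat{\mathcal{H}})/2$ and $c_1 = 1/2$. Both $c_1$ values are strictly positive in the nontrivial regime $|\hat{\mathcal{H}}| \geq 2$.

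Next I would invoke two standard facts about set functions, both derivable directly from the marginal-difference definitions: (i) if $f$ is submodular and $\alpha \geq 0$, $\beta \in \mathbb{R}$, then $\beta - \alpha f$ is supermodular, because negating marginal differences reverses the submodular inequality; and (ii) if $f$ is non-decreasing, then $\beta - \alpha f$ is non-increasing for any $\alpha \geq 0$. Applying these facts with $f = I(\mathcal{X}_{1:|\mathcal{Y}|};\hat{\mathcal{H}})$ and the constants identified above, together with Theorem \ref{theorem:submodularity:terms}, yields supermodularity and non-increasingness of both bounds.

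There is no genuine obstacle here; the one piece of care I would take is to state explicitly the set-function convention under which $\mathcal{X}_{1:|\mathcal{Y}|}$ is viewed as the ground set (so that ``increasing $\mathcal{X}_{1:|\mathcal{Y}|}$'' in the hypothesis of Theorem~\ref{theorem:submodularity:terms} matches the set inclusion used to define super-/sub-modularity), and to note that the sign of $c_1$ is what converts submodular into supermodular and non-decreasing into non-increasing. Beyond this, the proof is a one-line consequence of the affine structure of the bounds.
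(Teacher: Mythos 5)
Your proposal is correct: both bounds in Eq.~(\ref{eq:bounds}) are of the form $c_0 - c_1 I(\mathcal{X}_{1:|\mathcal{Y}|};\hat{\mathcal{H}})$ with $c_0, c_1$ independent of the set argument and $c_1 > 0$, so supermodularity and non-increasingness follow immediately from the submodularity and non-decreasingness of $I(\mathcal{X}_{1:|\mathcal{Y}|};\hat{\mathcal{H}})$ guaranteed by Theorem~\ref{theorem:submodularity:terms}. This is exactly the intended argument; the paper itself only defers to the cited external reference, so your version is if anything more self-contained.
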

Therefore, prediction error $p(\hat{\mathcal{H}} \not = g(\mathcal{X}_{1:|\mathcal{Y}|}))$ decreases and converges corresponding to the increase of $I(\mathcal{X}_{1:|\mathcal{Y}|};\hat{\mathcal{H}})$, since supermodularity is the negative of submodularity. This result supports the correspondence with inference scaling laws.

\subsection{Interpretation}

The decompositions and their details presented in these theorems allow us to provide theoretical interpretations for the empirically analyzed characteristics of MBR decoding and other inference methods in prior studies.

\subsubsection{Correlation to Human Evaluation Results}
\label{subsubsec:corr_human}

\emph{Bias} in Theorem~\ref{theorem:bias-variance} highlights the importance of considering the closeness between the human-estimated quality, $\hat{u}_{i}$, and the quality estimated by the utility function, $f_{\mathbf{\theta}}(h_i,y_j)$, for improving the performance of MBR decoding. Specifically, since the utility function, $f_{\mathbf{\theta}}(h_i,y_j)$, is influenced by the pseudo-reference $y_j$, \emph{Bias} underscores the significance of considering the utility function's correlation to human evaluation and the closeness between pseudo-references and human-created references. Therefore, it emphasizes the importance of examining both utility functions and sampling strategies for generating pseudo-references.

\noindent\textbf{Quality of Evaluation Metrics.} \citet{muller-sennrich-2021-understanding,10.1162/tacl_a_00491,fernandes-etal-2022-quality,amrhein-sennrich-2022-identifying} support our theoretical insight, in which the quality of evaluation metrics used as utility functions is crucial for performance improvement.

\noindent\textbf{Quality of Pseudo-References.} \citet{ohashi-etal-2024-true,jinnai-etal-2024-generating} empirically show the importance of selecting appropriate pseudo-references. Our findings theoretically support these empirical insights.

\noindent\textbf{Challenges in the Real World.} Our theoretical findings emphasize the necessity of directly reducing the bias term. However, this requires human evaluation of the combination of pseudo-references and evaluation metrics, used as utility functions, for each hypothesis. This task is clearly challenging due to the high cost of human evaluation. As a solution, we suggest a method to approximate this in \S\ref{subsec:pseudo_bias} and evaluate its correlation with task-specific performance in \S\ref{subsec:experiments:bias-diversity}.

\subsubsection{Diversity of Automatic Evaluation Results}
\label{subsubsec:diversity}

\emph{Diversity} in Theorem~\ref{theorem:bias-variance} shows that increasing diversity can contribute to performance improvements in MBR decoding. A key insight here is that the diversity expressed by $(\bar{u}_{i} - f_{\mathbf{\theta}}(h_i,y_j))^2$ stems from the different estimated qualities produced by each utility function $f_{\mathbf{\theta}}(h_i,y_j)$. Thus, this diversity can be influenced by the pseudo-reference $y_j$ and/or the model parameters $\mathbf{\theta}$ of the evaluation metric.

\noindent\textbf{Diversity of Pseudo-references.} This finding supports the previous studies \citep{freitag-etal-2023-epsilon,jinnai-etal-2024-generating,heineman2024improvingminimumbayesrisk} that conclude the diversity of sampling methods is essential for performance improvement of MBR decoding considering that the diversity of the pseudo-references can indirectly contribute to increasing the diversity of $f_{\mathbf{\theta}}(h_i,y_j)$ by each $y_j$.

\noindent\textbf{Diversity of Evaluation Metrics.} Theoretically, we can anticipate performance improvements by combining multiple different evaluation metrics as utility functions to increase diversity. This is supported by empirical insights from \citet{kovacs-etal-2024-mitigating} in MBR decoding and \citet{glushkova-etal-2023-bleu} in the quality estimation in text generation.

\noindent\textbf{Unexplored Aspect.} Furthermore, the effect of increasing the diversity of estimated qualities from utility functions by varying the evaluation metric’s model parameters $\mathbf{\theta}$ remains uncertain. To investigate this, we propose a method to adjust the diversity of estimated qualities by modifying $\mathbf{\theta}$ in \S\ref{subsec:mambr} and compare its behavior with that of varying pseudo-references in \S\ref{subsec:experiments:mambr}.

\subsubsection{MBR Decoding as Ensemble Learning}
\label{subsubsec:ensemble}

Our decomposition in Theorem~\ref{theorem:bias-variance} aligns with ensemble learning, which is induced by \citet{NIPS1994_b8c37e33}. Thus, we can understand that the quality estimation by MBR decoding is a kind of ensemble learning.

\noindent\textbf{Quality Estimation.} Our decomposition starts from the definition $MSE(\hat{\textbf{u}}, \bar{\textbf{u}})$ in Eq.~(\ref{eq:mse}), the error between the estimated qualities from human evaluation and MBR decoding. We can actually observe the reduction of errors as the improvement in quality score estimation of \citep{naskar-etal-2023-quality,cheng-vlachos-2024-measuring} by ensembling utility functions that are similar to MBR decoding.

\noindent\textbf{Weighted-voting.} Furthermore, this viewpoint supports the validity of the previous work \citep{suzgun-etal-2023-follow,bertsch-etal-2023-mbr} that shows the interpretation of MBR decoding as soft-weighted voting, a variant of ensemble learning. Different from us, soft-weighted voting restricts the value range of voters (utility functions) from 0 to 1. \citet{10.5555/3648699.3649058} shows that soft-weighted voting can be converted to the decomposition of \citet{NIPS1994_b8c37e33}, equivalent to our decomposition in Eq.~(\ref{eq:bias-variance}). Therefore, weighted voting-based MBR decoding can be similarly explained in our decomposition. 

\noindent\textbf{Number of Pseudo-references.} Generally, increasing the number of pseudo-references improves performance but demands additional computational costs. \citet{denero-etal-2009-fast,eikema-aziz-2022-sampling,cheng-vlachos-2023-faster,deguchi-etal-2024-centroid,vamvas-sennrich-2024-linear,trabelsi2024efficientminimumbayesrisk} prune samples to speed up inference and maintain the original quality similar to the case of pruning estimators in ensemble learning \citep{liu2004empirical,hamed2016cikm,hamed2019tnnls}.

Considering an ensemble learning method, such as the Bayes optimal classifier \citep{mitchell1997introduction}, and assuming that Eq.~(\ref{eq:mbr}) approximates the expectation by sampling $y_j$, we can explain the performance improvement of increased pseudo-references by the law of large numbers and the success of the pruning and weighted utility functions \citep{pmlr-v235-jinnai24a} through importance sampling \citep{kloek1978bayesian}. (See Appendix \ref{appendix:interpretation:ensemble-learning} for more details.)

\subsubsection{Bias and Diversity Trade-off}

At first glance, based on the interpretation in \S\ref{subsubsec:corr_human} and \S\ref{subsubsec:diversity}, decreasing \emph{Bias} while increasing \emph{Diversity} seems to be the best strategy to improve performance in MBR decoding, which was investigated by \citet{jinnai-etal-2024-generating}. To understand its validity, we need to focus on Theorems~\ref{theorem:bias-variance-tradeoff-1} and \ref{theorem:bias-variance-tradeoff-2}.

\noindent\textbf{Limitation of MBR Decoding.} Theorem \ref{theorem:bias-variance-tradeoff-1} highlights the difficulty of increasing \emph{Diversity} when \emph{Bias} is close to zero. This theoretical fact indicates that even if we can prepare high-quality evaluation metrics and pseudo-references that correlate well with human behavior, there may be no performance improvement due to diminished \emph{Diversity}. Furthermore, Theorem \ref{theorem:bias-variance-tradeoff-2} highlights that even when \emph{Bias} is not close to zero, \emph{Bias} influences \emph{Diversity}.

\noindent\textbf{Diversity Assists Inferior Methods.} Conversely, when the evaluation metrics and pseudo-references are inferior, we can expect performance improvements through \emph{Diversity} at the cost of increased \emph{Bias}. This phenomenon can explain the sometimes competitive performance of BLEU \citep{papineni-etal-2002-bleu} against COMET \citep{rei-etal-2020-comet} in \citet{freitag-etal-2022-results}, and that of ancestral sampling \citep{robert1999monte} against other sampling methods in \citet{freitag-etal-2023-epsilon,ohashi-etal-2024-true} using MBR decoding. However, increased \emph{Bias} does not guarantee increased \emph{Diversity}. Therefore, we must carefully assess their diversity when using low-quality evaluation metrics and pseudo-references.

\subsubsection{Diversity of Hypotheses}
\label{subsec:diversity:hypo}
By replacing $\mathcal{X}_{1:|\mathcal{Y}|}$ in Theorem \ref{theorem:bias-variance-general} with $\mathcal{X}_{1:|\mathcal{Y}\cup \mathcal{H}|}$, representations corresponding to all elements in $\mathcal{Y}\cup \mathcal{H}$, we can show the importance of diversity and sample size for hypotheses, through Theorems \ref{theorem:bias-variance-general}, \ref{theorem:monotonicity-terms}, and \ref{theorem:monotonicity-bounds}, theoretically. That corresponds to the previous empirical studies, which focus on the sample size of hypotheses \citep{eikema-aziz-2020-map, fernandes-etal-2022-quality, freitag-etal-2023-epsilon}.

\subsubsection{Diversity in Various Decoding Methods}
\label{subsec:diversity:decoding}
Theorems \ref{theorem:bias-variance-general}, \ref{theorem:monotonicity-terms}, and \ref{theorem:monotonicity-bounds} demonstrate the importance of diversity for various decoding methods in addition to MBR decoding and its variants, such as universal self-consistency \cite{chen2024universal}, reranking by rewards \cite{wu2024scaling}, and the combination with other reranking methods \cite{kovacs-etal-2024-mitigating,lyu2025unveilingpowersourcesourcebased}. Basically, these theorems are applicable to other various decoding methods when they use hypotheses supported by \S\ref{subsec:diversity:hypo}.

\subsubsection{Inference Scaling Laws}

Theorems \ref{theorem:submodularity:terms} and \ref{theorem:submodularity:bounds} show the theoretical background of inference scaling laws \cite{wu2024scaling,brown2024largelanguagemonkeysscaling,chen2025simpleprovablescalinglaws,snell2025scaling}, which are applicable to various decoding methods supported by \S\ref{subsec:diversity:hypo} and \S\ref{subsec:diversity:decoding}. This theoretical finding is based on the conditional independence of the representations of the samples given a correct output. This condition is natural in many inference approaches, where samples are independently generated to cover the correct output. Furthermore, since the characteristic shown in Theorems \ref{theorem:submodularity:terms} and \ref{theorem:submodularity:bounds} is a kind of lower bound, we can expect further performance gains by improved \textit{Conditional Redundancy}, interaction of samples to answer. In this situation, the performance improvement shows non-monotonic behavior. We check that in \S\ref{subsec:inference_scaling_laws}.

\section{Remaining Problems \& Solutions}

Our theoretical analysis covers various aspects of MBR decoding. However, for a comprehensive analysis, we should investigate empirical results not addressed in previous work and bridge the gap between theory and real-world applications. To this end, we provide the following solutions.

\subsection{Pseudo-Bias}
\label{subsec:pseudo_bias}

As discussed in \S\ref{subsubsec:corr_human}, the bias term suggests the importance of considering the correlation between the results of human evaluation and the evaluation metric's decisions based on pseudo-references to improve the performance of MBR decoding. However, calculating the bias term requires human evaluation, and conducting human evaluations for each setting is unrealistic and difficult. To address this issue, we introduce \textit{pseudo-bias}, an approximation of the bias term in our decomposition. By using $|\hat{\mathcal{Y}}|$, the number of gold references $\hat{y}$, pseudo-bias is defined as follows:
\begin{equation}
\frac{1}{|\mathcal{H}|}\sum_{i=1}^{|\mathcal{H}|}\frac{1}{|\mathcal{Y}|}\sum_{j=1}^{|\mathcal{Y}|}(\widetilde{u}_{i} - u_{i}^{j})^2,
\end{equation}
where $\widetilde{u}_{i} = \frac{1}{|\hat{\mathcal{Y}}|}\sum_{j=1}^{|\hat{\mathcal{Y}}|}f_{\mathbf{\theta}}(h_i,\hat{y}_{j})$. This formulation is based on the premise that automatic evaluation metrics correlate to human evaluation when receiving human-created references.\footnote{For the pseudo-bias, we used COMET (\texttt{Unbabel/wmt22-comet-da}) and BERTScore with \texttt{microsoft/deberta-xlarge-mnli} whose pearson correlations are 0.990 on the system-level task for English to German \citep{freitag-etal-2023-results} and 0.7781 (\url{https://github.com/Tiiiger/bert_score}) on WMT16 to English \citep{bojar-etal-2016-results}, respectively.}
Since we can calculate the diversity term without any approximation, we compare pseudo-bias with diversity in terms of how they correlate with performance.
\subsection{Metric-augmented MBR}
\label{subsec:mambr}

The discussion in \S\ref{subsubsec:diversity} shows the possibility of increasing the diversity of the utility function, $f_{\theta}(h_i, y_{j})$, by changing the evaluation metric's model parameters, $\theta$, as well as by introducing diversity through pseudo-references. To this end, we propose a new method called Metric-augmented Minimum Bayes Risk (MAMBR) decoding. In MAMBR, we employ different parameters for the evaluation metric to enhance the diversity of utility functions. Letting $\Theta$ be a set of model parameters, MAMBR is defined as follows:
\begin{equation}
   \!\!\!\hat{h}_{\text{mambr}} = \arg\max_{h \in \mathcal{H}} \frac{1}{|\mathcal{Y}| \, |\Theta|} \sum_{\theta \in \Theta} \sum_{y \in \mathcal{Y}} f_{\theta}(h, y).
   \label{eq:mambr}
\end{equation}
We train evaluation metrics with different initial random seeds to generate $\Theta$, a set of diverse model parameters. Note that its concept of diversifying the internal decision of MBR decoding is similar to \citet{daheim2025uncertaintyaware}. The main difference is that we target $\theta$, but they target $\pi$ in Eq.~(\ref{eq:mbr:exp}).

\section{Empirical Analysis}
\label{sec:empirical_analysis}

We conduct empirical analysis corresponding to our theoretical analysis through experiments to comprehensively understand MBR decoding.

\subsection{Overall Settings}
\label{subsec:overall:settings}

We target three different text generation tasks, machine translation, text summarization, and image captioning to investigate the general performance of MBR decoding. In all tasks, we followed the settings of \citet{pmlr-v235-jinnai24a} for generating samples. We used epsilon sampling \citep{hewitt-etal-2022-truncation} to generate hypotheses.\footnote{Appendix \ref{app:results:hyp:sampling} includes the results with hypotheses generated by different sampling methods.} For the generation of pseudo-references, we used various sampling approaches: beam decoding, nucleus sampling \citep{Holtzman2020The} with $p=0.9$, ancestral sampling, top-$k$ sampling \citep{fan-etal-2018-hierarchical} with $k=10$, and epsilon sampling with $\epsilon=0.02$. We set the sampling size for the hypotheses to 64. We chose the sampling size for pseudo-references from \{4, 8, 16, 32, 64\}. We used the following datasets, models\footnote{We used all models from \url{https://huggingface.co/models} \citep{wolf-etal-2020-transformers}.}, and evaluation metrics for each task:

\noindent\textbf{Machine Translation} We used the WMT19 English to German (En-De) and WMT19 English to Russian (En-Ru) datasets \citep{barrault-etal-2019-findings}. We used \texttt{facebook/wmt19-en-de} for En-De and \texttt{facebook/wmt19-en-ru} for En-Ru, respectively. As the utility function and evaluation metric, we used COMET with the model \texttt{Unbabel/wmt22-comet-da}.

\noindent\textbf{Text Summarization} We used the SAMSum \citep{gliwa-etal-2019-samsum} and XSum \citep{narayan-etal-2018-dont} datasets, and used \texttt{philschmid/bart-large-cnn-samsum} and \texttt{facebook/bart-large-xsum} for generation in SAMSum and XSum, respectively. As the utility function and evaluation metric, we used BERTScore \citep{Zhang2020BERTScore} with the model \texttt{microsoft/deberta-xlarge-mnli}.

\noindent\textbf{Image Captioning} We used the MSCOCO dataset \citep{Lin2014MicrosoftCC} with the split of \citet{karpathy2015deep} and the NoCaps dataset \citep{agrawal2019nocaps}. We used \texttt{Salesforce/blip2-flan-t5-xl-coco} and \texttt{Salesforce/blip2-flan-t5-xl} for generation in MSCOCO and NoCaps, respectively. As the utility function and evaluation metric, we used BERTScore with the model \texttt{microsoft/} \texttt{deberta-xlarge-mnli}. We report the average scores on multiple references in both datasets.

Our implementation of the generation part is based on the released code of \citet{pmlr-v235-jinnai24a}\footnote{\url{https://github.com/CyberAgentAILab/model-based-mbr}}, and the MBR decoding part is based on the toolkit, \texttt{mbrs} by \citet{deguchi-2024-mbrs}\footnote{\url{https://github.com/naist-nlp/mbrs}}. We generate samples on NVIDIA GeForce RTX 3090 and perform MBR decoding on an NVIDIA RTX A6000.

\begin{figure*}[t]
    \centering
    \includegraphics[width=0.85\textwidth]{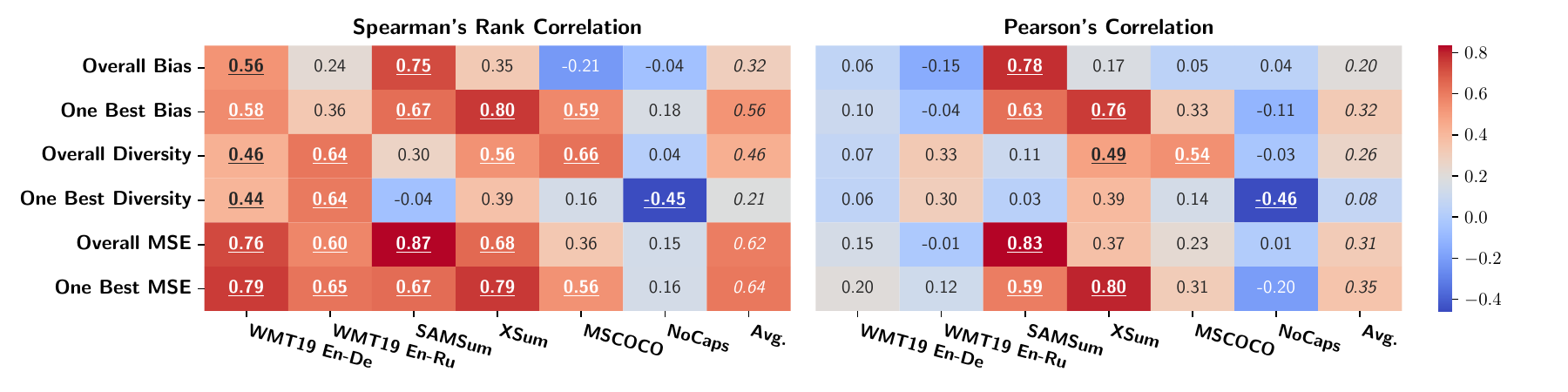}
    \caption[Correlations of each measure to the performance for each task. The underlined scores indicate statistically significant results ($p<0.05$). Note that the italic scores at \textit{Avg.} are not the target of the significance test.]{Correlations of each measure to the performance for each task. The underlined scores indicate statistically significant results ($p<0.05$).\footnotemark ~Note that the italic scores at \textit{Avg.} are not the target of the significance test.}
    \label{fig:correlation}
\end{figure*}

\begin{figure*}[t]
    \centering
    \includegraphics[width=\textwidth]{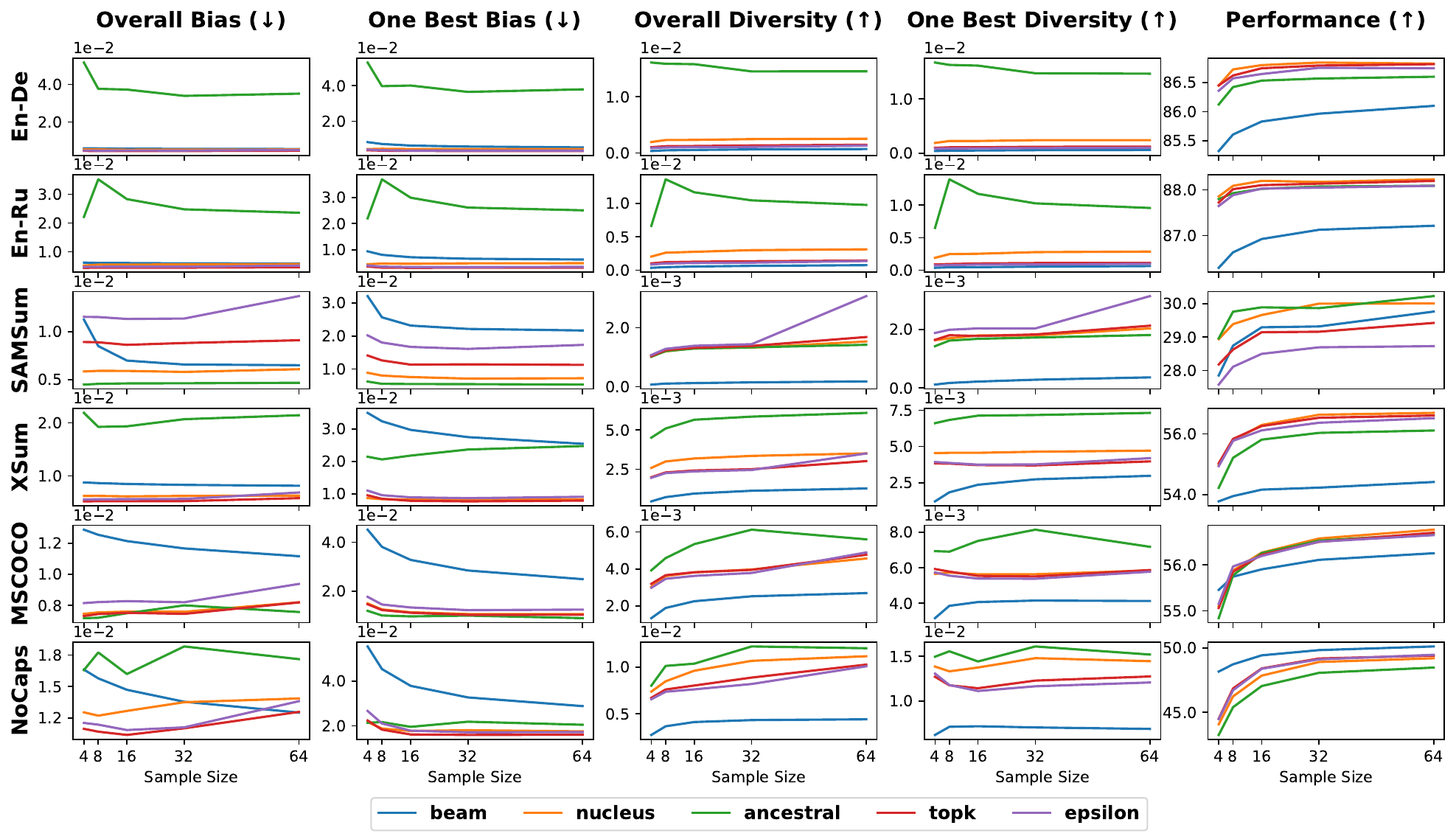}
    \caption{The relationship between bias, diversity, and performance in MBR decoding. The x-axis shows the number of used pseudo-references. ($\uparrow$) indicates higher scores are better whereas ($\downarrow$) indicates lower scores are better.}
    \label{fig:bias_diversity:full}
\end{figure*}

\subsection{Correlation of Bias and Diversity to Performance}
\label{subsec:corr:bias:diversity}

To verify our theoretical decomposition, we investigate the correlation of bias and diversity to performance on each dataset. For this purpose, we approximately compute the bias term by using our pseudo-bias in \S\ref{subsec:pseudo_bias}. Furthermore, we investigate the importance of considering the entire candidate or the best candidate.

\paragraph{Settings} We compared the following measures based on our decomposition in Eq.~(\ref{eq:bias-variance}): \textsc{Overall Bias} is \emph{Bias}; \textsc{One Best Bias} is \emph{Diveristy} for the one best result by MBR decoding; \textsc{Overall Diversity} is \emph{Diversity}; \textsc{One Best Diversity} is \emph{Diversity} for the one best result by MBR decoding; \textsc{Overall MSE} is $MSE(\hat{\textbf{u}}, \bar{\textbf{u}})$; \textsc{One Best MSE} indicates errors for the one best result by MBR decoding in $MSE(\hat{\textbf{u}}, \bar{\textbf{u}})$. For the comparison, we calculated Spearman's rank correlation and Pearson correlation between these measures and the performance based on the results of five different sampling methods with five different sampling sizes on each dataset (See \S\ref{subsec:overall:settings} for the details). Since lower bias and lower MSE are better for performance, we took their negative values in the correlation calculation. Moreover, we report averaged correlation across all datasets by Fisher z-transformation \citep{corey1998averaging}.

\paragraph{Results} Figure \ref{fig:correlation} shows the correlation between the measures and performance for each dataset. These results show that MSE for both overall and one best results correlates well with the performance for each dataset in Spearman's rank correlation, indicating the importance of considering quality estimation in MBR decoding, as in Eq.~(\ref{eq:bias-variance}). On the other hand, the decomposed bias and diversity show different tendencies. \textsc{One Best Bias}, which considers the one best result, is important for bias, whereas \textsc{Overall Diversity}, which considers overall results, is important for diversity. This result is reasonable given the assumption that MBR decoding aims to select texts that are close to human-created ones. Based on this assumption, we can say that diversity supports the selection by considering the importance of all hypotheses not covered by One Best Bias. In contrast to the results in Spearman's rank correlation, the coefficients of Pearson's correlation decrease. Based on these results, we can conclude that the measures, i.e., \textsc{One Best Bias}, \textsc{Overall Diversity}, \textsc{One Best MSE}, and \textsc{Overall MSE}, correlate well with the rank in performance, but they are challenging to capture subtle differences of values precisely. (See Appendix \ref{appendix:corr-bias-diversity} for further details.)

\begin{table*}[t]
\small
\centering
\resizebox{0.75\textwidth}{!}{
\begin{tabular}{llcccccccccc}
\toprule
                                &   & \multicolumn{5}{c}{WMT19 En-De}  & \multicolumn{5}{c}{WMT19 En-Ru}  \\
                                \cmidrule{1-2}\cmidrule(l){3-7}\cmidrule(l){8-12}
                            \multicolumn{2}{c}{Num. of Samples} & 4    & 8    & 16   & 32   & 64   & 4    & 8    & 16   & 32   & 64   \\\cmidrule{1-2}\cmidrule(l){3-7}\cmidrule(l){8-12}
\multirow{4}{*}{Num. of Models}  & 1 & 85.7          & 85.9          & 85.9          & 85.9          & 85.9 & \textbf{87.4} & 87.4          & 87.5          & 87.5          & 87.5          \\\cmidrule{2-2}\cmidrule(l){3-7}\cmidrule(l){8-12}
 & 2 & 85.7          & \textbf{86.0} & \textbf{86.0} & 85.9          & 85.9 & \textbf{87.4} & 87.4          & 87.5          & 87.5          & \textbf{87.6} \\
 & 4 & \textbf{85.8} & \textbf{86.0} & \textbf{86.0} & \textbf{86.0} & 86.0 & \textbf{87.4} & 87.4          & 87.5          & 87.5          & \textbf{87.6} \\
 & 8 & \textbf{85.8} & \textbf{86.0} & \textbf{86.0} & \textbf{86.0} & \textbf{86.1} & \textbf{87.4} & \textbf{87.5} & \textbf{87.6} & \textbf{87.6} & \textbf{87.6} \\
\midrule
                                &   & \multicolumn{5}{c}{SAMSum}  & \multicolumn{5}{c}{XSum}  \\
                                \cmidrule{1-2}\cmidrule(l){3-7}\cmidrule(l){8-12}
                            \multicolumn{2}{c}{Num. of Samples} & 4    & 8    & 16   & 32   & 64   & 4    & 8    & 16   & 32   & 64   \\\cmidrule{1-2}\cmidrule(l){3-7}\cmidrule(l){8-12}
\multirow{4}{*}{Num. of Models} & 1 & 28.6          & 29.1          & 29.5          & 29.5          & 29.7          & \textbf{54.2} & 55.2          & 55.7          & 56.0          & 56.1          \\\cmidrule{2-2}\cmidrule(l){3-7}\cmidrule(l){8-12}
 & 2 & 28.8          & \textbf{29.6} & \textbf{29.9} & \textbf{29.9} & 30.1          & \textbf{54.2} & 55.2          & 55.7          & \textbf{56.1} & \textbf{56.2} \\
 & 4 & \textbf{28.7} & 29.5          & \textbf{29.9} & 29.8          & \textbf{30.2} & \textbf{54.2} & 55.2          & \textbf{55.8} & \textbf{56.1} & \textbf{56.2} \\
 & 8 & \textbf{28.7} & 29.5          & 29.8          & \textbf{29.9} & 30.1          & \textbf{54.3} & \textbf{55.3} & \textbf{55.8} & \textbf{56.1} & \textbf{56.2} \\
\midrule
                                &   & \multicolumn{5}{c}{MSCOCO}  & \multicolumn{5}{c}{NoCaps}  \\
                                \cmidrule{1-2}\cmidrule(l){3-7}\cmidrule(l){8-12}
                            \multicolumn{2}{c}{Num. of Samples} & 4    & 8    & 16   & 32   & 64   & 4    & 8    & 16   & 32   & 64   \\\cmidrule{1-2}\cmidrule(l){3-7}\cmidrule(l){8-12}
\multirow{4}{*}{Num. of Models}  & 1 & \textbf{54.9} & 55.8          & 56.3          & 56.5          & 56.8          & 42.9          & 45.3          & 46.8          & 47.8          & 48.6          \\\cmidrule{2-2}\cmidrule(l){3-7}\cmidrule(l){8-12}
 & 2 & \textbf{54.9} & 55.8          & 56.4          & 56.6          & 56.8          & 43.2          & 45.6          & 47.2          & 48.3          & 48.9          \\
 & 4 & \textbf{54.9} & \textbf{56.0} & 56.4          & 56.7          & \textbf{56.9} & 43.3          & 45.6          & 47.3          & 48.4          & \textbf{49.0} \\
 & 8 & \textbf{54.9} & \textbf{56.0} & \textbf{56.5} & \textbf{56.8} & \textbf{56.9} & \textbf{43.5} & \textbf{45.7} & \textbf{47.4} & \textbf{48.5} & \textbf{49.0}\\
\bottomrule
\end{tabular}
}
\caption{Results of MAMBR with ancestral sampling. Bold font indicates the best result.}
\label{tab:mambr_ancestral_full}
\end{table*}

\footnotetext{We used Student's t-test \citep{student1908probable}.} 

\subsection{Bias and Diversity Trade-off}
\label{subsec:experiments:bias-diversity}

To investigate the bias-diversity trade-off in more detail, we followed the setup described in \S\ref{subsec:corr:bias:diversity}. We plotted the results for each dataset using different sampling methods in Figure \ref{fig:bias_diversity:full}. The results support the bias-diversity trade-off shown in Theorems \ref{theorem:bias-variance-tradeoff-1} and \ref{theorem:bias-variance-tradeoff-2}. As a case study, while ancestral sampling exhibits the highest bias, except in the case of the SAMSum dataset, it sometimes outperforms other sampling methods owing to its greater diversity. Focusing on top-k sampling, which has the lowest bias, again excluding the SAMSum dataset, we can observe that the reduction in bias tends to limit the increase in diversity. This finding supports our previously noted bias-diversity trade-off in MBR decoding. However, as evidenced by the performance of beam decoding, which has the lowest diversity, the importance of bias and diversity varies depending on the target dataset. Therefore, while our theoretical analysis effectively explains the performance tendencies in MBR decoding, it remains essential to consider task-specific features carefully to achieve further performance improvements. (See Appendix \ref{appendix:bias-diversity-tradeoff} for further details.)

\subsection{Inference Scaling Laws}
\label{subsec:inference_scaling_laws}

Figure \ref{fig:bias_diversity:full} indicates improved performance by increasing the sample size, in line with Theorems \ref{theorem:submodularity:terms} and \ref{theorem:submodularity:bounds}. Since MBR decoding only considers the pairs of a pseudo-reference and hypothesis for scoring, the interaction between pseudo-references is not directly considered. This goes along with the assumption of these theorems, the conditional independence of the scores of utility functions. This is because Figure \ref{fig:bias_diversity:full} shows the performance improvement that our theoretical analysis can explain. Note that there is a possibility that inference methods with interaction between samples, such as universal self-consistency, do not show performance improvement like this due to the same reason.

\subsection{Effectiveness of Metric-augmented MBR}
\label{subsec:experiments:mambr}

We investigate the possibility of improving the performance of MAMBR in Eq.~(\ref{eq:mambr}) by changing the automatic evaluation metric's model parameters.

\paragraph{Settings} To prepare the set of model parameters, we trained eight models by varying their initial seeds. We trained \texttt{Unbabel/wmt22-comet-da} on the Direct Assessments (DA) task \citep{graham-etal-2013-continuous}, using the WMT 2017 to 2020 datasets \citep{bojar-etal-2017-findings,bojar-etal-2018-findings,barrault-etal-2019-findings,barrault-etal-2020-findings} for training and the WMT 2021 dataset \citep{akhbardeh-etal-2021-findings} for validation in COMET. Additionally, we trained \texttt{microsoft/deberta-large} on the MNLI dataset from GLUE \citep{wang-etal-2018-glue} for BERTScore. During inference, to control model diversity, we selected the top-$n$ models based on their proximity to the median validation scores, with $n$ chosen from {1, 2, 4, 8}. For the generation, we used ancestral sampling.

\paragraph{Results} Tables~\ref{tab:mambr_ancestral_full} show the MAMBR results. We observe performance improvement as the number of models increases. This suggests that MAMBR can improve performance by enhancing the diversity of evaluation metrics along with our theoretical insights. (Appendix \ref{appendix:bias-diversity-mambr} includes further details.)

\section{Conclusion}

This work provides a unified theoretical interpretation of Minimum Bayes Risk (MBR) decoding through the lens of bias-diversity decomposition. By decomposing the errors in quality estimation in MBR decoding into bias and diversity, we highlight the trade-off between improving these two factors, with an emphasis on the benefits of increasing diversity, which is behind the inference scaling laws. Our theoretical insights align with previous empirical results, and we further investigate aspects not covered by these empirical findings through the introduction of the pseudo-bias metric and MAMBR decoding. Experimental results across multiple tasks demonstrate the validity of our theoretical findings and the effectiveness of our approach in improving text generation quality. These findings bridge the gap between empirical observations and theoretical understanding of MBR decoding, offering new insights for optimizing text generation.

\section{Limitation}
\label{app:limitation}

Unlike the decomposition based on information-theoretic diversity, the bias-diversity decomposition for MBR decoding does not theoretically explain the diversity of the hypotheses and their aligned model-side behaviors. Corresponding to this limitation, we conduct a limited empirical analysis presented in Appendix \ref{app:results:hyp:sampling}, similar to previous works \citep{eikema-aziz-2020-map, fernandes-etal-2022-quality, freitag-etal-2023-epsilon}.

\section{Ethical Consideration}

We used GPT-4o from OpenAI in writing to check grammatical errors.

\section*{Acknowledgments}
We are grateful to Graham Neubig for introducing his team's work \citep{bertsch-etal-2023-mbr}, which reveals MBR decoding as a fundamental approach covering various decoding methods. The existence of this paper motivated us to explore MBR decoding, as we have done in this work. This work was supported by JSPS KAKENHI Grant Number JP23H03458.

\bibliography{custom}

\newpage
\onecolumn
\appendix

\section{Proofs}
\subsection{Proof for Theorem \ref{theorem:bias-variance}}
\label{appendix:proof-bias-variance}
First, we can decompose $(\hat{u}_{i} - \bar{u}_{i})^2$ as follows:
\begin{align}
&(\hat{u}_{i} - \bar{u}_{i})^2\\
=&(\hat{u}_{i})^2 - 2\hat{u}_{i}\bar{u}_{i} + (\bar{u}_{i})^2\\
=&(\hat{u}_{i})^2 - 2\hat{u}_{i}\bar{u}_{i} + 2(\bar{u}_{i})^2 - (\bar{u}_{i})^2\\
=&(\hat{u}_{i})^2 - 2\hat{u}_{i}\bar{u}_{i} + 2\bar{u}_{i}\bar{u}_{i} - (\bar{u}_{i})^2\\
=&(\hat{u}_{i})^2 - \frac{1}{|\mathcal{Y}|}\sum_{j=1}^{|\mathcal{Y}|}2\hat{u}_{i}u_{i}^{j} + \frac{1}{|\mathcal{Y}|}\sum_{j=1}^{|\mathcal{Y}|}2\bar{u}_{i}u_{i}^{j} - (\bar{u}_{i})^2\\
=&\frac{1}{|\mathcal{Y}|}\sum_{j=1}^{|\mathcal{Y}|}(\hat{u}_{i})^2 - \frac{1}{|\mathcal{Y}|}\sum_{j=1}^{|\mathcal{Y}|}2\hat{u}_{i}u_{i}^{j} + \frac{1}{|\mathcal{Y}|}\sum_{j=1}^{|\mathcal{Y}|}2\bar{u}_{i}u_{i}^{j} - \frac{1}{|\mathcal{Y}|}\sum_{j=1}^{|\mathcal{Y}|}(\bar{u}_{i})^2\\
=&\frac{1}{|\mathcal{Y}|}\sum_{j=1}^{|\mathcal{Y}|}((\hat{u}_{i})^2 - 2\hat{u}_{i}u_{i}^{j} + 2\bar{u}_{i}u_{i}^{j} - (\bar{u}_{i})^2)\\
=&\frac{1}{|\mathcal{Y}|}\sum_{j=1}^{|\mathcal{Y}|}((\hat{u}_{i})^2 - 2\hat{u}_{i}u_{i}^{j} + (u_{i}^{j})^2 - (u_{i}^{j})^2 + 2\bar{u}_{i}u_{i}^{j} - (\bar{u}_{i})^2)\\
=&\frac{1}{|\mathcal{Y}|}\sum_{j=1}^{|\mathcal{Y}|}((\hat{u}_{i})^2 - 2\hat{u}_{i}u_{i}^{j} + (u_{i}^{j})^2 - ((u_{i}^{j})^2 - 2\bar{u}_{i}u_{i}^{j} + (\bar{u}_{i})^2))\\
=&\frac{1}{|\mathcal{Y}|}\sum_{j=1}^{|\mathcal{Y}|}((\hat{u}_{i} - u_{i}^{j})^2 - (\bar{u}_{i} - u_{i}^{j})^2)\\
=& \frac{1}{|\mathcal{Y}|}\sum_{j=1}^{|\mathcal{Y}|}(\hat{u}_{i} - f_{\mathbf{\theta}}(h_i,y_j))^2 - \frac{1}{|\mathcal{Y}|}\sum_{j=1}^{|\mathcal{Y}|}(\bar{u}_{i} - f_{\mathbf{\theta}}(h_i,y_j))^2
\label{eq:app:decompose:each}
\end{align}
By utilizing Eq.~(\ref{eq:app:decompose:each}), we can further decompose $MSE(\hat{\textbf{u}}, \bar{\textbf{u}})$ as follows:
\begin{align}
&MSE(\hat{\textbf{u}}, \bar{\textbf{u}})\\
=&\frac{1}{|\mathcal{H}|}\sum_{i=1}^{|\mathcal{H}|}(\hat{u}_{i} - \bar{u}_{i})^2\\
=&\frac{1}{|\mathcal{H}|}\sum_{i=1}^{|\mathcal{H}|}\Bigl(\frac{1}{|\mathcal{Y}|}\sum_{j=1}^{|\mathcal{Y}|}(\hat{u}_{i} - f_{\mathbf{\theta}}(h_i,y_j))^2 - \frac{1}{|\mathcal{Y}|}\sum_{j=1}^{|\mathcal{Y}|}(\bar{u}_{i} - f_{\mathbf{\theta}}(h_i,y_j))^2\Bigr)\\
=&	\underbrace{\frac{1}{|\mathcal{H}|}\sum_{i=1}^{|\mathcal{H}|}\frac{1}{|\mathcal{Y}|}\sum_{j=1}^{|\mathcal{Y}|}(\hat{u}_{i} - f_{\mathbf{\theta}}(h_i,y_j))^2}_{\text{Bias}} - \underbrace{\frac{1}{|\mathcal{H}|}\sum_{i=1}^{|\mathcal{H}|}\frac{1}{|\mathcal{Y}|}\sum_{j=1}^{|\mathcal{Y}|}(\bar{u}_{i} - f_{\mathbf{\theta}}(h_i,y_j))^2}_{\text{Diversity}}
\label{eq:app:decompose:all}
\end{align}
Finally, Theorem~\ref{theorem:bias-variance} is proved.

\subsection{Proof for Theorem \ref{theorem:bias-variance-tradeoff-1}}
\label{appendix:proof-bias-variance-tradeoff-1}

In Eq.~(\ref{eq:bias-variance}), when \emph{Bias} becomes zero, $f_{\mathbf{\theta}}(h_i,y_j)$ becomes $\hat{u}_{i}$ in any $j$. Because $\bar{u}_{i}$ is an average of $f_{\mathbf{\theta}}(h_i,y_j)$ for all $j$, $\bar{u}_{i}$ becomes $\hat{u}_{i}$ in this condition. Since that means $(\bar{u}_{i} - f_{\mathbf{\theta}}(h_i,y_j))^2$ becomes zero and then \emph{Diversity} becomes zero. Therefore, Theorem~\ref{theorem:bias-variance-tradeoff-1} is proved.

\subsection{Proof for Theorem \ref{theorem:bias-variance-tradeoff-2}}
\label{appendix:proof-bias-diversity-dependence}
See \citet{JMLR:v6:brown05a} for the proof of the decomposition. Since this decomposition is applicable when Eq.~(\ref{eq:bias-variance}) holds, we can apply this decomposition to our analysis.

\subsection{Proof for Theorem \ref{theorem:bias-variance-general}}
\label{appendix:proof-it-diversity}
See \citet{zhou2010} for the decomposition based on the information-theoretic diversity. Since there is no restriction on the variables used, we can apply their decomposition to our analysis.

\subsection{Proof for Theorem \ref{theorem:monotonicity-terms}} 
\label{appendix:proof-monotonicity-terms}

See \citet{kamigaito2025arxiv} for the proof. Since the monotonicity is generally applicable without limitation, we can apply their proof to our analysis.

\subsection{Proof for Theorem \ref{theorem:monotonicity-bounds}}
\label{appendix:monotonicity-bounds}

See \citet{kamigaito2025arxiv} for the proof. Similar to Appendix \ref{appendix:proof-monotonicity-terms}, we can apply their proof to our analysis.

\subsection{Proof for Theorem \ref{theorem:submodularity:terms}}
\label{appendix:submodularity:terms}

See \citet{kamigaito2025arxiv} for the proof. Since conditional independence is an assumption, we can apply their proof to our analysis.

\subsection{Proof for Theorem \ref{theorem:submodularity:bounds}}
\label{appendix:submodularity:bounds}

See \citet{kamigaito2025arxiv} for the proof. Similar to Appendix \ref{appendix:submodularity:terms}, we can apply their proof to our analysis.

\section{Interpretation as Ensemble Learning}
\label{appendix:interpretation:ensemble-learning}

When $|\mathcal{Y}|$ is large enough to satisfy the law of large numbers, we can induce the following expectation in MBR decoding by using a model's prediction, $P(y|x)$:
\begin{align}
   & \argmax_{h \in \mathcal{H}}\sum_{y \in \Omega}f_{\mathbf{\theta}}(h,y)P(y|x) \label{eq:mbr:ensemble-learning:ideal}
   \\
   = & \argmax_{h \in \mathcal{H}}\mathbb{E}_{P(y|x)}[f_{\mathbf{\theta}}(h,y)] \label{eq:mbr:ensemble-learning:expectation}\\
   \approx &\argmax_{h \in \mathcal{H}}\frac{1}{|\mathcal{Y}|}\sum_{y \in \mathcal{Y}}f_{\mathbf{\theta}}(h,y),\:\:\:y_1, \cdots, y_{|\mathcal{Y}|} \sim P(y|x) \label{eq:mbr:ensemble-learning:sampling}
\end{align}
Since this expectation is based on $P(y|x)$, we can understand the importance of increasing the number of pseudo-references to induce a reliable $P(y|x)$.

\begin{theorem}
When $f_{\mathbf{\theta}}(h,y)$ is normalized as a probability $P_{\mathbf{\theta}}(h|y)$, Eq.~(\ref{eq:mbr:ensemble-learning:ideal}) is equivalent to the Bayes Optimal Classifier (BOC) in \citet{mitchell1997introduction}.
\label{theorem:is:boc}
\end{theorem}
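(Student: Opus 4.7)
The plan is to recognize that, once $f_{\theta}(h,y)$ is normalized to $P_{\theta}(h|y)$, Eq.~(\ref{eq:mbr:ensemble-learning:ideal}) becomes syntactically identical to the Bayes Optimal Classifier (BOC) rule from \citet{mitchell1997introduction}, so the proof reduces to exhibiting a term-by-term correspondence between the two formulations.

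First, I would recall Mitchell's definition: given training data $D$, a hypothesis space $H$, and a set of class values $V$, BOC outputs
\[
v^{\star} = \argmax_{v \in V} \sum_{\tilde{h} \in H} P(v \mid \tilde{h})\, P(\tilde{h} \mid D),
\]
where I have deliberately written $\tilde{h}$ to avoid collision with the MBR candidate symbol $h$. Next, I would substitute the normalization assumption into the ideal MBR objective to obtain
\[
\argmax_{h \in \mathcal{H}} \sum_{y \in \Omega} P_{\theta}(h \mid y)\, P(y \mid x).
\]

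Then I would set up the identification: the MBR candidate $h \in \mathcal{H}$ plays the role of the class value $v \in V$; the pseudo-reference $y \in \Omega$ plays the role of the BOC hypothesis $\tilde{h} \in H$; the input sequence $x$ plays the role of the data $D$; the normalized utility $P_{\theta}(h \mid y)$ plays the role of the class-conditional $P(v \mid \tilde{h})$; and the model's predicted probability $P(y \mid x)$ plays the role of the posterior $P(\tilde{h} \mid D)$. With this mapping, the two argmax expressions coincide, and the equivalence is immediate.

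The only real obstacle is notational hygiene: the letter $h$ denotes a candidate output in the MBR formulation but a hypothesis in Mitchell's formulation, and the word ``hypothesis'' is similarly overloaded. I would therefore devote the bulk of the write-up to pinning down this correspondence unambiguously (for instance by introducing the renamed symbol $\tilde{h}$ as above), after which no further calculation is needed — the claim follows by inspection of the two argmax expressions.
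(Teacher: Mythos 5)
Your proposal is correct and takes essentially the same approach as the paper: the paper's proof simply substitutes the normalization $f_{\mathbf{\theta}}(h,y)=P_{\mathbf{\theta}}(h|y)$ into Eq.~(\ref{eq:mbr:ensemble-learning:ideal}) and declares the resulting expression self-evidently the BOC rule. Your version merely makes explicit the term-by-term correspondence with Mitchell's notation that the paper leaves implicit, which is a useful clarification but not a different argument.
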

\begin{proof}
Self-evident by the following reformulation:
    \begin{equation}
        \argmax_{h \in \mathcal{H}}\sum_{y \in \Omega}f_{\mathbf{\theta}}(h,y)P(y|x) \\
        = \argmax_{h \in \mathcal{H}}\sum_{y \in \Omega}P_{\mathbf{\theta}}(h|y)P(y|x)
        \label{eq:boc}
    \end{equation}
\end{proof}
\begin{theorem}
When $f_{\mathbf{\theta}}(h,y)$ is normalized as a probability $P_{\mathbf{\theta}}(h|y)$, Eq.~(\ref{eq:mbr:ensemble-learning:sampling}) is equivalent to the Gibbs algorithm in \citet{mitchell1997introduction} that approximates BOC by sampling.
\label{theorem:is:gibbs}
\end{theorem}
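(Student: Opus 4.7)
The plan is to reduce Eq.~(\ref{eq:mbr:ensemble-learning:sampling}) under the normalization $f_{\mathbf{\theta}}(h,y) = P_{\mathbf{\theta}}(h|y)$ to a Monte Carlo estimator of the BOC expression established in Theorem~\ref{theorem:is:boc}, and then identify that estimator with Mitchell's Gibbs algorithm applied to the present conditioning structure.

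First, I would substitute the normalization into the $\argmax$ to obtain $\argmax_{h \in \mathcal{H}} \frac{1}{|\mathcal{Y}|} \sum_{y \in \mathcal{Y}} P_{\mathbf{\theta}}(h|y)$ with $y_1, \ldots, y_{|\mathcal{Y}|} \sim P(y|x)$ drawn independently. Second, I would appeal to the law of large numbers so that this empirical mean converges to $\mathbb{E}_{P(y|x)}[P_{\mathbf{\theta}}(h|y)] = \sum_y P_{\mathbf{\theta}}(h|y) P(y|x)$, which by Theorem~\ref{theorem:is:boc} is exactly the BOC objective in Eq.~(\ref{eq:boc}). Third, I would invoke Mitchell's definition of Gibbs, which approximates the BOC expectation by drawing samples from the relevant conditioning distribution rather than evaluating the full sum; the displayed expression is precisely this prescription with the conditioning variable $y$ drawn from $P(y|x)$, so the equivalence follows.

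The main obstacle I anticipate is the notational gap between Mitchell's textbook form of Gibbs — classically stated as sampling a single hypothesis from the posterior and using it as the classifier — and the $|\mathcal{Y}|$-sample empirical average appearing in Eq.~(\ref{eq:mbr:ensemble-learning:sampling}). I would resolve this by emphasizing that both procedures are Monte Carlo estimators of the same BOC expectation: the single-sample case $|\mathcal{Y}|=1$ recovers Mitchell's form verbatim, while the multi-sample empirical mean is the natural variance-reduced generalization that is consistent with Eq.~(\ref{eq:mbr:ensemble-learning:expectation}) in the limit $|\mathcal{Y}| \to \infty$. Once the correspondence between the sampling mechanism in our setting and Mitchell's is made explicit, the equivalence claimed in the theorem is an immediate corollary of Theorem~\ref{theorem:is:boc}.
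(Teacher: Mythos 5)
Your proposal is correct and follows essentially the same route as the paper, which simply substitutes $f_{\mathbf{\theta}}(h,y)=P_{\mathbf{\theta}}(h|y)$ into Eq.~(\ref{eq:mbr:ensemble-learning:sampling}) and declares the resulting sample average ``self-evidently'' the Gibbs approximation of BOC. Your extra care in reconciling Mitchell's single-sample Gibbs classifier with the $|\mathcal{Y}|$-sample empirical mean addresses a gap the paper leaves implicit, and is a welcome strengthening rather than a departure.
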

\begin{proof} Self-evident by the following reformulation:
    \begin{align}
        & \argmax_{h \in \mathcal{H}}\frac{1}{|\mathcal{Y}|}\sum_{y \in \mathcal{Y}}f_{\mathbf{\theta}}(h,y),\:\:\:y_1, \cdots, y_{|\mathcal{Y}|} \sim P(y|x) \\
        = & \argmax_{h \in \mathcal{H}}\sum_{y \in \mathcal{Y}}P_{\mathbf{\theta}}(h|y),\:\:\:y_1, \cdots, y_{|\mathcal{Y}|} \sim P(y|x)
        \label{eq:gibbs}
    \end{align}
\end{proof}
Hence, we can understand that MBR decoding represented as Eqs.~(\ref{eq:mbr:ensemble-learning:ideal}) and (\ref{eq:mbr:ensemble-learning:sampling}) approximates the ensemble learning method, BOC. In this interpretation, since $P(y|x)$ is a prior of BOC, we can also understand that MBR approximately uses the model-predicted probability as its prior.

When pruning unnecessary $y$ in the BOC formulation of Eq.~(\ref{eq:boc}), because the sum of $P_{\mathbf{\theta}}(h|y)$ for all $h$ is always 1, we can determine the importance of $y$ based solely on $P(y|x)$. Since we can arbitrarily choose $P(y|x)$ during sampling, we understand that pruning methods select the importance of each $y$ as a prior in BOC. Note that utility functions are not always normalized; therefore, there is a gap between this interpretation and the actual MBR decoding. Addressing this gap remains an open problem.

In practice, directly drawing samples from $P(y|x)$ is intractable. Therefore, we must use approximate search methods, which are commonly influenced by left-to-right decoding and threshold values. These factors can lead to unreachable states and biases, as seen in greedy or beam decoding and other sampling approaches. Letting $P'(y|x)$ denote the model's prediction with the approximate search, we can similarly induce the following expectation:
\begin{align}
&\argmax_{h \in \mathcal{H}}\frac{1}{|\mathcal{Y}|}\sum_{y \in \mathcal{Y}}f_{\mathbf{\theta}}(h,y),\:\:\:y_1, \cdots, y_{|\mathcal{Y}|} \sim P'(y|x) \\
    \approx &\argmax_{h \in \mathcal{H}}\mathbb{E}_{P'(y|x)}[f_{\mathbf{\theta}}(h,y)]
    \label{eq:mbr:ensemble-learning:skewed}
\end{align}
Unfortunately, due to $P'(y|x)$, Eq.~(\ref{eq:mbr:ensemble-learning:skewed}) deviates from Eq.~(\ref{eq:mbr:ensemble-learning:expectation}). To precisely predict Eq.~(\ref{eq:mbr:ensemble-learning:ideal}) using samples from $P'(y|x)$, we can consider the following theorem:
\begin{theorem}
When $|\mathcal{Y}|$ is large enough to satisfy the law of large numbers, by using importance sampling, we can induce Eq.~(\ref{eq:mbr:ensemble-learning:expectation}) from $P'(y|x)$.
\label{theorem:is:mbr}
\end{theorem}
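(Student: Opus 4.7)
The plan is to invoke the standard importance sampling identity to rewrite the expectation with respect to $P(y|x)$ as an expectation with respect to the actually sampleable distribution $P'(y|x)$, and then appeal to the law of large numbers exactly as in the original derivation from Eq.~(\ref{eq:mbr:ensemble-learning:expectation}) to Eq.~(\ref{eq:mbr:ensemble-learning:sampling}).

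First, I would write, for any fixed $h\in\mathcal{H}$,
\begin{equation*}
\mathbb{E}_{P(y|x)}[f_{\mathbf{\theta}}(h,y)] = \sum_{y} f_{\mathbf{\theta}}(h,y)\,P(y|x) = \sum_{y} f_{\mathbf{\theta}}(h,y)\,\frac{P(y|x)}{P'(y|x)}\,P'(y|x) = \mathbb{E}_{P'(y|x)}\!\left[f_{\mathbf{\theta}}(h,y)\,\frac{P(y|x)}{P'(y|x)}\right],
\end{equation*}
where the second equality requires the absolute continuity condition that $P'(y|x)>0$ whenever $P(y|x)>0$, which I would state as a side assumption (the search-based approximation still assigns positive probability to everything reachable by $P$). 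Next, because $y_1,\dots,y_{|\mathcal{Y}|}\sim P'(y|x)$ are i.i.d., the law of large numbers gives
\begin{equation*}
\frac{1}{|\mathcal{Y}|}\sum_{y\in\mathcal{Y}} f_{\mathbf{\theta}}(h,y)\,\frac{P(y|x)}{P'(y|x)} \;\xrightarrow[|\mathcal{Y}|\to\infty]{}\; \mathbb{E}_{P(y|x)}[f_{\mathbf{\theta}}(h,y)],
\end{equation*}
so for $|\mathcal{Y}|$ large enough, taking $\operatorname{argmax}_{h\in\mathcal{H}}$ over the weighted empirical mean recovers the same hypothesis that Eq.~(\ref{eq:mbr:ensemble-learning:expectation}) selects, which is what the theorem claims.

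The routine part is the algebraic step and the LLN invocation, which mirrors the unweighted case already used in the excerpt. The main obstacle, and the thing the proposal should acknowledge explicitly, is the support/absolute-continuity requirement: the approximate-search distribution $P'(y|x)$ must dominate $P(y|x)$, otherwise the importance ratio $P(y|x)/P'(y|x)$ is ill-defined on some part of the support and the identity fails. A secondary concern worth mentioning (though not needed for the equality in the limit) is that the variance of the importance-weighted estimator can be large when $P'$ differs substantially from $P$, so the convergence rate — and therefore the practical meaning of "$|\mathcal{Y}|$ large enough" — may be much worse than in vanilla Monte Carlo; this does not affect the asymptotic statement but should be flagged as the practical bottleneck, consistent with the paper's earlier remark that the gap between utility-based and normalized formulations remains an open problem.
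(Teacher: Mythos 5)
Your proposal is correct and follows essentially the same route as the paper's own proof in Appendix \ref{appendix:interpretation:ensemble-learning}: multiply and divide by $P'(y|x)$ to turn the expectation under $P(y|x)$ into an importance-weighted expectation under $P'(y|x)$, then approximate it by the sample average via the law of large numbers and take the $\operatorname{argmax}$. Your explicit statement of the absolute-continuity requirement ($P'(y|x)>0$ wherever $P(y|x)>0$) and the remark on estimator variance are sensible additions that the paper leaves implicit, but they do not change the argument.
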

\begin{proof}
\begin{align}
   & \argmax_{h \in \mathcal{H}}\mathbb{E}_{P(y|x)}[f_{\mathbf{\theta}}(h,y)]\\
   = & \argmax_{h \in \mathcal{H}}\sum_{y}P(y|x)f_{\mathbf{\theta}}(h,y)\\
   = & \argmax_{h \in \mathcal{H}}\sum_{y}P(y|x)f_{\mathbf{\theta}}(h,y)\frac{P'(y|x)}{P'(y|x)}\\
   = & \argmax_{h \in \mathcal{H}}\sum_{y}P'(y|x)f_{\mathbf{\theta}}(h,y)\frac{P(y|x)}{P'(y|x)}\\
   \approx & \argmax_{h \in \mathcal{H}}\sum_{y \in \mathcal{Y}}f_{\mathbf{\theta}}(h,y)\frac{P(y|x)}{P'(y|x)},\:\:\:y_1, \cdots, y_{|\mathcal{Y}|} \sim P'(y|x) 
   \label{eq:mbr:ensemble-learning:importance-sampling}
\end{align}
\end{proof}
Apart from the fact that even precisely calculating $P'(y|x)$ is also difficult, we can induce the following theorem:
\begin{theorem}
When $|\mathcal{Y}|$ is large enough to satisfy the law of large numbers and $P'(y|x)$ equals a discrete uniform distribution $\mathcal{U}(0,|\mathcal{Y}|)$, Eq.~(\ref{eq:mbr:ensemble-learning:importance-sampling}) is equivalent to Model-based MBR (MBMBR) of \citet{pmlr-v235-jinnai24a}.
\label{theorem:is:mbmbr}
\end{theorem}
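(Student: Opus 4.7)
The plan is to substitute the uniform proposal assumption into the importance-sampling estimator in Eq.~(\ref{eq:mbr:ensemble-learning:importance-sampling}) and then show, by pulling out constants that do not depend on the hypothesis $h$, that the resulting objective coincides with the MBMBR criterion of \citet{pmlr-v235-jinnai24a}. Since $|\mathcal{Y}|$ is already assumed large enough for the law of large numbers, I can treat the sample-based approximation as essentially equality in argmax (the remaining work is purely algebraic), and so the whole argument reduces to the behavior of the importance weight under the uniform proposal.

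First, I would substitute $P'(y|x) = 1/|\mathcal{Y}|$ into the importance weight $P(y|x)/P'(y|x)$, which collapses to $|\mathcal{Y}|\, P(y|x)$. Second, I would factor this constant outside the sum to obtain
\begin{equation*}
\argmax_{h \in \mathcal{H}}\; |\mathcal{Y}| \sum_{y \in \mathcal{Y}} f_{\mathbf{\theta}}(h,y)\, P(y|x).
\end{equation*}
Third, because $|\mathcal{Y}|$ is a positive scalar independent of $h$, it is absorbed by the argmax, yielding
\begin{equation*}
\argmax_{h \in \mathcal{H}}\; \sum_{y \in \mathcal{Y}} f_{\mathbf{\theta}}(h,y)\, P(y|x),
\end{equation*}
which is exactly the MBMBR objective: a utility average over the sampled pseudo-reference set $\mathcal{Y}$ reweighted by the model probabilities $P(y|x)$ rather than by empirical sampling frequency. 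This matching to MBMBR is what establishes the theorem.

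The main obstacle I anticipate is not the algebra but the definitional alignment with \citet{pmlr-v235-jinnai24a}: I need to confirm that their MBMBR indeed weights each $y \in \mathcal{Y}$ by $P(y|x)$ up to a constant, rather than by a self-normalized version $P(y|x)/\sum_{y' \in \mathcal{Y}} P(y'|x)$. If it is the self-normalized form, one extra step is required — observing that the normalizer $\sum_{y' \in \mathcal{Y}} P(y'|x)$ is also independent of $h$ and so drops out of the argmax — but the equivalence still goes through. A secondary concern is justifying that $P'(y|x) = \mathcal{U}(0,|\mathcal{Y}|)$ is admissible as an importance-sampling proposal: this requires that the uniform distribution cover the support of $P(y|x)$ on the set $\mathcal{Y}$, which holds by construction once $\mathcal{Y}$ is the set of samples actually used by MBMBR. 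Neither issue is substantive, so beyond pinning down the exact form of MBMBR in the cited work the proof is essentially a one-line substitution followed by constant-pulling.
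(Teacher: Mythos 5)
Your proposal is correct and follows essentially the same route as the paper's own proof: substitute the uniform proposal into the importance weight so that it becomes proportional to $P(y|x)$, drop the $h$-independent constant from the $\argmax$, and identify the resulting objective $\argmax_{h}\sum_{y\in\mathcal{Y}}f_{\mathbf{\theta}}(h,y)P(y|x)$ with MBMBR. Your extra remarks on self-normalization and support coverage are sound but not needed beyond what the paper does.
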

\begin{proof}
    \begin{align}
        & \argmax_{h \in \mathcal{H}}\sum_{y \in \mathcal{Y}}f_{\mathbf{\theta}}(h,y)\frac{P(y|x)}{P'(y|x)},\:\:\:y_1, \cdots, y_{|\mathcal{Y}|} \sim P'(y|x) \\
        = & \argmax_{h \in \mathcal{H}}\frac{1}{|\mathcal{Y}|}\sum_{y \in \mathcal{Y}}f_{\mathbf{\theta}}(h,y)P(y|x),\:\:\:y_1, \cdots, y_{|\mathcal{Y}|} \sim \mathcal{U}(0,|\mathcal{Y}|) \\
        = & \argmax_{h \in \mathcal{H}}\sum_{y \in \mathcal{Y}}f_{\mathbf{\theta}}(h,y)P(y|x),\:\:\:y_1, \cdots, y_{|\mathcal{Y}|} \sim \mathcal{U}(0,|\mathcal{Y}|) \label{eq:mbmbr}
    \end{align}
\end{proof}

From Theorem \ref{theorem:is:mbmbr}, we can understand that MBMBR is an effective approach when sampling methods are unreliable. Based on the interpretation from the viewpoint of BOC, Eq.~(\ref{eq:mbmbr}) estimates the importance for each $y$ through prior $P(y|x)$, which can be used for pruning $y$.

Even though our interpretation can explain the pruning of pseudo-references based on priors in BOC, pruning hypotheses is out of scope for this interpretation.

\section{Correlation of Bias and Diversity to Performance}
\label{appendix:corr-bias-diversity}

We further investigate whether our analysis in \S\ref{subsec:corr:bias:diversity} is consistent when metrics used in MBR decoding and performance evaluation are different.

\paragraph{Settings} Based on the inherited settings from \S\ref{subsec:corr:bias:diversity}, we changed the performance evaluation metrics, COMET and BERTScore to BLEURT \citep{sellam-etal-2020-bleurt} and chrF++ \citep{popovic-2015-chrf,popovic-2017-chrf}. We used BLEURT on single-sentence generation tasks, WMT19 En-De and En-Ru, XSum, MSCOCO, and NoCaps. Since SAMSum is a multiple-sentence generation task and BLEURT cannot handle it, we used chrF++ instead. 
\paragraph{Results}
Figure \ref{fig:correlation:others} shows the correlation. Similar to the results in \S\ref{subsec:corr:bias:diversity}, the measures, i.e., \textsc{One Best Bias}, \textsc{Overall Diversity}, and \textsc{One Best MSE} in Spearman's rank correlation correlate well with the rank in performance, even though these correlation values are degraded by different evaluation metrics from decoding time. The lower correlation values in Pearson's correlation than Spearman's rank correlation also show similar tendencies in \S\ref{subsec:corr:bias:diversity} and indicate the difficulty of precisely estimating the performance values from these measures. From these results, we can confirm that correlation tendencies are consistent when changing the performance evaluation metrics.

\begin{figure*}[t]
    \centering
    \includegraphics[width=\textwidth]{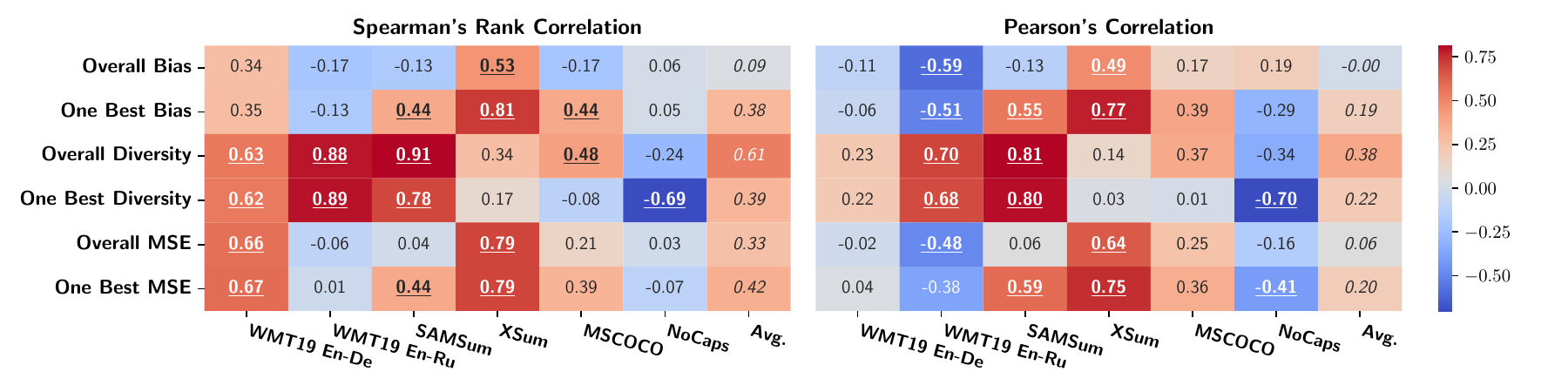}
    \caption{Correlation between measures in our decomposition and performance for each dataset when using different metrics in decoding and performance evaluation. The notations are the same as Figure \ref{fig:correlation}.}
    \label{fig:correlation:others}
\end{figure*}

\begin{figure*}[t]
    \centering
    \includegraphics[width=0.975\textwidth]{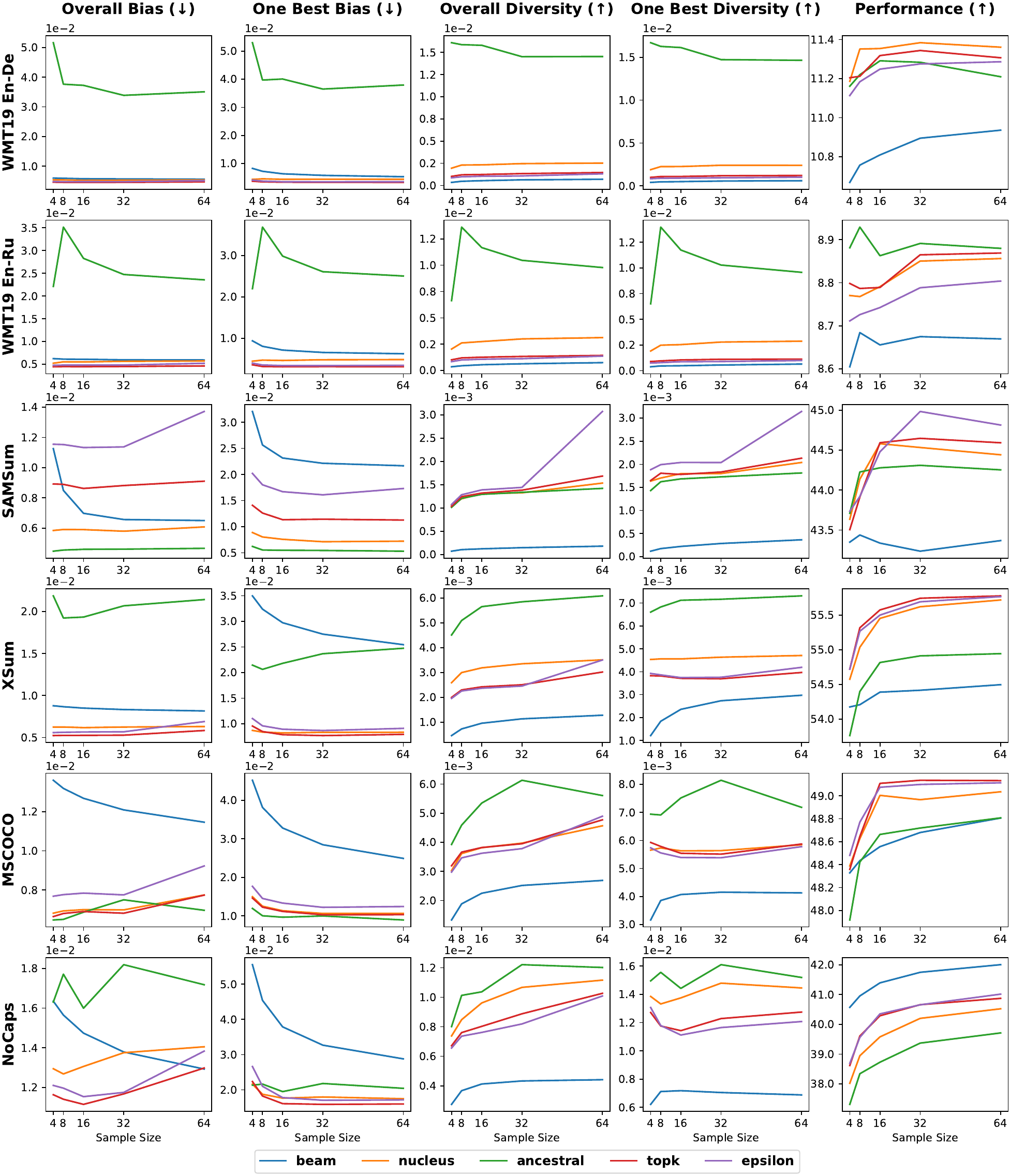}
    \caption{The relationship between bias, diversity, and performance in MBR decoding when using different metrics in decoding and performance evaluation. The notations are the same as Figure \ref{fig:bias_diversity:full}.}
    \label{fig:bias_diversity:full:others}
\end{figure*}

\section{Bias and Diversity Trade-off}
\label{appendix:bias-diversity-tradeoff}

Similar to Appendix \ref{appendix:corr-bias-diversity}, we further investigate whether our analysis in \S\ref{subsec:experiments:bias-diversity} is consistent when metrics used in MBR decoding and performance evaluation are different.

\paragraph{Settings} We inherited the setting of Appendix \ref{appendix:corr-bias-diversity}. Thus, COMET and BERTScore used in MBR decoding are replaced with BLEURT and chrF++ in performance evaluation.

\paragraph{Results} Figure \ref{fig:bias_diversity:full:others} shows the results. We can see the changed performances in the subfigures of the rightmost column. The entire tendencies of beam decoding are almost the same as Figure \ref{fig:bias_diversity:full}, excluding the case of the performance drop in SAMSum, whose evaluation metric is changed from BERTScore to chrF++. However, this behavior is reasonable considering the highest \texttt{One Best Bias} and lowest \texttt{Overall Diversity} of beam decoding in SAMSum. This result shows the possibility of adopting bias and diversity in a metric to estimate performance in other evaluation metrics. On the other hand, these relationships are not always consistent, as represented by the uncorrelated values on NoCaps that permit diversified generation, as shown by its 10 gold references.

\begin{table*}[t]
\small
\centering
\resizebox{0.88\textwidth}{!}{
\begin{tabular}{llrrrrrrrrrr}
\toprule
&   & \multicolumn{5}{c}{WMT19 En-De}  & \multicolumn{5}{c}{WMT19 En-Ru}  \\
\cmidrule{1-2}\cmidrule(l){3-7}\cmidrule(l){8-12}
\multicolumn{2}{c}{Num. of Samples} & \multicolumn{1}{c}{4}    & \multicolumn{1}{c}{8}    & \multicolumn{1}{c}{16}   & \multicolumn{1}{c}{32}   & \multicolumn{1}{c}{64}   & \multicolumn{1}{c}{4}    & \multicolumn{1}{c}{8}    & \multicolumn{1}{c}{16}   & \multicolumn{1}{c}{32}   & \multicolumn{1}{c}{64}   \\\cmidrule{1-2}\cmidrule(l){3-7}\cmidrule(l){8-12}
\multirow{4}{*}{Num. of Models}  & 1 & 238 & 211 & 209 & 221 & 274 & 214 & 212 & 216 & 236 & 282 \\\cmidrule{2-2}\cmidrule(l){3-7}\cmidrule(l){8-12}
 & 2 & 341 & 466 & 330 & 359 & 472 & 324 & 331 & 341 & 367 & 457 \\
 & 4 & 592 & 554 & 682 & 715 & 754 & 562 & 608 & 578 & 621 & 866 \\
 & 8 & 1,129 & 1,067 & 1,059 & 1,119 & 1,549 & 1,205 & 1,018 & 1,014 & 1,158 & 1,531 \\
\midrule
&   & \multicolumn{5}{c}{SAMSum}  & \multicolumn{5}{c}{XSum}  \\
\cmidrule{1-2}\cmidrule(l){3-7}\cmidrule(l){8-12}
\multicolumn{2}{c}{Num. of Samples} & \multicolumn{1}{c}{4}    & \multicolumn{1}{c}{8}    & \multicolumn{1}{c}{16}   & \multicolumn{1}{c}{32}   & \multicolumn{1}{c}{64}   & \multicolumn{1}{c}{4}    & \multicolumn{1}{c}{8}    & \multicolumn{1}{c}{16}   & \multicolumn{1}{c}{32}   & \multicolumn{1}{c}{64}   \\\cmidrule{1-2}\cmidrule(l){3-7}\cmidrule(l){8-12}
\multirow{4}{*}{Num. of Models} & 1 & 390 & 424 & 544 & 748 & 1,093  & 3,612 & 4,249 & 4,451 & 5,947 & 10,936 \\\cmidrule{2-2}\cmidrule(l){3-7}\cmidrule(l){8-12}
 & 2 & 592 & 717 & 902 & 1,300 & 1,945 & 5,267 & 5,978 & 7,201 & 10,478 & 20,404 \\
 & 4 & 1,149 & 1,122 & 1,388 & 1,994 & 4,005 & 8,805 & 11,584 & 15,265 & 19,438 & 38,137 \\
 & 8 & 1,838 & 2,100 & 2,626 & 3,764 & 7,821 & 16,397 & 19,392 & 28,844 & 37,442 & 60,800 \\
\midrule
&   & \multicolumn{5}{c}{MSCOCO}  & \multicolumn{5}{c}{NoCaps}  \\
\cmidrule{1-2}\cmidrule(l){3-7}\cmidrule(l){8-12}
\multicolumn{2}{c}{Num. of Samples} & \multicolumn{1}{c}{4}    & \multicolumn{1}{c}{8}    & \multicolumn{1}{c}{16}   & \multicolumn{1}{c}{32}   & \multicolumn{1}{c}{64}   & \multicolumn{1}{c}{4}    & \multicolumn{1}{c}{8}    & \multicolumn{1}{c}{16}   & \multicolumn{1}{c}{32}   & \multicolumn{1}{c}{64}   \\\cmidrule{1-2}\cmidrule(l){3-7}\cmidrule(l){8-12}
\multirow{4}{*}{Num. of Models}  & 1 & 1,209 & 1,143 & 1,582 & 1,906 & 2,849  & 1,235 & 1,179 & 1,406 & 1,831 & 3,453 \\\cmidrule{2-2}\cmidrule(l){3-7}\cmidrule(l){8-12}
 & 2 & 1,739 & 1,772 & 2,580 & 3,261 & 5,144 & 1,574 & 1,647 & 2,172 & 2,935 & 5,636 \\
 & 4 & 2,439 & 3,035 & 4,541 & 5,986 & 9,758 & 2,644 & 3,330 & 4,170 & 5,201 & 8,916 \\
 & 8  & 4,316 & 6,099 & 7,557 & 11,397 & 19,108 & 3,784 & 5,432 & 6,666 & 10,210 & 17,052\\
\bottomrule
\end{tabular}
}
\caption{Time usages (seconds) by MAMBR in each setting with ancestral sampling.
}
\label{tab:mambr_time}
\end{table*}
\begin{table*}[t]
\small
\centering
\resizebox{0.88\textwidth}{!}{
\begin{tabular}{llcccccccccc}
\toprule
&   & \multicolumn{5}{c}{WMT19 En-De}  & \multicolumn{5}{c}{WMT19 En-Ru}  \\
\cmidrule{1-2}\cmidrule(l){3-7}\cmidrule(l){8-12}
\multicolumn{2}{c}{Num. of Samples} & 4    & 8    & 16   & 32   & 64   & 4    & 8    & 16   & 32   & 64   \\\cmidrule{1-2}\cmidrule(l){3-7}\cmidrule(l){8-12}
\multirow{4}{*}{Num. of Models}  & 1 & 1,501 & 1,501 & 1,501 & 1,501 & 1,528 & 1,533 & 1,533 & 1,533 & 1,533 & 1,566 \\\cmidrule{2-2}\cmidrule(l){3-7}\cmidrule(l){8-12}
 & 2  & 2,610 & 2,610 & 2,610 & 2,610 & 2,636 & 2,642 & 2,642 & 2,642 & 2,642 & 2,674 \\
 & 4   & 4,828 & 4828 & 4,828 & 4,828 & 4,855 & 4,859 & 4,859 & 4,859 & 4,859 & 4,891 \\
 & 8  & 9,261 & 9,261 & 9,261 & 9,261 & 9,289 & 9,293 & 9,293 & 9,293 & 9,293 & 9,326\\
\midrule
&   & \multicolumn{5}{c}{SAMSum}  & \multicolumn{5}{c}{XSum}  \\
\cmidrule{1-2}\cmidrule(l){3-7}\cmidrule(l){8-12}
\multicolumn{2}{c}{Num. of Samples} & 4    & 8    & 16   & 32   & 64   & 4    & 8    & 16   & 32   & 64   \\\cmidrule{1-2}\cmidrule(l){3-7}\cmidrule(l){8-12}
\multirow{4}{*}{Num. of Models} & 1 & 2,525 & 2,529 & 2,525 & 2,526 & 2,524 & 2,104 & 2,102 & 2,106 & 2,102 & 2,106 \\\cmidrule{2-2}\cmidrule(l){3-7}\cmidrule(l){8-12}
 & 2 & 3,734 & 3,738 & 3,734 & 3,736 & 3,734 & 3,313 & 3,311 & 3,315 & 3,311 & 3,315 \\
 & 4 & 6,152 & 6,156 & 6,153 & 6,154 & 6,152 & 5,732 & 5,730 & 5,733 & 5,730 & 5,733 \\
 & 8 & 10,990 & 10,994 & 10,990 & 10,992 & 10,990 & 10,569 & 10,567 & 10,570 & 10,567 & 10,571 \\
\midrule
&   & \multicolumn{5}{c}{MSCOCO}  & \multicolumn{5}{c}{NoCaps}  \\
\cmidrule{1-2}\cmidrule(l){3-7}\cmidrule(l){8-12}
\multicolumn{2}{c}{Num. of Samples} & 4    & 8    & 16   & 32   & 64   & 4    & 8    & 16   & 32   & 64   \\\cmidrule{1-2}\cmidrule(l){3-7}\cmidrule(l){8-12}
\multirow{4}{*}{Num. of Models}  & 1 & 1,621 & 1,621 & 1,621 & 1,621 & 1,621 & 1,654 & 1,654 & 1,655 & 1,654 & 1,655 \\\cmidrule{2-2}\cmidrule(l){3-7}\cmidrule(l){8-12}
 & 2 & 2,831 & 2,831 & 2,831 & 2,831 & 2,831 & 2,863 & 2,864 & 2,863 & 2,864 & 2,863\\
 & 4 & 5,249 & 5,249 & 5,249 & 5,249 & 5,249 & 5,282 & 5,282 & 5,282 & 5,283 & 5,282 \\
 & 8 & 10,086 & 10,086 & 10,086 & 10,086 & 10,086 & 10,119 & 10,119 & 10,119 & 10,120 & 10,119 \\
\bottomrule
\end{tabular}
}
\caption{GPU memory usages (MB) by MAMBR in each setting with ancestral sampling.
}
\label{tab:mambr_memory}
\end{table*}
\begin{table*}[t]
\small
\centering
\resizebox{0.88\textwidth}{!}{
\begin{tabular}{llcccccccccc}
\toprule
                                &   & \multicolumn{5}{c}{WMT19 En-De}  & \multicolumn{5}{c}{WMT19 En-Ru}  \\
                                \cmidrule{1-2}\cmidrule(l){3-7}\cmidrule(l){8-12}
                            \multicolumn{2}{c}{Num. of Samples} & 4    & 8    & 16   & 32   & 64   & 4    & 8    & 16   & 32   & 64   \\
                                \cmidrule{1-2}\cmidrule(l){3-7}\cmidrule(l){8-12}
\multirow{4}{*}{Num. of Models} & 1 & 85.9          & 86.1          & 86.2          & 86.2          & 86.2          & 87.3          & 87.6          & \textbf{87.7} & \textbf{87.7} & 87.7          \\\cmidrule{2-2}\cmidrule(l){3-7}\cmidrule(l){8-12}
& 2 & \textbf{86.0} & 86.1          & 86.1          & 86.2          & 86.3          & 87.3          & 87.6          & \textbf{87.7} & \textbf{87.7} & 87.7          \\
& 4 & \textbf{86.0} & 86.1          & 86.2          & 86.2          & 86.3          & \textbf{87.4} & 87.6          & \textbf{87.7} & \textbf{87.7} & 87.7          \\
& 8 & \textbf{86.0} & \textbf{86.2} & \textbf{86.3} & \textbf{86.3} & \textbf{86.4} & \textbf{87.4} & \textbf{87.7} & \textbf{87.7} & \textbf{87.7} & \textbf{87.8} \\
\midrule
                                &   & \multicolumn{5}{c}{SAMSum}  & \multicolumn{5}{c}{XSum}  \\
                                \cmidrule{1-2}\cmidrule(lr){3-7}\cmidrule(lr){8-12}
                            \multicolumn{2}{c}{Num. of Samples} & 4    & 8    & 16   & 32   & 64   & 4    & 8    & 16   & 32   & 64   \\\cmidrule{1-2}\cmidrule(lr){3-7}\cmidrule(lr){8-12}
\multirow{4}{*}{Num. of Models} & 1 & 27.5          & 27.9          & 28.3          & 28.4          & 28.5          & \textbf{54.9} & 55.7          & 56.1 & 56.3 & 56.4          \\\cmidrule{2-2}\cmidrule(l){3-7}\cmidrule(l){8-12}
& 2 & \textbf{27.7} & 28.1          & 28.5          & \textbf{28.6} & \textbf{28.7} & \textbf{54.9} & 55.7          & 56.1 & 56.3 & \textbf{56.5} \\
& 4 & \textbf{27.7} & \textbf{28.2} & 28.5          & \textbf{28.6} & 28.6          & \textbf{54.9} & 55.7          & 56.1 & \textbf{56.4} & \textbf{56.5} \\
& 8 & 27.6 & \textbf{28.2} & \textbf{28.6} & \textbf{28.6} & \textbf{28.7} & \textbf{54.9} & \textbf{55.8} & \textbf{56.2} & \textbf{56.4} & \textbf{56.5} \\
\midrule
                                &   & \multicolumn{5}{c}{MSCOCO}  & \multicolumn{5}{c}{NoCaps}  \\
                                \cmidrule{1-2}\cmidrule(lr){3-7}\cmidrule(lr){8-12}
                            \multicolumn{2}{c}{Num. of Samples} & 4    & 8    & 16   & 32   & 64   & 4    & 8    & 16   & 32   & 64   \\\cmidrule{1-2}\cmidrule(lr){3-7}\cmidrule(lr){8-12}
\multirow{4}{*}{Num. of Models}  & 1 & 55.2          & 55.9          & \textbf{56.3} & 56.5          & 56.7          & 44.4 & 46.7          & 48.5 & 49.1 & 49.5          \\\cmidrule{2-2}\cmidrule(l){3-7}\cmidrule(l){8-12}
 & 2 & 55.2 & 55.9          & \textbf{56.3} & 56.5 & 56.7 & 44.4 & 46.8          & 48.6 & 49.2 & 49.6 \\
 & 4 & 55.2 & 56.0 & \textbf{56.3} & \textbf{56.6} & \textbf{56.8} & 44.5 & 46.9          & \textbf{48.7} & 49.3 & \textbf{49.7} \\
 & 8 & \textbf{55.3} & \textbf{56.1} & \textbf{56.3} & \textbf{56.6} & \textbf{56.8} & \textbf{44.6} & \textbf{47.0} & \textbf{48.7} & \textbf{49.4} & \textbf{49.7}\\
\bottomrule
\end{tabular}
}
\caption{Results of MAMBR with epsilon sampling. Notations are the same as Table \ref{tab:mambr_ancestral_full}.
}
\label{tab:mambr_epsilon_full}
\end{table*}
\begin{table*}[t]
\small
\centering
\resizebox{0.88\textwidth}{!}{
\begin{tabular}{llcccccccccc}
\toprule
                                &   & \multicolumn{5}{c}{WMT19 En-De}  & \multicolumn{5}{c}{WMT19 En-Ru}  \\
                                \cmidrule{1-2}\cmidrule(l){3-7}\cmidrule(l){8-12}
                            \multicolumn{2}{c}{Num. of Samples} & 4    & 8    & 16   & 32   & 64   & 4    & 8    & 16   & 32   & 64   \\\cmidrule{1-2}\cmidrule(l){3-7}\cmidrule(l){8-12}
\multirow{4}{*}{Num. of Models}  & 1 & 85.2          & 85.4          & 85.6          & 85.7          & 85.8          & 86.5          & 86.8          & \textbf{87.0} & \textbf{87.1} & 87.1          \\\cmidrule{2-2}\cmidrule(l){3-7}\cmidrule(l){8-12}
 & 2 & \textbf{85.3} & \textbf{85.5} & \textbf{85.7} & \textbf{85.8} & 85.8          & 86.5          & 86.8          & 86.9          & \textbf{87.1} & 87.1          \\
 & 4 & \textbf{85.3} & \textbf{85.5} & \textbf{85.7} & \textbf{85.8} & 85.8          & \textbf{86.6} & \textbf{86.9} & \textbf{87.0} & \textbf{87.1} & \textbf{87.2} \\
 & 8 & \textbf{85.3} & \textbf{85.5} & \textbf{85.7} & \textbf{85.8} & \textbf{85.9} & 86.5          & 86.8          & \textbf{87.0} & \textbf{87.1} & \textbf{87.2}\\
\midrule
                                &   & \multicolumn{5}{c}{SAMSum}  & \multicolumn{5}{c}{XSum}  \\\cmidrule{1-2}\cmidrule(l){3-7}\cmidrule(l){8-12}
                            \multicolumn{2}{c}{Num. of Samples} & 4    & 8    & 16   & 32   & 64   & 4    & 8    & 16   & 32   & 64   \\\cmidrule{1-2}\cmidrule(l){3-7}\cmidrule(l){8-12}
\multirow{4}{*}{Num. of Models}  & 1 & 27.6          & 28.7          & \textbf{29.2} & 29.3          & \textbf{29.7} & \textbf{53.8} & \textbf{54.0} & \textbf{54.2} & \textbf{54.2} & \textbf{54.4} \\\cmidrule{2-2}\cmidrule(l){3-7}\cmidrule(l){8-12}
 & 2 & \textbf{27.8} & \textbf{28.9} & \textbf{29.2} & \textbf{29.4} & \textbf{29.7} & \textbf{53.8} & 53.9          & 54.1          & \textbf{54.2} & \textbf{54.4} \\
 & 4 & \textbf{27.8} & \textbf{28.9} & \textbf{29.2} & \textbf{29.4} & \textbf{29.7} & \textbf{53.8} & \textbf{54.0} & \textbf{54.2} & \textbf{54.2} & \textbf{54.4} \\
 & 8 & \textbf{27.8} & \textbf{28.9} & \textbf{29.2} & \textbf{29.4} & \textbf{29.7} & \textbf{53.8} & \textbf{54.0} & \textbf{54.2} & \textbf{54.2} & \textbf{54.4} \\
\midrule
                                &   & \multicolumn{5}{c}{MSCOCO}  & \multicolumn{5}{c}{NoCaps}  \\
                                \cmidrule{1-2}\cmidrule(l){3-7}\cmidrule(l){8-12}
                            \multicolumn{2}{c}{Num. of Samples} & 4    & 8    & 16   & 32   & 64   & 4    & 8    & 16   & 32   & 64   \\\cmidrule{1-2}\cmidrule(l){3-7}\cmidrule(l){8-12}
\multirow{4}{*}{Num. of Models}  & 1 & 55.4          & 55.7          & \textbf{55.9} & \textbf{56.1} & \textbf{56.3} & \textbf{48.2} & \textbf{48.8} & 49.4          & \textbf{49.9} & 50.2          \\\cmidrule{2-2}\cmidrule(l){3-7}\cmidrule(l){8-12}
 & 2 & 55.4          & \textbf{55.8} & \textbf{55.9} & \textbf{56.1} & \textbf{56.3} & \textbf{48.2} & \textbf{48.8} & 49.4          & \textbf{49.9} & \textbf{50.3} \\
 & 4 & \textbf{55.5} & 55.7          & \textbf{55.9} & \textbf{56.1} & \textbf{56.3} & \textbf{48.2} & \textbf{48.8} & \textbf{49.5} & \textbf{49.9} & 50.2          \\
 & 8 & \textbf{55.5} & 55.7          & \textbf{55.9} & \textbf{56.1} & \textbf{56.3} & \textbf{48.2} & \textbf{48.8} & \textbf{49.5} & \textbf{49.9} & 50.2      \\
\bottomrule
\end{tabular}
}
\caption{Results of MAMBR with beam decoding. Notations are the same as Table \ref{tab:mambr_ancestral_full}.}
\label{tab:mambr_beam_full}
\end{table*}

\section{Detailed Results of MAMBR}
\label{appendix:bias-diversity-mambr}

\subsection{Computational Costs}

The computational costs of MAMBR are proportional to the number of models used. Tables \ref{tab:mambr_time} and \ref{tab:mambr_memory} show the computational cost of running the settings corresponding to Table \ref{tab:mambr_ancestral_full}. We used one NVIDIA RTX A6000 for the measurement.

\subsection{Results on other Sampling Strategies}

Tables \ref{tab:mambr_epsilon_full} and \ref{tab:mambr_beam_full} show the results of MAMBR with epsilon sampling and beam decoding, respectively. In the following discussion, we also consider Table \ref{tab:mambr_ancestral_full}. In ancestral and epsilon sampling, the best and moderately diversified sampling strategies (as shown in Figure \ref{fig:bias_diversity:full}), we observe performance improvement as the number of models increases. On the other hand, in the lowest diversity method, beam decoding, performance improvement is limited. These results suggest that MAMBR can improve performance by enhancing the diversity of evaluation metrics, although the diversity of the sampling strategy itself remains important.

\subsection{Bias and Diversity of MAMBR}
Figures \ref{fig:mambr:bias_diversity:full:ancestral}, \ref{fig:mambr:bias_diversity:full:epsilon}, and \ref{fig:mambr:bias_diversity:full:beam} show the bias and diversity corresponding to the results in Tables \ref{tab:mambr_ancestral_full}, \ref{tab:mambr_epsilon_full}, and \ref{tab:mambr_beam_full}, respectively. The results show that MAMBR actually increases the diversity in WMT19 En-De and En-Ru and SAMSum but not in the other datasets. Thus, this improvement depends on the datasets. On the other hand, we can see the improvement of bias in some cases. This is reasonable because using multiple metric models itself is an ensembling approach and can contribute to performance improvement.

\section{Experimental Results on the First 1000 Examples}

To consider more detailed configurations and reveal the possibility of a more efficient investigation, we conducted an additional evaluation using only the first 1000 examples for each dataset based on the setting of \citet{pmlr-v235-jinnai24a}.

\subsection{Hypotheses generated by different sampling strategies}
\label{app:results:hyp:sampling}

Figures \ref{fig:bias_diversity:1000:beam} to \ref{fig:bias_diversity:1000:epsilon} present the bias and diversity decomposition plots for different hypothesis generation strategies. The results indicate that differences in the generated hypotheses influence performance in some cases, whereas the overall tendencies of the sampling strategy used for generating pseudo-references remain similar despite these variations.

\subsection{MAMBR}

Tables \ref{tab:mambr_ancestral_sampling:1000} to \ref{tab:mambr_beam_decoding:1000} show the MAMBR results for the first 1000 lines. From these results, we observe a similar trend to those obtained when the dataset is fully used, as described in \S\ref{subsec:experiments:mambr}. Similarly, Figures \ref{fig:bias_diversity:1000:mambr:ancestral} to \ref{fig:bias_diversity:1000:mambr:beam} demonstrate that the results are nearly identical to those obtained when the dataset is fully utilized.

\section{Reproducibility Statement}

We performed our experiments by running publicly available models, facebook/wmt19-en-de (Apache license 2.0), facebook/wmt19-en-ru (Apache license 2.0), philschmid/bart-large-cnn-samsum (MIT License), facebook/bart-large-xsum (MIT License), Salesforce/blip2-flan-t5-xl-coco (MIT License), and Salesforce/blip2-flan-t5-xl (MIT License) in HuggingFace Transformers \citep{wolf-etal-2020-transformers} on the publicly available datasets, WMT19 English to German \citep{barrault-etal-2019-findings}, WMT19 English to Russian \citep{barrault-etal-2019-findings}, SAMSum \citep{gliwa-etal-2019-samsum}, XSum \citep{narayan-etal-2018-dont}, MSCOCO \citep{Lin2014MicrosoftCC,karpathy2015deep}, and NoCaps \citep{agrawal2019nocaps}, respectively with utilizing the publicly available MBR decoding toolkit, \texttt{mbrs} \citep{deguchi-2024-mbrs} as described in  \S\ref{subsec:overall:settings}.

\begin{figure*}
    \centering
    \includegraphics[width=0.975\textwidth]{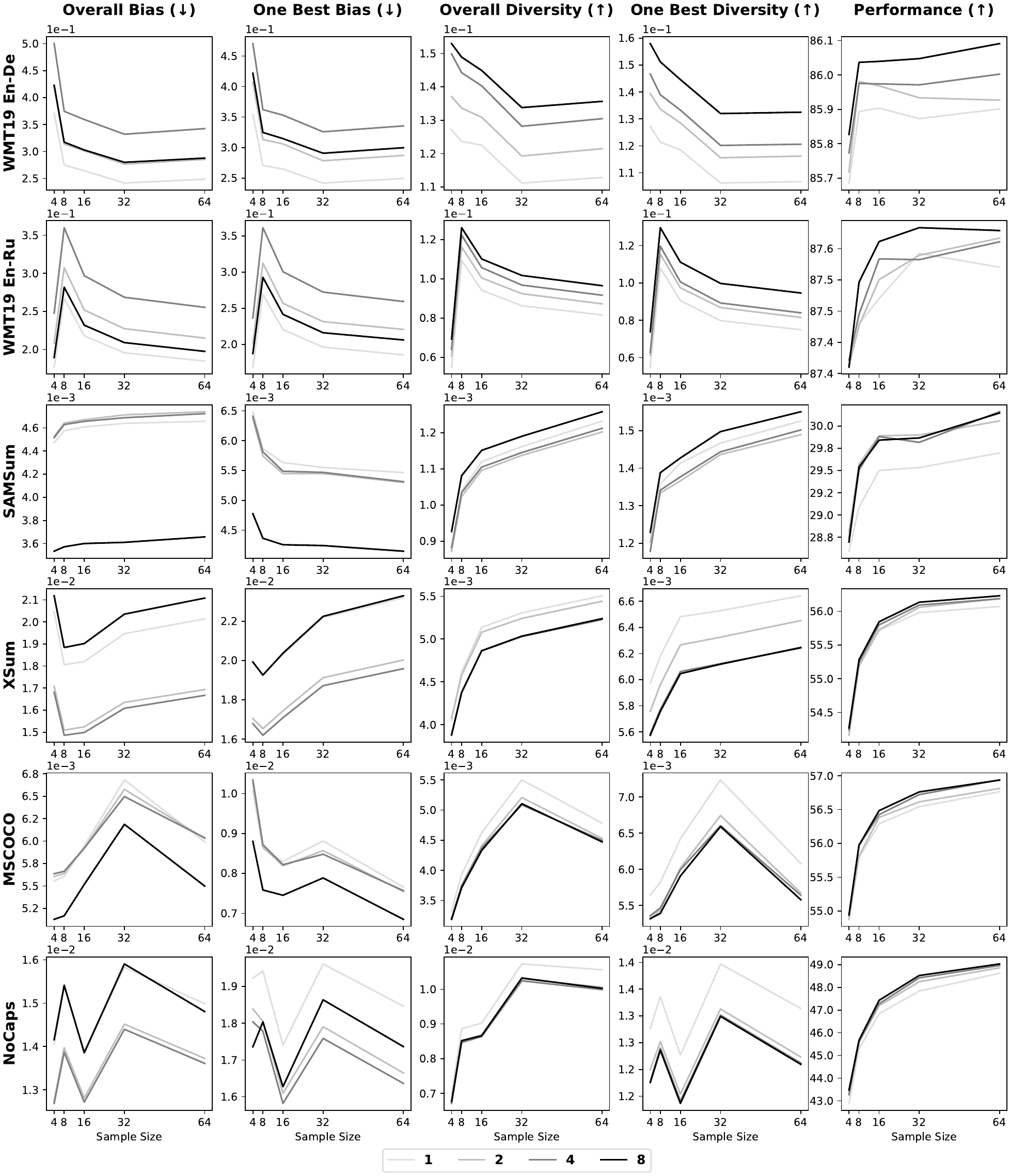}
    \caption{The relationship between bias, diversity, and performance in MAMBR decoding with pseudo-references generated by ancestral sampling. The lines indicate the score for each number of metric models used. Other notations are the same as Figure \ref{fig:bias_diversity:full}.}
    \label{fig:mambr:bias_diversity:full:ancestral}
\end{figure*}
\begin{figure*}
    \centering
    \includegraphics[width=0.975\textwidth]{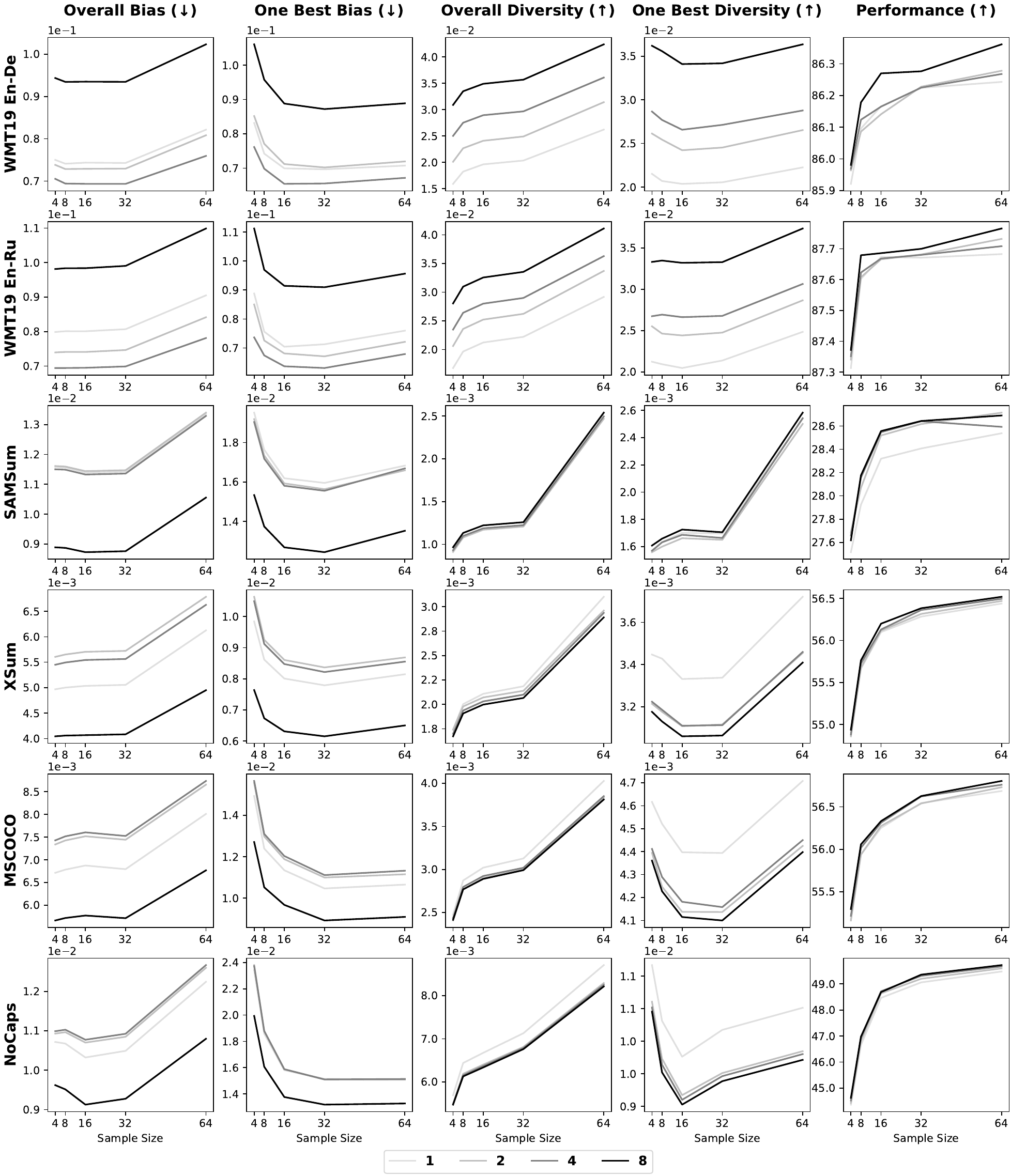}
    \caption{The relationship between bias, diversity, and performance in MAMBR decoding with pseudo-references generated by epsilon sampling. The notations are the same as Figure \ref{fig:mambr:bias_diversity:full:ancestral}.}
    \label{fig:mambr:bias_diversity:full:epsilon}
\end{figure*}
\begin{figure*}
    \centering
    \includegraphics[width=0.975\textwidth]{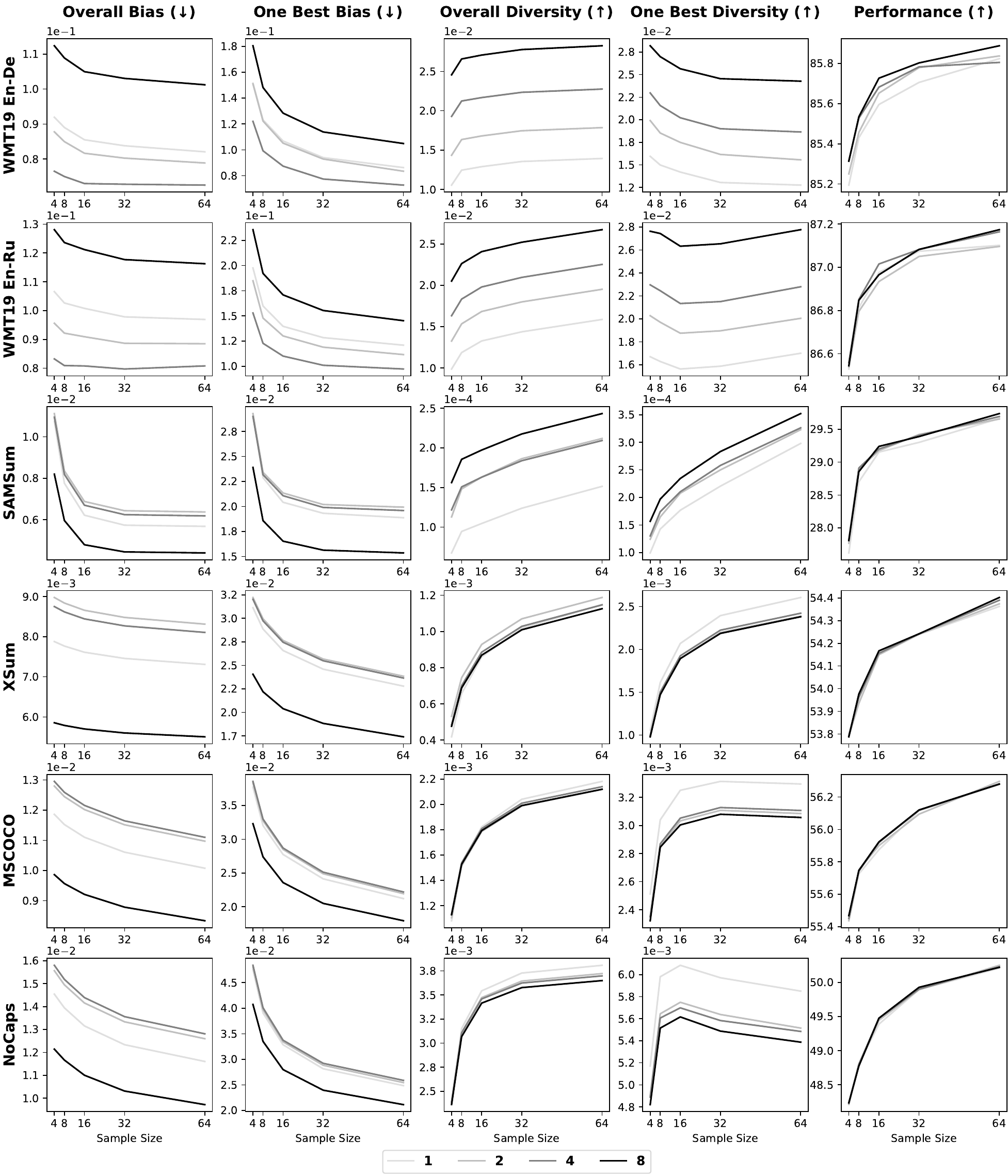}
    \caption{The relationship between bias, diversity, and performance in MAMBR decoding with pseudo-references generated by beam decoding. The notations are the same as Figure \ref{fig:mambr:bias_diversity:full:ancestral}.}
    \label{fig:mambr:bias_diversity:full:beam}
\end{figure*}

\begin{figure*}[t]
    \centering
    \includegraphics[width=0.975\textwidth]{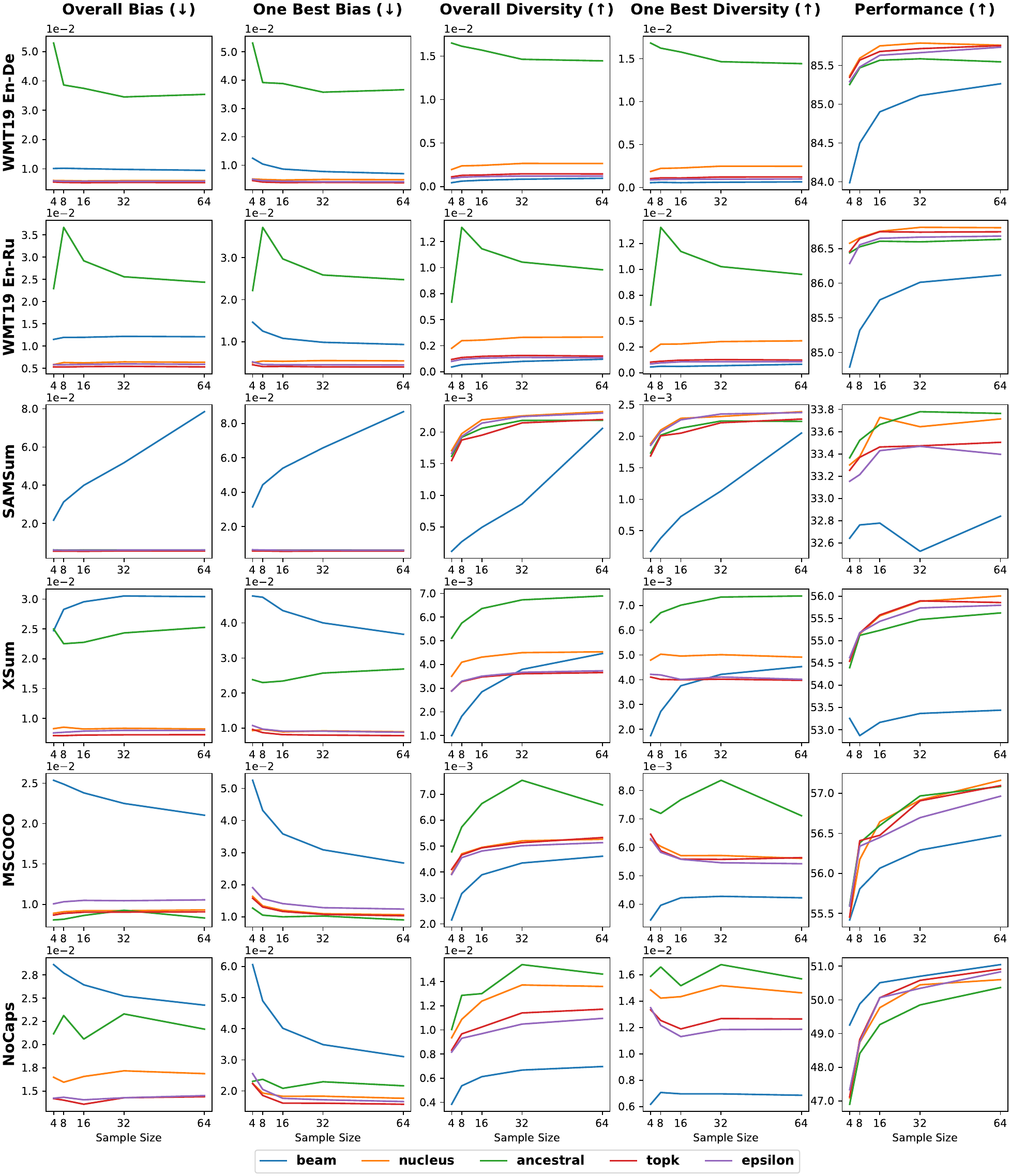}
    \caption{The relationship between bias, diversity, and performance on the first 1000 lines of each dataset in MBR decoding with hypotheses generated by beam decoding. The notations are the same as Figure \ref{fig:bias_diversity:full}.}
    \label{fig:bias_diversity:1000:beam}
\end{figure*}

\begin{figure*}[t]
    \centering
    \includegraphics[width=0.975\textwidth]{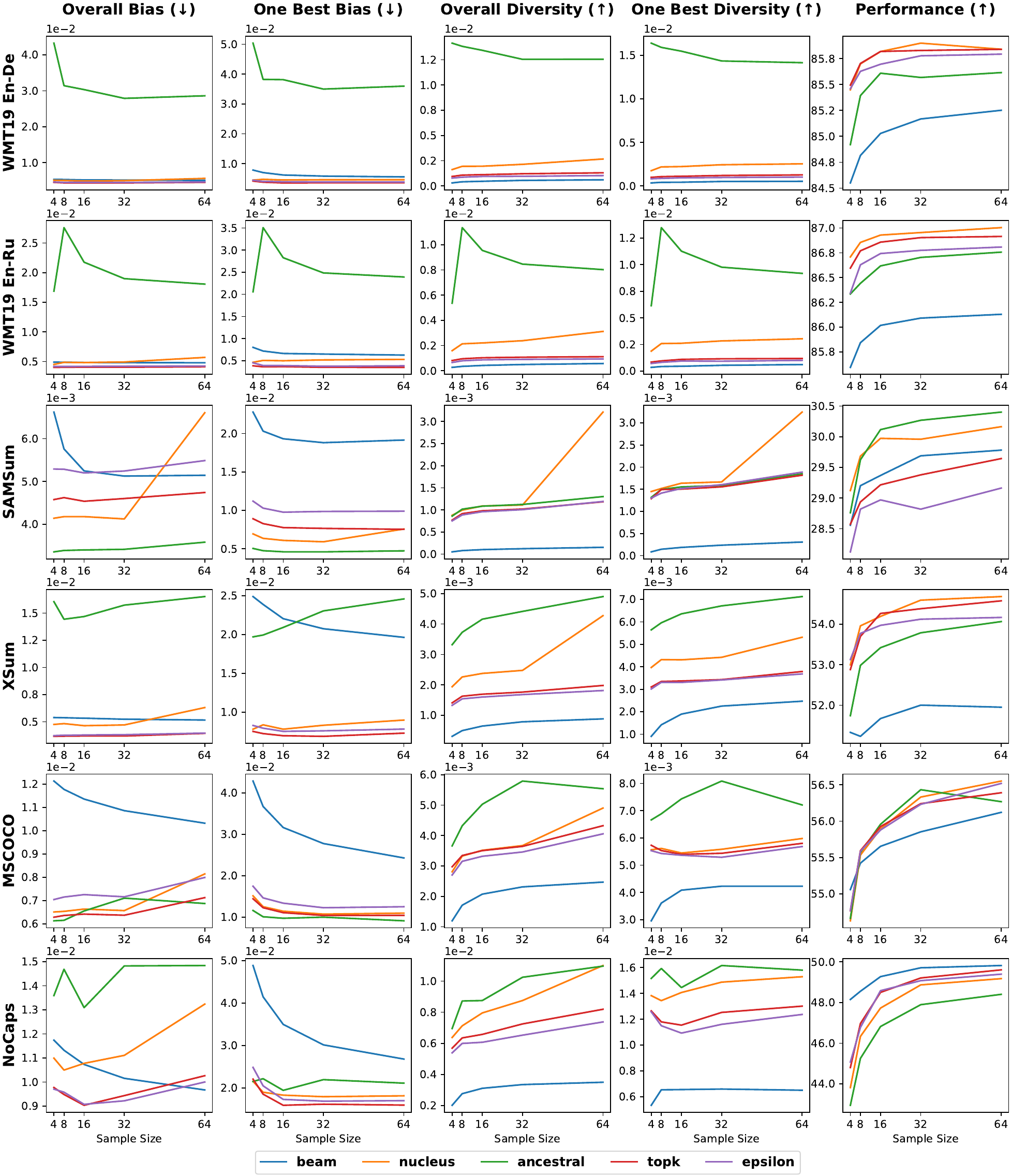}
    \caption{The relationship between bias, diversity, and performance on the first 1000 lines of each dataset in MBR decoding with hypotheses generated by nucleus sampling. The notations are the same as Figure \ref{fig:bias_diversity:full}.}
    \label{fig:bias_diversity:1000:nucleus}
\end{figure*}

\begin{figure*}[t]
    \centering
    \includegraphics[width=0.975\textwidth]{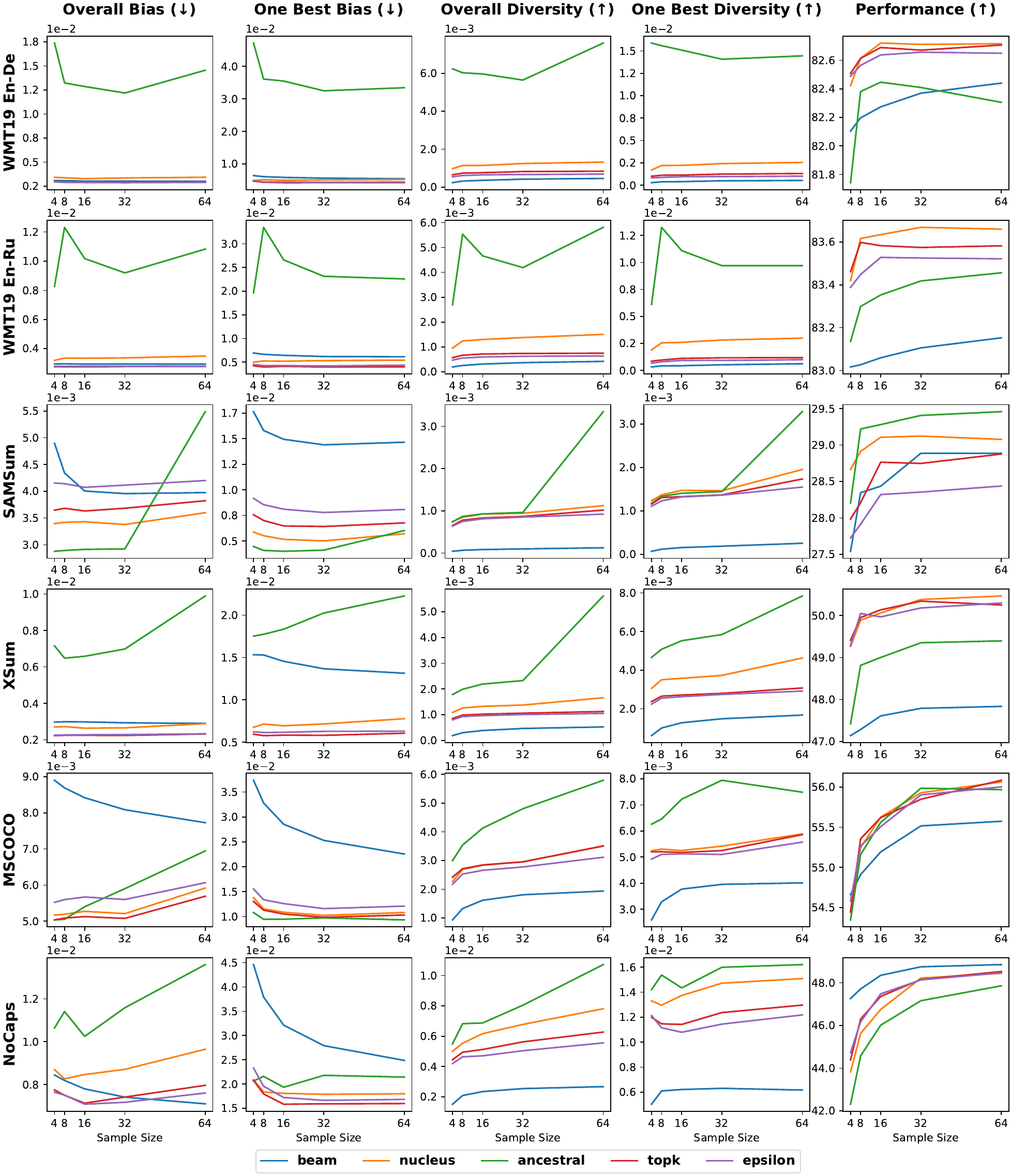}
    \caption{The relationship between bias, diversity, and performance on the first 1000 lines of each dataset in MBR decoding with hypotheses generated by ancestral sampling. The notations are the same as Figure \ref{fig:bias_diversity:full}.}
    \label{fig:bias_diversity:1000:ancestral}
\end{figure*}

\begin{figure*}[t]
    \centering
    \includegraphics[width=0.975\textwidth]{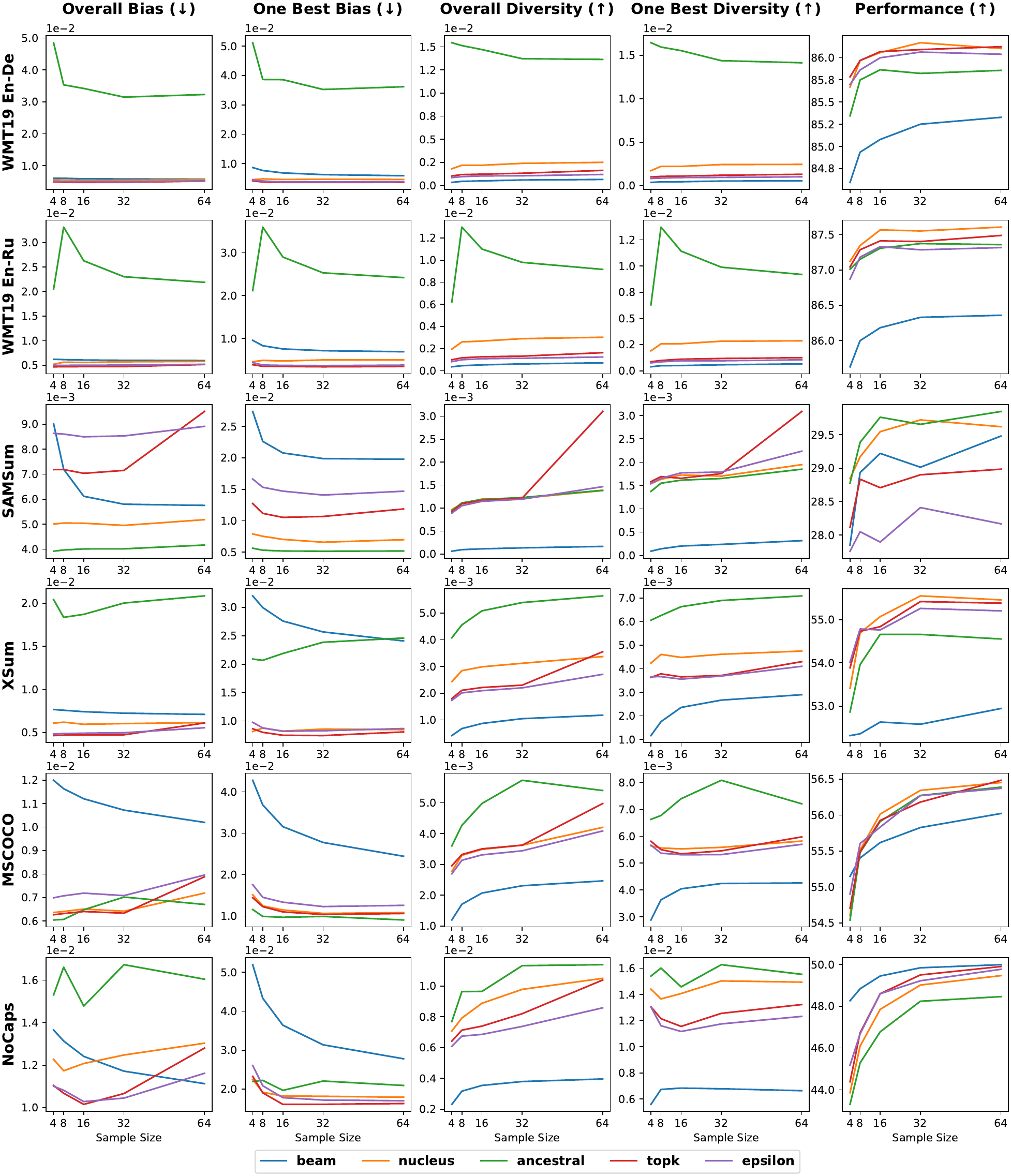}
    \caption{The relationship between bias, diversity, and performance on the first 1000 lines of each dataset in MBR decoding with hypotheses generated by top-k sampling. The notations are the same as Figure \ref{fig:bias_diversity:full}.}
    \label{fig:bias_diversity:1000:topk}
\end{figure*}

\begin{figure*}[t]
    \centering
    \includegraphics[width=0.975\textwidth]{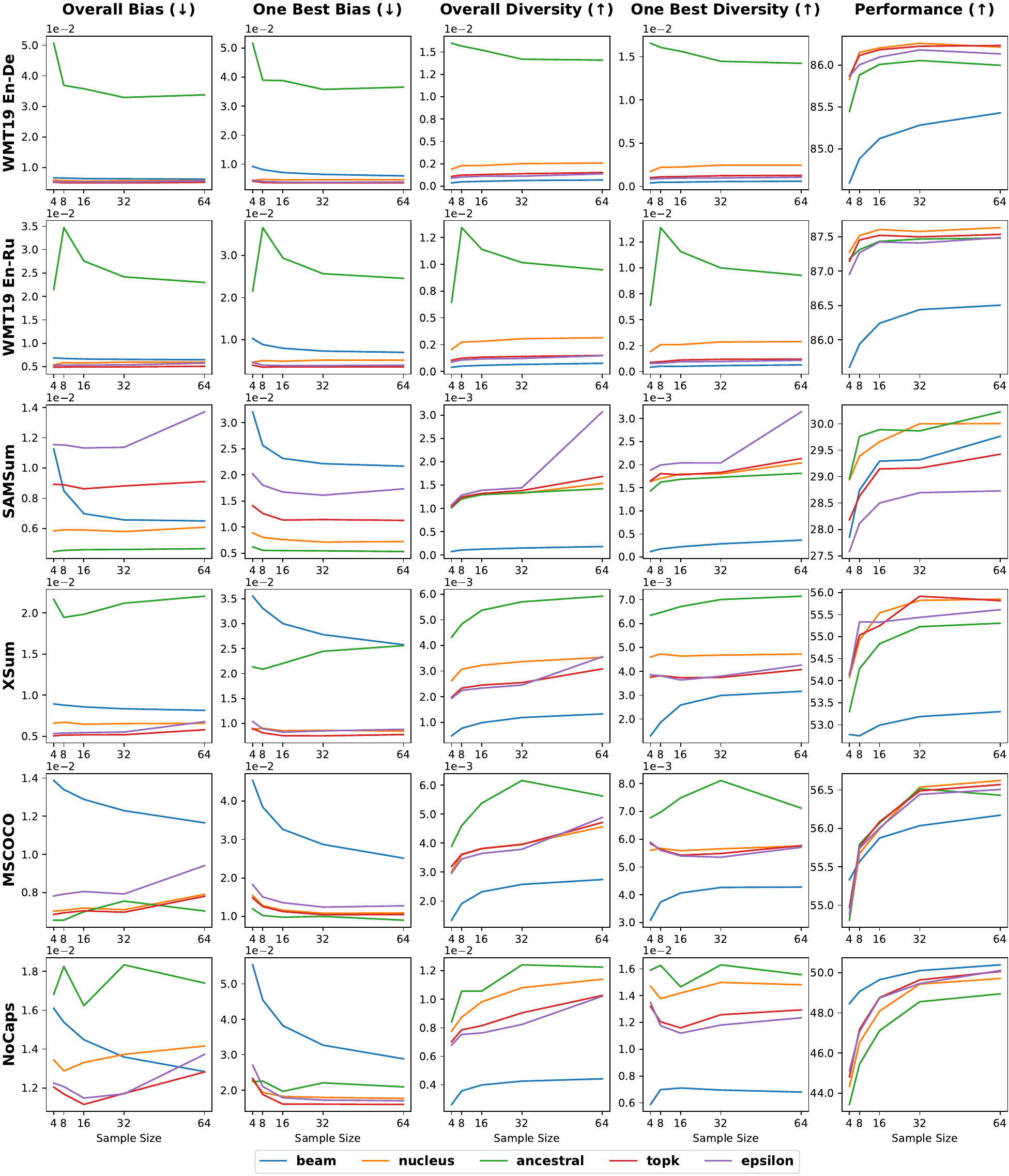}
    \caption{The relationship between bias, diversity, and performance on the first 1000 lines of each dataset in MBR decoding with hypotheses generated by epsilon sampling. The notations are the same as Figure \ref{fig:bias_diversity:full}.}
    \label{fig:bias_diversity:1000:epsilon}
\end{figure*}

\begin{table}[t]
\small
\centering
\resizebox{0.88\textwidth}{!}{
\begin{tabular}{llcccccccccc}
\toprule
                                &   & \multicolumn{5}{c}{WMT19 En-De}  & \multicolumn{5}{c}{WMT19 En-Ru}  \\
                                \cmidrule{1-2}\cmidrule(l){3-7}\cmidrule(l){8-12}
                            \multicolumn{2}{c}{Num. of Samples} & 4    & 8    & 16   & 32   & 64   & 4    & 8    & 16   & 32   & 64   \\\cmidrule{1-2}\cmidrule(l){3-7}\cmidrule(l){8-12}
\multirow{4}{*}{Num. of Models} & 1 & 84.5 & 84.7 & 84.8 & 85.0 & 85.1 & \textbf{85.8} & \textbf{86.1} & 86.3 & \textbf{86.5} & 86.5 \\\cmidrule{2-2}\cmidrule(l){3-7}\cmidrule(l){8-12}
    & 2 & 84.5 & \textbf{84.8} & \textbf{85.0} & \textbf{85.1} & 85.2 & \textbf{85.8} & \textbf{86.1} & 86.3 & 86.4 & 86.5 \\
    & 4 & 84.6 & \textbf{84.8} & \textbf{85.0} & \textbf{85.1} & 85.2 & \textbf{85.8} & \textbf{86.1} & \textbf{86.4} & \textbf{86.5} & \textbf{86.6} \\
    & 8 & \textbf{84.7} & \textbf{84.8} & \textbf{85.0} & \textbf{85.1} & \textbf{85.3} & \textbf{85.8} & \textbf{86.1} & 86.3 & \textbf{86.5} & \textbf{86.6} \\
\midrule
                                &   & \multicolumn{5}{c}{SAMSum}  & \multicolumn{5}{c}{XSum}  \\
                                \cmidrule{1-2}\cmidrule(l){3-7}\cmidrule(l){8-12}
                            \multicolumn{2}{c}{Num. of Samples} & 4    & 8    & 16   & 32   & 64   & 4    & 8    & 16   & 32   & 64   \\\cmidrule{1-2}\cmidrule(l){3-7}\cmidrule(l){8-12}
\multirow{4}{*}{Num. of Models} & 1 & 28.6 & 29.1 & 29.5 & 29.5 & 29.7 & 53.3 & 54.1 & \textbf{55.0} & 55.1 & 55.2 \\\cmidrule{2-2}\cmidrule(l){3-7}\cmidrule(l){8-12}
    & 2 & \textbf{28.8} & \textbf{29.6} & \textbf{29.9} & \textbf{29.9} & 30.1 & 53.2 & 54.2 & \textbf{55.0} & \textbf{55.3} & \textbf{55.3} \\
    & 4 & 28.7 & 29.5 & \textbf{29.9} & 29.8 & \textbf{30.2} & 53.3 & 54.2 & \textbf{55.0} & \textbf{55.3} & \textbf{55.3} \\
    & 8 & 28.7 & 29.5 & 29.8 & \textbf{29.9} & 30.1 & \textbf{53.4} & \textbf{54.3} & \textbf{55.0} & 55.2 & \textbf{55.3} \\
\midrule
                                &   & \multicolumn{5}{c}{MSCOCO}  & \multicolumn{5}{c}{NoCaps}  \\
                                \cmidrule{1-2}\cmidrule(l){3-7}\cmidrule(l){8-12}
                            \multicolumn{2}{c}{Num. of Samples} & 4    & 8    & 16   & 32   & 64   & 4    & 8    & 16   & 32   & 64   \\\cmidrule{1-2}\cmidrule(l){3-7}\cmidrule(l){8-12}
\multirow{4}{*}{Num. of Models} & 1 & 54.8 & 55.8 & 56.1 & 56.4 & 56.6 & 43.2 & 45.3 & 46.9 & 48.2 & 49.0 \\\cmidrule{2-2}\cmidrule(l){3-7}\cmidrule(l){8-12}
    & 2 & 54.8 & 55.8 & 56.2 & 56.4 & 56.6 & 43.4 & 45.7 & 47.2 & 48.7 & 49.1 \\
    & 4 & 54.8 & \textbf{55.9} & \textbf{56.4} & 56.5 & \textbf{56.8} & 43.8 & 45.8 & \textbf{47.5} & 48.8 & \textbf{49.5} \\
    & 8 & \textbf{54.9} & \textbf{55.9} & 56.3 & \textbf{56.6} & \textbf{56.8} & \textbf{43.9} & \textbf{45.9} & 47.4 & \textbf{49.0} & \textbf{49.5} \\
\bottomrule
\end{tabular}
}
\caption{Results of MAMBR with samples generated by ancestral sampling. The notations are the same as Table \ref{tab:mambr_ancestral_full}.\label{tab:mambr_ancestral_sampling:1000}}
\end{table}

\begin{table}[t]
\small
\centering
\resizebox{0.88\textwidth}{!}{
\begin{tabular}{llcccccccccc}
\toprule
                                &   & \multicolumn{5}{c}{WMT19 En-De}  & \multicolumn{5}{c}{WMT19 En-Ru}  \\
                                \cmidrule{1-2}\cmidrule(l){3-7}\cmidrule(l){8-12}
                            \multicolumn{2}{c}{Num. of Samples} & 4    & 8    & 16   & 32   & 64   & 4    & 8    & 16   & 32   & 64   \\
                                \cmidrule{1-2}\cmidrule(l){3-7}\cmidrule(l){8-12}
\multirow{4}{*}{Num. of Models} & 1 & 85.2 & 85.4 & 85.6 & 85.6 & 85.6 & \textbf{86.7} & 87.0 & \textbf{87.1} & 87.0   & 87.1 \\
\cmidrule{2-2}\cmidrule(l){3-7}\cmidrule(l){8-12}
                                & 2 & \textbf{85.3} & 85.5 & 85.6 & \textbf{85.7} & 85.7 & \textbf{86.7} & 87.0 & \textbf{87.1} & \textbf{87.1} & 87.1 \\
                                & 4 & \textbf{85.3} & 85.5 & 85.6 & \textbf{85.7} & 85.7 & \textbf{86.7} & 87.0   & \textbf{87.1} & 87.0 & 87.1 \\
                                & 8 & \textbf{85.3} & \textbf{85.6} & \textbf{85.7} & \textbf{85.7} & \textbf{85.8} & \textbf{86.7} & \textbf{87.1} & \textbf{87.1} & 87.0 & \textbf{87.2} \\
\midrule
                                &   & \multicolumn{5}{c}{SAMSum}  & \multicolumn{5}{c}{XSum}  \\
                                \cmidrule{1-2}\cmidrule(lr){3-7}\cmidrule(lr){8-12}
                            \multicolumn{2}{c}{Num. of Samples} & 4    & 8    & 16   & 32   & 64   & 4    & 8    & 16   & 32   & 64   \\\cmidrule{1-2}\cmidrule(lr){3-7}\cmidrule(lr){8-12}
\multirow{4}{*}{Num. of Models} & 1 & 27.5 & 27.9 & 28.3 & 28.4 & 28.5 & 54.1 & 55.1 & 55.1 & 55.4 & 55.5 \\ \cmidrule{2-2}\cmidrule(l){3-7}\cmidrule(l){8-12}
                                & 2 & \textbf{27.7} & 28.1 & 28.5 & \textbf{28.6} & \textbf{28.7} & 54.1 & \textbf{55.2} & 55.1 & 55.4 & 55.4 \\
                                & 4 & \textbf{27.7} & \textbf{28.2} & 28.5 & \textbf{28.6} & \textbf{28.6} & 54.1 & \textbf{55.2} & 55.2 & \textbf{55.5} & \textbf{55.6} \\
                                & 8 & 27.6 & \textbf{28.2} & \textbf{28.6} & \textbf{28.6} & \textbf{28.7} & \textbf{54.2} & \textbf{55.2} & \textbf{55.3} & 55.4 & \textbf{55.6} \\
\midrule
                                &   & \multicolumn{5}{c}{MSCOCO}  & \multicolumn{5}{c}{NoCaps}  \\
                                \cmidrule{1-2}\cmidrule(lr){3-7}\cmidrule(lr){8-12}
                            \multicolumn{2}{c}{Num. of Samples} & 4    & 8    & 16   & 32   & 64   & 4    & 8    & 16   & 32   & 64   \\\cmidrule{1-2}\cmidrule(lr){3-7}\cmidrule(lr){8-12}
\multirow{4}{*}{Num. of Models} & 1 & 55.0 & 55.8 & 56.0 & 56.4 & 56.6 & 45.0 & 46.9 & 48.6 & 49.5 & 49.8 \\ \cmidrule{2-2}\cmidrule(l){3-7}\cmidrule(l){8-12}
                                & 2 & 55.0 & 55.7 & 56.1 & 56.4 & 56.6 & 45.0 & 47.1 & 48.9 & 49.5 & 50.1 \\
                                & 4 & 55.0 & \textbf{55.9} & \textbf{56.2} & \textbf{56.6} & 56.7 & \textbf{45.2} & \textbf{47.2} & 48.9 & \textbf{49.8} & 50.2 \\
                                & 8 & \textbf{55.1} & 55.8 & \textbf{56.2} & 56.5 & \textbf{56.8} & \textbf{45.2} & \textbf{47.2} & \textbf{49.0} & \textbf{49.8} & \textbf{50.3}\\
\bottomrule
\end{tabular}
}
\caption{Results of MAMBR with samples generated by epsilon sampling. The notations are the same as Table \ref{tab:mambr_ancestral_full}.\label{tab:mambr_epsilon_sampling:1000}}
\end{table}

\begin{table}[t]
\small
\centering
\resizebox{0.88\textwidth}{!}{
\begin{tabular}{llcccccccccc}
\toprule
                                &   & \multicolumn{5}{c}{WMT19 En-De}  & \multicolumn{5}{c}{WMT19 En-Ru}  \\
                                \cmidrule{1-2}\cmidrule(l){3-7}\cmidrule(l){8-12}
                            \multicolumn{2}{c}{Num. of Samples} & 4    & 8    & 16   & 32   & 64   & 4    & 8    & 16   & 32   & 64   \\\cmidrule{1-2}\cmidrule(l){3-7}\cmidrule(l){8-12}
\multirow{4}{*}{Num. of Models} & 1 & 85.1 & 85.3 & 85.4 & 85.4 & 85.3 & \textbf{86.7} & \textbf{86.9} & \textbf{86.9} & \textbf{86.9} & 86.9 \\\cmidrule{2-2}\cmidrule(l){3-7}\cmidrule(l){8-12}
& 2 & 85.1 & 85.4 & \textbf{85.5} & 85.4 & 85.4 & \textbf{86.7} & 86.8 & \textbf{86.9} & \textbf{86.9} & \textbf{87.0} \\
& 4 & \textbf{85.2} & 85.4 & \textbf{85.5} & 85.4 & 85.4 & \textbf{86.7} & 86.8 & \textbf{86.9} & \textbf{86.9} & 86.9 \\
& 8 & \textbf{85.2} & \textbf{85.5} & \textbf{85.5} & \textbf{85.5} & \textbf{85.5} & 86.6 & \textbf{86.9} & \textbf{86.9} & \textbf{86.9} & 86.9\\
\midrule
                                &   & \multicolumn{5}{c}{SAMSum}  & \multicolumn{5}{c}{XSum}  \\\cmidrule{1-2}\cmidrule(l){3-7}\cmidrule(l){8-12}
                            \multicolumn{2}{c}{Num. of Samples} & 4    & 8    & 16   & 32   & 64   & 4    & 8    & 16   & 32   & 64   \\\cmidrule{1-2}\cmidrule(l){3-7}\cmidrule(l){8-12}
\multirow{4}{*}{Num. of Models} & 1 & 27.6 & 28.7 & \textbf{29.2} & 29.3 & \textbf{29.7} & \textbf{52.8} & 52.8 & \textbf{53.1} & \textbf{53.2} & 53.3 \\\cmidrule{2-2}\cmidrule(l){3-7}\cmidrule(l){8-12}
    & 2 & \textbf{27.8} & \textbf{28.9} & \textbf{29.2} & \textbf{29.4} & \textbf{29.7} & 52.7 & \textbf{52.9} & 53.0 & \textbf{53.2} & \textbf{53.4} \\
    & 4 & \textbf{27.8} & \textbf{28.9} & \textbf{29.2} & \textbf{29.4} & \textbf{29.7} & \textbf{52.8} & 52.8 & 53.0 & \textbf{53.2} & 53.3 \\
    & 8 & \textbf{27.8} & \textbf{28.9} & \textbf{29.2} & \textbf{29.4} & \textbf{29.7} & 52.7 & \textbf{52.9} & \textbf{53.1} & \textbf{53.2} & 53.3 \\
\midrule
                                &   & \multicolumn{5}{c}{MSCOCO}  & \multicolumn{5}{c}{NoCaps}  \\
                                \cmidrule{1-2}\cmidrule(l){3-7}\cmidrule(l){8-12}
                            \multicolumn{2}{c}{Num. of Samples} & 4    & 8    & 16   & 32   & 64   & 4    & 8    & 16   & 32   & 64   \\\cmidrule{1-2}\cmidrule(l){3-7}\cmidrule(l){8-12}
\multirow{4}{*}{Num. of Models} & 1 & \textbf{55.3} & \textbf{55.5} & \textbf{55.9} & 56.0 & 56.2 & \textbf{48.5} & 49.0 & 49.6 & 50.1 & \textbf{50.5} \\\cmidrule{2-2}\cmidrule(l){3-7}\cmidrule(l){8-12}
    & 2 & \textbf{55.3} & \textbf{55.5} & 55.8 & \textbf{56.1} & 56.2 & \textbf{48.5} & 49.0 & \textbf{49.8} & 50.2 & \textbf{50.5} \\
    & 4 & \textbf{55.3} & \textbf{55.5} & 55.8 & 56.0 & \textbf{56.3} & \textbf{48.5} & 49.1 & 49.7 & 50.2 & \textbf{50.5} \\
    & 8 & \textbf{55.3} & \textbf{55.5} & 55.8 & \textbf{56.1} & 56.2 & \textbf{48.5} & \textbf{49.2} & 49.7 & \textbf{50.3} & \textbf{50.5} \\
\bottomrule
\end{tabular}
}
\caption{Results of MAMBR with samples generated by beam decoding. The notations are the same as Table \ref{tab:mambr_ancestral_full}.\label{tab:mambr_beam_decoding:1000}}
\end{table}

\begin{figure*}[t]
    \centering
    \includegraphics[width=\textwidth]{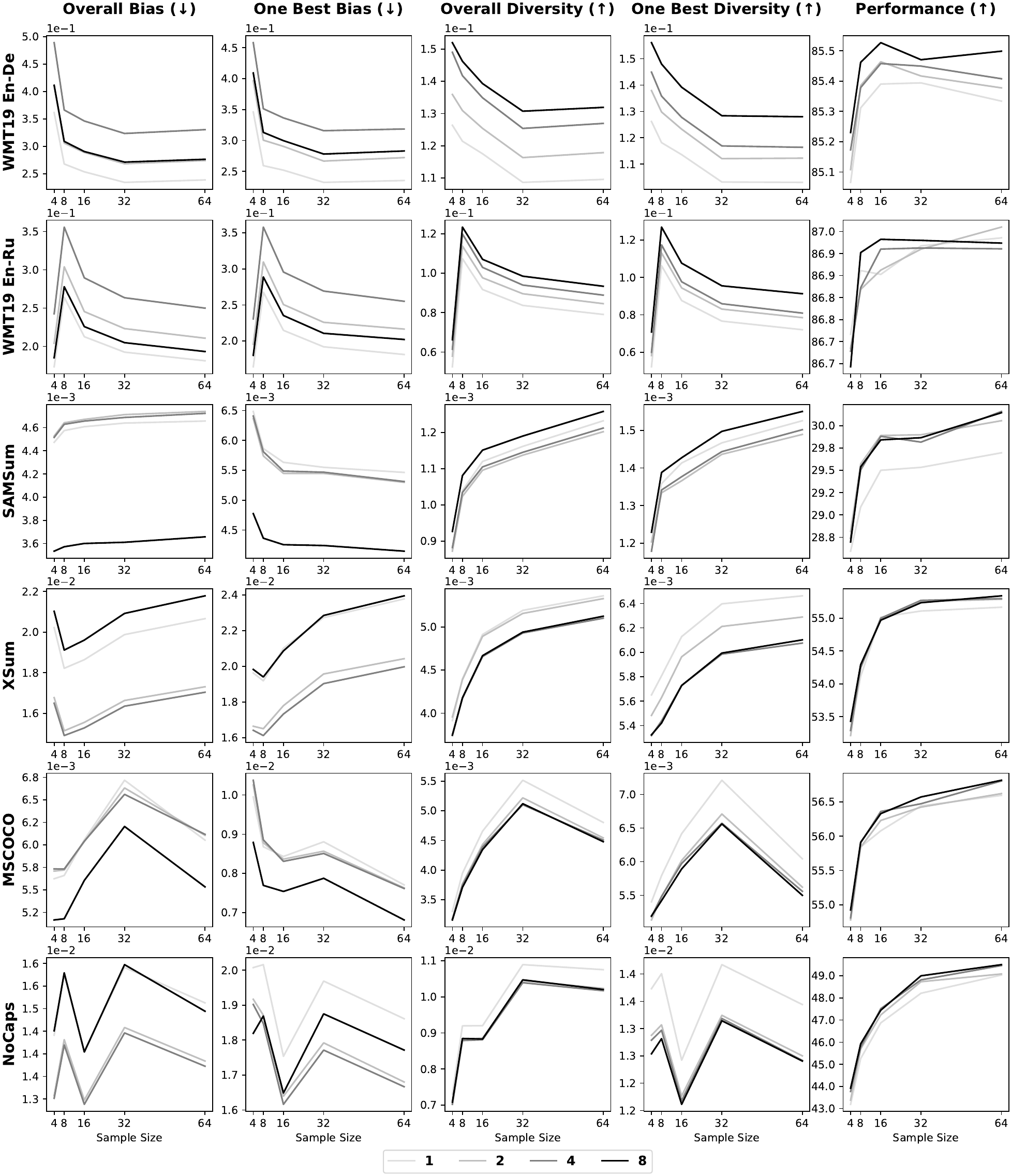}
    \caption{The relationship between bias, diversity, and performance on the first 1000 lines of each dataset in MBR decoding with pseudo-references generated by ancestral sampling. The notations are the same as Figure \ref{fig:mambr:bias_diversity:full:ancestral}.}
    \label{fig:bias_diversity:1000:mambr:ancestral}
\end{figure*}

\begin{figure*}[t]
    \centering
    \includegraphics[width=\textwidth]{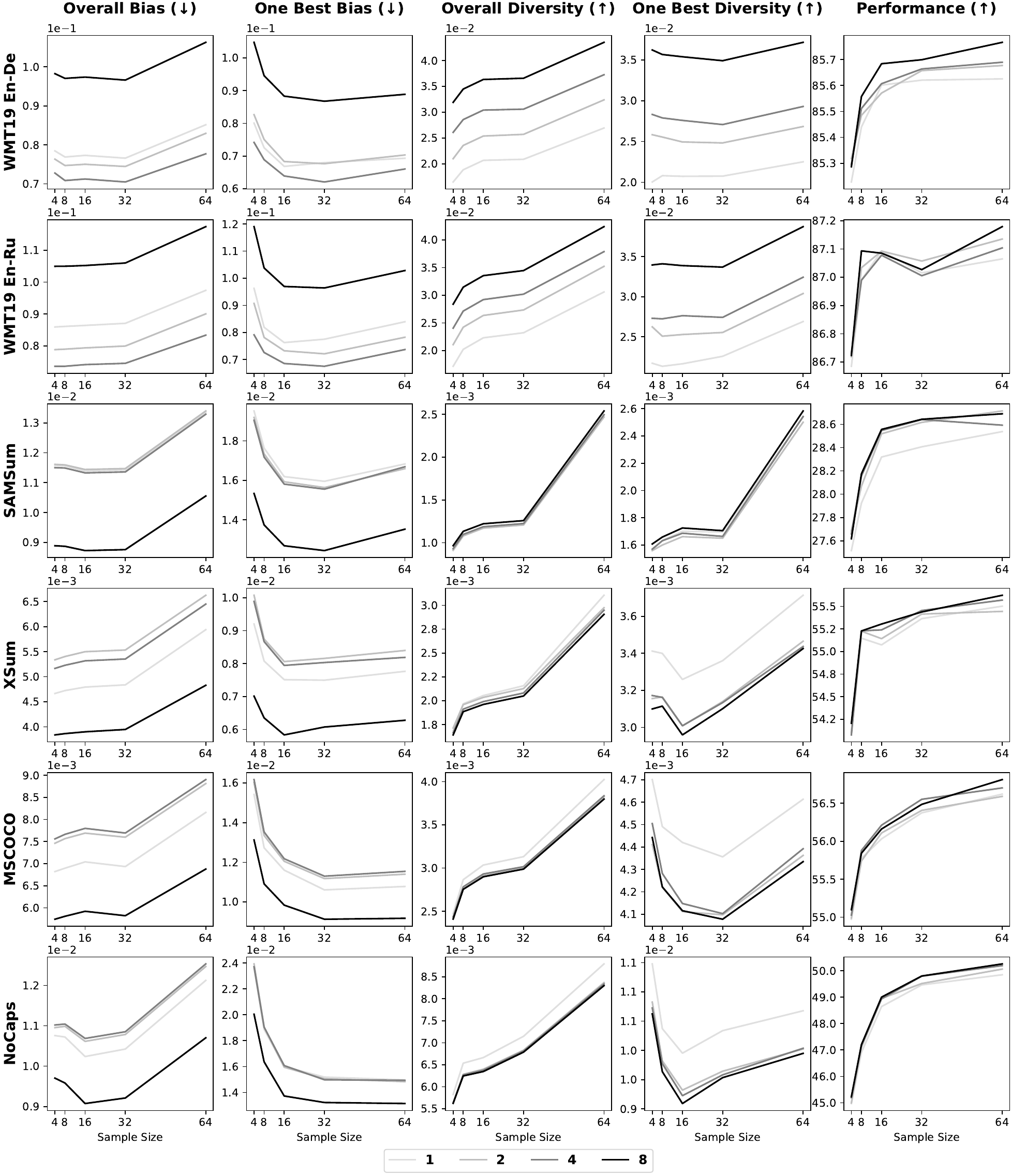}
    \caption{The relationship between bias, diversity, and performance on the first 1000 lines of each dataset in MBR decoding with pseudo-references generated by epsilon sampling. The notations are the same as Figure \ref{fig:mambr:bias_diversity:full:ancestral}.}
    \label{fig:bias_diversity:1000:mambr:epsilon}
\end{figure*}

\begin{figure*}[t]
    \centering
    \includegraphics[width=\textwidth]{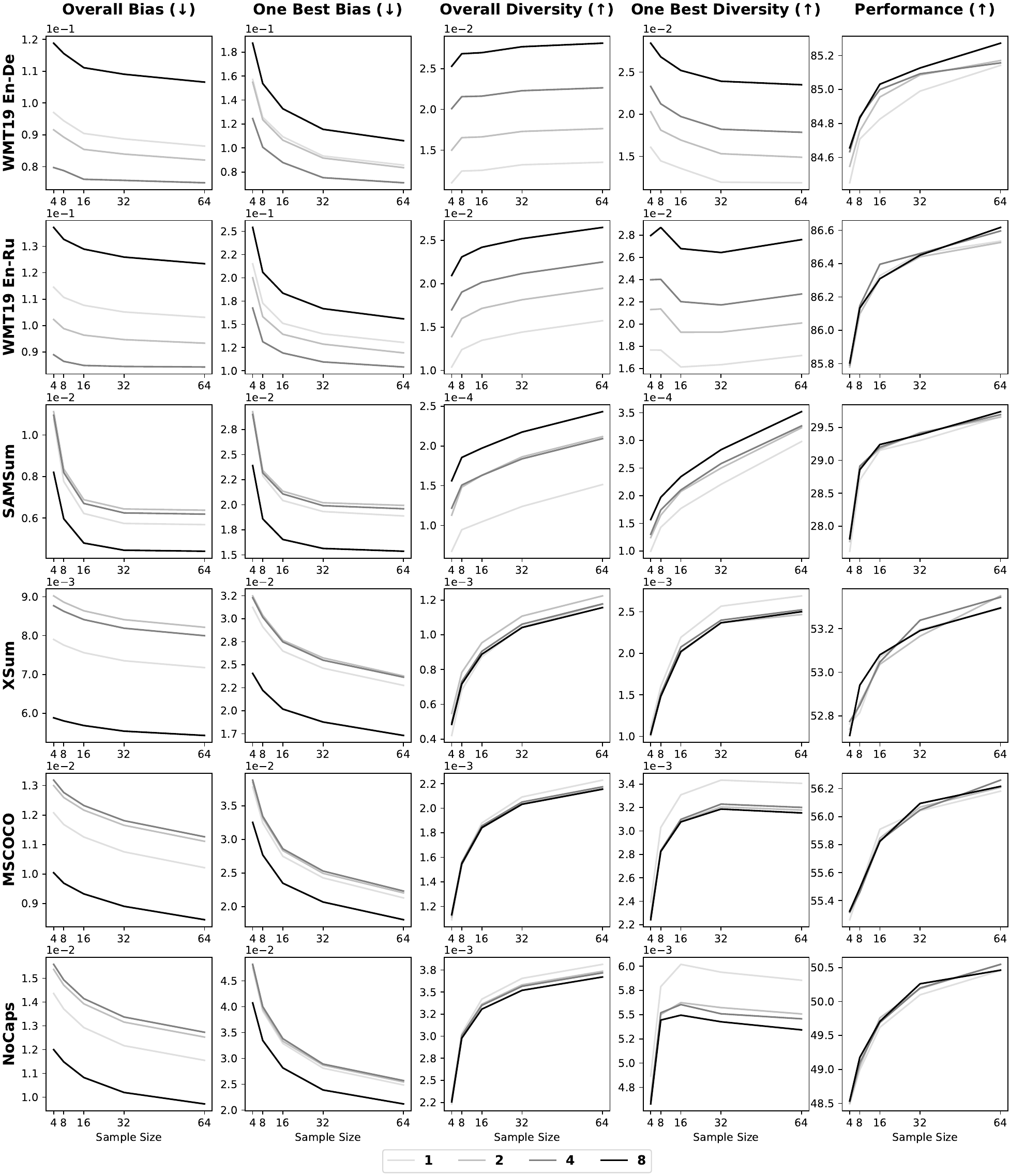}
    \caption{The relationship between bias, diversity, and performance on the first 1000 lines of each dataset in MBR decoding with pseudo-references generated by beam decoding. The notations are the same as Figure \ref{fig:mambr:bias_diversity:full:ancestral}.}
    \label{fig:bias_diversity:1000:mambr:beam}
\end{figure*}

\end{document}